\documentclass[twoside,11pt]{article}

%

\usepackage{jmlr2e}
\usepackage{jcdChange}
\usepackage{pslatex}

\newtheorem{remark}[theorem]{Remark}
\let\oldremark\remark
\renewcommand{\remark}{\oldremark\normalfont}



\jmlrheading{}{}{}{4/13; Revised 9/13}{-}{Divyanshu Vats and Robert D. Nowak}


\ShortHeadings{A Junction Tree Framework for Undirected Graphical Model Selection}{Vats and Nowak}
\firstpageno{1}

\usepackage{amssymb,pstricks}
\usepackage{graphicx}
\usepackage{cite}
\usepackage{enumerate}
\usepackage{paralist}
\usepackage{tikz}
\usetikzlibrary{shapes,decorations}
\usepgflibrary{shapes.geometric}
\usepackage{amsmath}
\usepackage{subfigure}
\usepackage{url}
\usepackage{multirow}
\usepackage{booktabs}
\usepackage{algorithmic}
\usepackage[ruled,vlined,boxed]{algorithm2e}
 \usepackage{epsfig}
 \usepackage{pst-grad} 
 \usepackage{arydshln}
\allowdisplaybreaks

\long\def\symbolfootnote[#1]#2{\begingroup%
 \def\thefootnote{\fnsymbol{footnote}}\footnote[#1]{#2}\endgroup}

\newcommand{\fPC}{\mathsf{PC}}

\newcommand\ind{\protect\mathpalette{\protect\independenT}{\perp}} \def\independenT#1#2{\mathrel{\rlap{$#1#2$}\mkern2mu{#1#2}}}

\newcommand{\JgL}{\mathsf{JgL}}
\newcommand{\gL}{\mathsf{gL}}
\newcommand{\JnL}{\mathsf{JnL}}
\newcommand{\nL}{\mathsf{nL}}

\newcommand{\JfPC}{\mathsf{JPC}}
\newcommand{\CHo}{\mathsf{CH_1}}
\newcommand{\CHt}{\mathsf{CH_2}}
\newcommand{\HBo}{\mathsf{HB_1}}
\newcommand{\HBt}{\mathsf{HB_2}}
\newcommand{\CYo}{\mathsf{CY_1}}
\newcommand{\CYt}{\mathsf{CY_2}}
\newcommand{\NBo}{\mathsf{NB_1}}
\newcommand{\NBt}{\mathsf{NB_2}}

\renewcommand\eqref[1]{(\ref{#1})}


\begin{document}

\title{ A Junction Tree Framework for \\
Undirected Graphical Model Selection}

\author{\name Divyanshu Vats \email dvats@rice.edu \\
       \addr Department of Electrical and Computer Engineering\\
       Rice University\\
       Houston, TX 77251, USA
       \AND
       \name Robert D. Nowak\email nowak@ece.wisc.edu \\
       \addr Department of Electrical and Computer Engineering \\
       University of Wisconsin - Madison\\
       Madison, WI 53706, USA}

\editor{Sebastian Nowozin}

\maketitle

\begin{abstract} %
An undirected graphical model is a joint probability distribution defined on an undirected graph $G^*$, where the vertices in the graph index a collection of random variables and the edges encode conditional independence relationships among random variables.  The undirected graphical model selection (UGMS) problem is to estimate the graph $G^*$ given observations drawn from the undirected graphical model.  This paper proposes a framework for decomposing the UGMS problem into multiple subproblems over clusters and subsets of the separators in a junction tree.  The junction tree is constructed using a graph that contains a superset of the edges in $G^*$.  We highlight three main properties of using junction trees for UGMS.  First, different regularization parameters or different UGMS algorithms can be used to learn different parts of the graph.  This is possible since the subproblems we identify can be solved independently of each other.  Second, under certain conditions, a junction tree based UGMS algorithm can produce consistent results with fewer observations than the usual requirements of existing algorithms.  Third, both our theoretical and experimental results show that the junction tree framework does a significantly better job at finding the weakest edges in a graph than existing methods.  This property is a consequence of both the first and second properties.  Finally, we note that our framework is independent of the choice of the UGMS algorithm and can be used as a wrapper around standard UGMS algorithms for more accurate graph estimation.
\end{abstract}

\begin{keywords}
Graphical models; Markov random fields; Junction trees;
model selection; graphical model selection; high-dimensional
statistics; graph decomposition.
\end{keywords}

\section{Introduction}
An undirected graphical model is a joint probability distribution $P_X$ of a random vector $X$ defined on an undirected graph $G^*$.  The graph $G^*$ consists of a set of vertices $V = \{1,\ldots,p\}$ and a set of edges $E(G^*) \subseteq V \times V$.  The vertices index the $p$ random variables in $X$ and the edges $E(G^*)$ characterize conditional independence relationships among the random variables in $X$ \citep{Lauritzen1996}.  We study undirected graphical models (also known as Markov random fields) so that the graph $G^*$ is undirected, i.e., if an edge $(i,j) \in E(G^*)$, then $(j,i) \in E(G^*)$.  The undirected graphical model selection (UGMS) problem is to estimate $G^*$ given $n$ observations $\Xf^n = \left(X^{(1)},\ldots,X^{(n)}\right)$ drawn from $P_X$.  This problem is of interest in many areas including biological data analysis, financial analysis, and social network analysis; see \citet{KollerFriedman2009} for some more examples.

\begin{quotation}
\noindent \textbf{This paper studies the following problem:} \textit{Given the observations $\Xf^n$ drawn from $P_X$ and a graph $H$ that contains all the true edges $E(G^*)$, and possibly some extra edges, estimate the graph $G^*$.  }
\end{quotation}

A natural question to ask is how can the graph $H$ be selected in the first place? One way of doing so is to use  screening algorithms, such as in \citet{fan2008sure} or in \citet{VatsMuG2012}, to eliminate edges that are clearly non-existent in $G^*$. Another method can be to use partial prior information about $X$ to remove unnecessary edges.  For example, this could be based on (i) prior knowledge about statistical properties of genes when analyzing gene expressions, (ii) prior knowledge about companies when analyzing stock returns, or (iii) demographic information when modeling social networks.  Yet another method can be to use clever model selection algorithms that estimate more edges than desired.  Assuming an initial graph $H$ has been computed, our main contribution in this paper is to show how a junction tree representation of $H$ can be used as a wrapper around UGMS algorithms for more accurate graph estimation.

\subsection{Overview of the Junction Tree Framework}
A junction tree is a tree-structured representation of an arbitrary graph \citep{Robertson1986}.  The vertices in a junction tree are clusters of vertices from the original graph.  An edge in a junction tree connects two clusters.  Junction trees are used in many applications to reduce the computational complexity of solving graph related problems \citep{ArnborgPro1989JT}.  Figure~\ref{fig:FirstExample}(c) shows an example of a junction tree for the graph in Figure~\ref{fig:FirstExample}(b).  Notice that each edge in the junction tree is labeled by the set of vertices common to both clusters connected by the edge.  These set of vertices are referred to as a \textit{separator}.

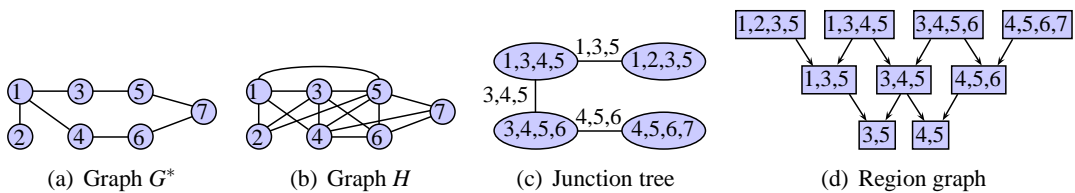
\begin{figure}
\centering
\subfigure[Graph $G^*$]{
\scalebox{0.5} 
{
\begin{pspicture}(0,-0.95)(5.58,0.95)
\definecolor{color1971b}{rgb}{0.8,0.8,1.0}
\pscircle[linewidth=0.04,dimen=outer,fillstyle=solid,fillcolor=color1971b](0.35,0.6){0.35}
\usefont{T1}{ptm}{m}{n}
\rput(0.2603125,0.6099999){\LARGE 1}
\pscircle[linewidth=0.04,dimen=outer,fillstyle=solid,fillcolor=color1971b](0.35,-0.59999996){0.35}
\usefont{T1}{ptm}{m}{n}
\rput(0.32984376,-0.60999995){\LARGE 2}
\pscircle[linewidth=0.04,dimen=outer,fillstyle=solid,fillcolor=color1971b](1.95,-0.6){0.35}
\usefont{T1}{ptm}{m}{n}
\rput(1.9215624,-0.59){\LARGE 4}
\pscircle[linewidth=0.04,dimen=outer,fillstyle=solid,fillcolor=color1971b](1.95,0.59999996){0.35}
\usefont{T1}{ptm}{m}{n}
\rput(1.9314063,0.61){\LARGE 3}
\pscircle[linewidth=0.04,dimen=outer,fillstyle=solid,fillcolor=color1971b](3.55,-0.6){0.35}
\usefont{T1}{ptm}{m}{n}
\rput(3.5053124,-0.61){\LARGE 6}
\pscircle[linewidth=0.04,dimen=outer,fillstyle=solid,fillcolor=color1971b](3.55,0.59999996){0.35}
\usefont{T1}{ptm}{m}{n}
\rput(3.5164063,0.61){\LARGE 5}
\psline[linewidth=0.04cm](0.68,0.61)(1.62,0.61)
\psline[linewidth=0.04cm](2.28,0.61)(3.22,0.61)
\psline[linewidth=0.04cm](2.28,-0.60999995)(3.22,-0.60999995)
\pscircle[linewidth=0.04,dimen=outer,fillstyle=solid,fillcolor=color1971b](5.23,0.05999998){0.35}
\usefont{T1}{ptm}{m}{n}
\rput(5.193281,0.04999998){\LARGE 7}
\psline[linewidth=0.04cm](0.34,0.27000004)(0.34,-0.26999998)
\psline[linewidth=0.04cm](0.58,0.37000003)(1.66,-0.40999997)
\psline[linewidth=0.04cm](3.86,0.51000005)(4.94,0.21000002)
\psline[linewidth=0.04cm](3.86,-0.46999997)(4.92,-0.06999997)
\end{pspicture} 
}
}
\subfigure[Graph $H$]{
\scalebox{0.5} 
{
\begin{pspicture}(0,-1.165)(5.58,1.185)
\definecolor{color2875b}{rgb}{0.8,0.8,1.0}
\pscircle[linewidth=0.04,dimen=outer,fillstyle=solid,fillcolor=color2875b](0.35,0.385){0.35}
\usefont{T1}{ptm}{m}{n}
\rput(0.18859375,0.3949999){\LARGE 1}
\pscircle[linewidth=0.04,dimen=outer,fillstyle=solid,fillcolor=color2875b](0.35,-0.81499994){0.35}
\usefont{T1}{ptm}{m}{n}
\rput(0.31093752,-0.8249999){\LARGE 2}
\pscircle[linewidth=0.04,dimen=outer,fillstyle=solid,fillcolor=color2875b](1.95,-0.815){0.35}
\usefont{T1}{ptm}{m}{n}
\rput(1.9064062,-0.805){\LARGE 4}
\pscircle[linewidth=0.04,dimen=outer,fillstyle=solid,fillcolor=color2875b](1.95,0.38499996){0.35}
\usefont{T1}{ptm}{m}{n}
\rput(1.8943751,0.395){\LARGE 3}
\pscircle[linewidth=0.04,dimen=outer,fillstyle=solid,fillcolor=color2875b](3.55,-0.815){0.35}
\usefont{T1}{ptm}{m}{n}
\rput(3.480781,-0.825){\LARGE 6}
\pscircle[linewidth=0.04,dimen=outer,fillstyle=solid,fillcolor=color2875b](3.55,0.38499996){0.35}
\usefont{T1}{ptm}{m}{n}
\rput(3.4821875,0.395){\LARGE 5}
\psline[linewidth=0.04cm](0.68,0.395)(1.62,0.395)
\psline[linewidth=0.04cm](2.28,0.395)(3.22,0.395)
\psline[linewidth=0.04cm](2.28,-0.8249999)(3.22,-0.8249999)
\psline[linewidth=0.04cm](1.94,0.05500004)(1.94,-0.48499998)
\psline[linewidth=0.04cm](3.54,0.05500004)(3.54,-0.48499998)
\pscircle[linewidth=0.04,dimen=outer,fillstyle=solid,fillcolor=color2875b](5.23,-0.15500002){0.35}
\usefont{T1}{ptm}{m}{n}
\rput(5.168906,-0.16500002){\LARGE 7}
\psline[linewidth=0.04cm](0.34,0.05500004)(0.34,-0.48499998)
\psline[linewidth=0.04cm](0.58,0.15500003)(1.66,-0.62499994)
\psline[linewidth=0.04cm](0.56,-0.56499994)(1.7,0.19500002)
\psline[linewidth=0.04cm](0.64,-0.66499996)(3.22,0.29500005)
\psline[linewidth=0.04cm](2.18,0.17500003)(3.28,-0.60499996)
\psline[linewidth=0.04cm](2.24,-0.66499996)(4.9,-0.12499997)
\psline[linewidth=0.04cm](2.14,-0.54499996)(3.3,0.17500003)
\psline[linewidth=0.04cm](3.86,0.29500005)(4.94,-0.00499998)
\psline[linewidth=0.04cm](3.86,-0.68499994)(4.92,-0.28499997)
\psbezier[linewidth=0.04](0.3404207,0.7040723)(0.34,1.1639334)(3.5395792,1.165)(3.5399997,0.70513886)
\end{pspicture} 
}
}
\subfigure[Junction tree]{
\scalebox{0.5} 
{
\begin{pspicture}(0,-1.4628125)(5.8885937,1.5028125)
\definecolor{color1971b}{rgb}{0.8,0.8,1.0}
\psellipse[linewidth=0.04,dimen=outer,fillstyle=solid,fillcolor=color1971b](1.3585937,0.8871875)(1.13,0.51)
\usefont{T1}{ptm}{m}{n}
\rput(1.300625,0.8771875){\LARGE 1,3,4,5}
\psellipse[linewidth=0.04,dimen=outer,fillstyle=solid,fillcolor=color1971b](4.7585936,0.8871875)(1.13,0.51)
\usefont{T1}{ptm}{m}{n}
\rput(4.700625,0.9171875){\LARGE 1,2,3,5}
\psellipse[linewidth=0.04,dimen=outer,fillstyle=solid,fillcolor=color1971b](1.3585937,-0.9528125)(1.13,0.51)
\usefont{T1}{ptm}{m}{n}
\rput(1.331875,-0.9228125){\LARGE 3,4,5,6}
\psellipse[linewidth=0.04,dimen=outer,fillstyle=solid,fillcolor=color1971b](4.7585936,-0.9528125)(1.13,0.51)
\usefont{T1}{ptm}{m}{n}
\rput(4.737656,-0.9228125){\LARGE 4,5,6,7}
\psline[linewidth=0.04cm](2.4685938,0.8971875)(3.6485937,0.8971875)
\psline[linewidth=0.04cm](1.3485937,0.4171875)(1.3485937,-0.4628125)
\usefont{T1}{ptm}{m}{n}
\rput(2.990625,1.2371875){\LARGE 1,3,5}
\usefont{T1}{ptm}{m}{n}
\rput(0.55359375,-0.0228125){\LARGE 3,4,5}
\psline[linewidth=0.04cm](2.4685938,-0.9628125)(3.6485937,-0.9628125)
\usefont{T1}{ptm}{m}{n}
\rput(3.029375,-0.6028125){\LARGE 4,5,6}
\end{pspicture} 
}
}
\subfigure[Region graph]{
\scalebox{0.5} 
{
\begin{pspicture}(0,-1.86)(8.98,1.86)
\definecolor{color2875b}{rgb}{0.8,0.8,1.0}
\psframe[linewidth=0.04,dimen=outer,fillstyle=solid,fillcolor=color2875b](1.88,1.86)(0.0,1.08)
\usefont{T1}{ptm}{m}{n}
\rput(0.8720313,1.46){\LARGE 1,2,3,5}
\psframe[linewidth=0.04,dimen=outer,fillstyle=solid,fillcolor=color2875b](4.2466664,1.86)(2.3666666,1.08)
\usefont{T1}{ptm}{m}{n}
\rput(3.238698,1.46){\LARGE 1,3,4,5}
\psframe[linewidth=0.04,dimen=outer,fillstyle=solid,fillcolor=color2875b](6.613333,1.86)(4.733333,1.08)
\usefont{T1}{ptm}{m}{n}
\rput(5.636615,1.46){\LARGE 3,4,5,6}
\psframe[linewidth=0.04,dimen=outer,fillstyle=solid,fillcolor=color2875b](8.98,1.86)(7.1,1.08)
\usefont{T1}{ptm}{m}{n}
\rput(8.009063,1.46){\LARGE 4,5,6,7}
\psframe[linewidth=0.04,dimen=outer,fillstyle=solid,fillcolor=color2875b](3.24,0.39)(1.76,-0.39)
\usefont{T1}{ptm}{m}{n}
\rput(2.4220312,-0.01){\LARGE 1,3,5}
\psframe[linewidth=0.04,dimen=outer,fillstyle=solid,fillcolor=color2875b](5.23,0.39)(3.75,-0.39)
\usefont{T1}{ptm}{m}{n}
\rput(4.455,-0.01){\LARGE 3,4,5}
\psframe[linewidth=0.04,dimen=outer,fillstyle=solid,fillcolor=color2875b](7.22,0.39)(5.74,-0.39)
\usefont{T1}{ptm}{m}{n}
\rput(6.460781,-0.01){\LARGE 4,5,6}
\psframe[linewidth=0.04,dimen=outer,fillstyle=solid,fillcolor=color2875b](4.28,-1.08)(3.28,-1.86)
\usefont{T1}{ptm}{m}{n}
\rput(3.755,-1.48){\LARGE 3,5}
\psframe[linewidth=0.04,dimen=outer,fillstyle=solid,fillcolor=color2875b](5.7,-1.08)(4.7,-1.86)
\usefont{T1}{ptm}{m}{n}
\rput(5.1825,-1.48){\LARGE 4,5}
\psline[linewidth=0.04cm,arrowsize=0.05291667cm 3.0,arrowlength=1.4,arrowinset=0.4]{->}(3.12,1.1)(2.72,0.4)
\psline[linewidth=0.04cm,arrowsize=0.05291667cm 3.0,arrowlength=1.4,arrowinset=0.4]{->}(3.68,1.1)(4.14,0.38)
\psline[linewidth=0.04cm,arrowsize=0.05291667cm 3.0,arrowlength=1.4,arrowinset=0.4]{->}(5.2,1.08)(4.8,0.38)
\psline[linewidth=0.04cm,arrowsize=0.05291667cm 3.0,arrowlength=1.4,arrowinset=0.4]{->}(6.1,1.1)(6.56,0.38)
\psline[linewidth=0.04cm,arrowsize=0.05291667cm 3.0,arrowlength=1.4,arrowinset=0.4]{->}(7.38,1.08)(6.98,0.38)
\psline[linewidth=0.04cm,arrowsize=0.05291667cm 3.0,arrowlength=1.4,arrowinset=0.4]{->}(1.62,1.08)(2.08,0.36)
\psline[linewidth=0.04cm,arrowsize=0.05291667cm 3.0,arrowlength=1.4,arrowinset=0.4]{->}(3.04,-0.38)(3.5,-1.1)
\psline[linewidth=0.04cm,arrowsize=0.05291667cm 3.0,arrowlength=1.4,arrowinset=0.4]{->}(4.58,-0.38)(5.04,-1.1)
\psline[linewidth=0.04cm,arrowsize=0.05291667cm 3.0,arrowlength=1.4,arrowinset=0.4]{->}(5.92,-0.38)(5.52,-1.08)
\psline[linewidth=0.04cm,arrowsize=0.05291667cm 3.0,arrowlength=1.4,arrowinset=0.4]{->}(4.4,-0.38)(4.0,-1.08)
\end{pspicture} 
}
}
\caption{ Our framework for estimating the graph in (a) using (b) computes the junction tree in (c) and uses a region graph representation in (d) of the junction tree to decompose the UGMS problem into multiple subproblems.}
\label{fig:FirstExample}
\end{figure}

Let $H$ be a graph that contains all the edges in $G^*$.  We show that the UGMS problem can be decomposed into multiple subproblems over \emph{clusters} and \emph{subsets of the separators} in a junction tree representation of $H$.  In particular, using the junction tree, we construct a region graph, which is a directed graph over clusters of vertices.  An example of a region graph for the junction tree in Figure~\ref{fig:FirstExample}(c) is shown in Figure~\ref{fig:FirstExample}(d).   The first two rows in the region graph are the clusters and separators of the junction tree, respectively.  The rest of the rows contain subsets of the separators\footnote{See Algorithm~\ref{alg:constructregion} for details on how to exactly construct the region graph.}.  The multiple subproblems we identify correspond to estimating a subset of edges over each cluster in the region graph.  For example, the subproblem over the cluster $\{1,2,3,5\}$ in Figure~\ref{fig:FirstExample}(d) estimates the edges $(2,3)$ and $(2,5)$.

We solve the subproblems over the region graph in an iterative manner. First, all subproblems in the first row of the region graph are solved in parallel.  Second, the region graph is updated taking into account the edges removed in the first step.  We keep solving subproblems over rows in the region graph and update the region graph until all the edges in the graph $H$ have been estimated.

As illustrated above, our framework depends on a junction tree representation of the graph $H$ that contains a superset of the true edges.  Given any graph, there may exist several junction tree representations.  An optimal junction tree is a junction tree representation such that the maximum size of the cluster is as small as possible.  Since we apply UGMS algorithms to the clusters of the junction tree, and the complexity of UGMS depends on the number of vertices in the graph, it is useful to apply our framework using optimal junction trees.  Unfortunately, it is computationally intractable to find optimal junction trees \citep{arnborg1987complexity}.  However, there exists several computationally efficient greedy heuristics that compute close to optimal junction trees \citep{Kjaerulff1990,berry2003minimum}.  We use such heuristics to find junction trees when implementing our algorithms in practice.

\subsection{Advantages of Using Junction Trees}
\label{sec:introsum}

We highlight three main advantages of the junction tree framework for UGMS.

\medskip

\noindent
\textbf{Choosing Regularization Parameters and UGMS Algorithms:}  UGMS algorithms typically depend on a regularization parameter that controls the number of estimated edges.  This regularization parameter is usually chosen using model selection algorithms such as cross-validation or stability selection.  Since each subproblem we identify in the region graph is solved independently, different regularization parameters can be used to learn different parts of the graph.  This has advantages when the true graph $G^*$ has different characteristics in different parts of the graph.  Further, since the subproblems are independent, different UGMS algorithms can be used to learn different parts of the graph.  Our numerical simulations clearly show the advantages of this property.

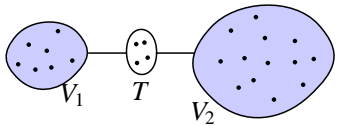
\begin{figure}
\centering
\scalebox{0.5}{
\begin{pspicture}(0,-1.7426543)(9.106651,1.7426542)
\definecolor{color30b}{rgb}{0.8,0.8,1.0}
\psbezier[linewidth=0.04,fillstyle=solid,fillcolor=color30b](0.38826162,0.6918938)(0.77652323,1.1330954)(1.6471232,1.1826543)(2.0335617,0.7397271)(2.42,0.2967998)(2.3162072,0.011489006)(1.8954945,-0.39631054)(1.4747819,-0.80411005)(0.88718396,-0.8773457)(0.45729518,-0.4800186)(0.027406394,-0.08269151)(0.0,0.25069222)(0.38826162,0.6918938)
\psbezier[linewidth=0.04,fillstyle=solid,fillcolor=color30b](5.554909,0.9018673)(6.229819,1.6397681)(7.743169,1.7226542)(8.41491,0.9818674)(9.086651,0.2410806)(8.906227,-0.23609608)(8.174909,-0.91813266)(7.443591,-1.6001692)(6.422179,-1.7226542)(5.6749096,-1.0581326)(4.92764,-0.39361107)(4.8799996,0.1639665)(5.554909,0.9018673)
\psbezier[linewidth=0.04](3.3149095,0.26186737)(3.3149095,-0.5381326)(4.1149096,-0.55813265)(4.1149096,0.24186738)(4.1149096,1.0418674)(3.3149095,1.0618674)(3.3149095,0.26186737)
\psdots[dotsize=0.12](1.4949093,0.8018674)
\psdots[dotsize=0.12](1.1949093,0.28186736)
\psdots[dotsize=0.12](0.89490926,-0.21813263)
\psdots[dotsize=0.12](0.43490934,-0.07813263)
\psdots[dotsize=0.12](0.7149093,0.44186735)
\psdots[dotsize=0.12](1.8749093,0.02186736)
\psdots[dotsize=0.12](1.3149093,-0.15813264)
\psdots[dotsize=0.12](5.8149095,0.00186737)
\psdots[dotsize=0.12](6.9949102,0.6218674)
\psdots[dotsize=0.12](7.794909,0.5418674)
\psdots[dotsize=0.12](8.2949095,0.06186737)
\psdots[dotsize=0.12](8.014909,-0.61813265)
\psdots[dotsize=0.12](7.234909,-1.0181326)
\psdots[dotsize=0.12](7.0749097,-0.03813262)
\psdots[dotsize=0.12](6.4549103,0.08186738)
\psdots[dotsize=0.12](6.2949095,-0.69813263)
\psdots[dotsize=0.12](6.694909,-0.49813262)
\psdots[dotsize=0.12](7.794909,-0.01813262)
\psdots[dotsize=0.12](6.094909,0.8218674)
\psdots[dotsize=0.12](6.754909,1.1818672)
\psdots[dotsize=0.12](3.574909,0.46186736)
\psdots[dotsize=0.12](3.874909,0.10186737)
\psdots[dotsize=0.12](3.594909,-0.03813262)
\psdots[dotsize=0.12](3.7949092,0.5018674)
\usefont{T1}{ptm}{m}{n}
\rput(3.7089717,-0.7831326){\huge $T$}
\usefont{T1}{ptm}{m}{n}
\rput(1.8961593,-0.8831326){\huge $V_1$}
\usefont{T1}{ptm}{m}{n}
\rput(5.3631907,-1.3831327){\huge $V_2$}
\psline[linewidth=0.04cm](2.2949092,0.22186737)(3.3149095,0.22186737)
\psline[linewidth=0.04cm](4.094909,0.22186737)(5.1149096,0.22186737)
\end{pspicture}
}
\caption{Structure of the graph used to analyze the junction tree framework for UGMS.}
\label{fig:graphAss}
\end{figure}

\medskip

\noindent
\textbf{Reduced Sample Complexity:}  One of the key results of our work is to show that in many cases, the junction tree framework is capable of consistently estimating a graph under weaker conditions than required by previously proposed methods.  For example, we show that if $G^*$ consists of two main components that are separated by a relatively small number of vertices (see Figure~\ref{fig:graphAss} for a general example), then, under certain conditions, the number of observations needed for consistent estimation scales like $\log(p_{\min})$, where $p_{\min}$ is the number of vertices in the smaller of the two components.   In contrast, existing methods are known to be consistent if the observations scale like $\log p$, where $p$ is the total number of vertices.  If the smaller component were, for example, exponentially smaller than the larger component, then the junction tree framework is consistent with about $\log\log p$ observations.    For generic problems, without structure that can be exploited by the junction tree framework, we recover the standard conditions for consistency.  

\medskip

\noindent
\textbf{Learning Weak Edges:}  A direct consequence of choosing different regularization parameters and the reduced sample complexity is that certain weak edges, not estimated using standard algorithms, may be estimated when using the junction tree framework.  We show this theoretically and using numerical simulations on both synthetic and real world data.

\subsection{Related Work}
Several algorithms have been proposed in the literature for learning undirected graphical models.  Some examples include References~\citet{PCAlgorithm,KalischBuhlmann2007,BanerjeeGhaoui2008,Friedman2008,NicolaiPeter2006,AnimaTanWillsky2011b,cai2011constrained} for learning Gaussian graphical models, references~\citet{liu2009nonparanormal,xue-zou-2012,liu2012high,lafferty2012sparse,HanLiuTrans2012} for learning non-Gaussian graphical models, and references~\citet{BreslerMosselSly2008,BrombergMargaritis2009,RavikumarWainwrightLafferty2010,NetrapalliSanghaviAllerton2010,AnimaTanWillsky2011a,jalali2011learning,jalali2012high,yang2012graphical} for learning discrete graphical models.  Although all of the above algorithms can be modified to take into account prior knowledge about a graph $H$ that contains all the true edges (see Appendix~\ref{app:examples} for some examples), our junction tree framework is fundamentally different than the standard modification of these algorithms.  The main difference is that the junction tree framework allows for using the \textit{global Markov property} of undirected graphical models (see Definition~\ref{def:ugm}) when learning graphs.  This allows for improved graph estimation, as illustrated by both our theoretical and numerical results.  We note that all of the above algorithms can be used in conjunction with the junction tree framework.

Junction trees have been used for performing \textit{exact} probabilistic inference in graphical models \citep{LauritzenSpiegelhalter1988}.  In particular, given a graphical model, and its junction tree representation, the computational complexity of exact inference is exponential in the size of the cluster in the junction tree with the most of number of vertices.  This has motivated a line of research for learning \textit{thin junction trees} so that the maximum size of the cluster in the estimated junction tree is small so that inference is computationally tractable \citep{ChowLiu1968,bach2001thin,karger2001learning,chechetka2007efficient,KumarBach13a}.  We also make note of algorithms for learning decomposable graphical models where the graph structure is assumed to triangulated \citep{malvestuto1991approximating,giudici1999decomposable}.  In general, the goal in the above algorithms is to learn a joint probability distribution that approximates a more complex probability distribution so that computations, such as inference, can be done in a tractable manner.  On the other hand, this paper considers the problem of learning the \textit{structure of the graph} that best represents the conditional dependencies among the random variables under consideration.

There are two notable algorithms in the literature that use junction trees for learning graphical models.  The first is an algorithm presented in \citet{XieGeng2008} that uses junction trees to find the direction of edges for learning \textit{directed} graphical models.  Unfortunately, this algorithm cannot be used for UGMS.  The second is an algorithm presented in \citet{MaXieGeng2008Chain} for learning chain graphs, that are graphs with both directed and undirected edges.  The algorithm in \citet{MaXieGeng2008Chain} uses a junction tree representation to learn an undirected graph before orienting some of the edges to learn a chain graph.  Our proposed algorithm, and subsequent analysis, differs from the work in \citet{MaXieGeng2008Chain} in the following ways: 
\begin{enumerate}[(i)]
\item Our algorithm identifies an ordering on the edges, which subsequently results in a lower sample complexity and the possibility of learning weak edges in a graph.  The ordering on the edges is possible because of our novel region graph interpretation for learning graphical models.  For example, when learning the graph in Figure~\ref{fig:FirstExample}(a) using Figure~\ref{fig:FirstExample}(b), the algorithm in \citet{MaXieGeng2008Chain} learns the edge $(3,5)$ by applying a UGMS algorithm to the vertices $\{1,2,3,4,5,6\}$.  In contrast, our proposed algorithm first estimates all edges in the second layer of the region graph in Figure~\ref{fig:FirstExample}(d), re-estimates the region graph, and then only applies a UGMS algorithm to $\{3,4,5\}$ to determine if the edge $(3,4)$ belongs to the graph.  In this way, our algorithm, in general, requires applying a UGMS algorithm to a smaller number of vertices when learning edges over separators in a junction tree representation.
\item  Our algorithm for using junction trees for UGMS is independent of the choice of the UGMS algorithm, while the algorithm presented in \citet{MaXieGeng2008Chain} uses conditional independence tests for UGMS.
\item Our algorithm, as discussed in (i), has the additional advantage of learning certain weak edges that may not be estimated when using standard UGMS algorithms.  We theoretically quantify this property of our algorithm, while no such theory was presented in \citet{MaXieGeng2008Chain}.
\end{enumerate}

Recent work has shown that solutions to the graphical lasso (gLasso) \citep{Friedman2008} problem for UGMS over Gaussian graphical models can be computed, under certain conditions, by decomposing the problem over connected components of the graph computed by thresholding the empirical covariance matrix \citep{witten2011fast,Mazumder2012}.  The methods in \citet{witten2011fast,Mazumder2012} are useful for computing solutions to gLasso for particular choices of the regularization parameter and not for accurately estimating graphs.  
Thus, when using gLasso for UGMS, we can use the methods in \citet{witten2011fast,Mazumder2012} to solve gLasso when performing model selection for choosing suitable regularization parameters.  
Finally, we note that recent work in \citet{loh2012structure} uses properties of junction trees to learn discrete graphical models.  The algorithm in \citet{loh2012structure} is designed for learning discrete graphical models and our methods can be used to improve its performance.

\subsection{Paper Organization}

The rest of the paper is organized as follows:

\begin{itemize}\itemsep0pt

\item Section~\ref{sec:background} reviews graphical models and formulates the undirected graphical model selection (UGMS) problem.

\item Section~\ref{sec:regiongraphs} shows how junction trees can be represented as region graphs and outlines an algorithm for constructing a region graph from a junction tree.

\item Section~\ref{sec:applyingUGMSSubset} shows how the region graphs can be used to apply a UGMS algorithm to the clusters and separators of a junction tree.

\item Section~\ref{sec:ugmsjunctiontrees} presents our main framework for using junction trees for UGMS.  In particular, we show how the methods in Sections~\ref{sec:regiongraphs}-\ref{sec:applyingUGMSSubset} can be used iteratively to estimate a graph.

\item Section~\ref{sec:fPC} reviews the PC-Algorithm, which we use to study the theoretical properties of the junction tree framework.

\item Section~\ref{sec:theoreticalPC} presents theoretical results on the sample complexity of learning graphical models using the junction tree framework.  We also highlight advantages of using the junction tree framework as summarized in Section~\ref{sec:introsum}.

\item Section~\ref{sec:numericalSim} presents numerical simulations to highlight the advantages of using junction trees for UGMS in practice.

\item Section~\ref{sec:summary} summarizes the paper and outlines some future work.

\end{itemize}

\section{Preliminaries}
\label{sec:background}
In this section, we review some necessary background on graphs and graphical models that we use in this paper.
Section~\ref{subsec:graphtheory} reviews some graph theoretic concepts.
Section~\ref{subsec:gm} reviews undirected graphical models.  Section~\ref{subsec:ugmsps} formally defines the undirected graphical model selection (UGMS) problem.  Section~\ref{subsec:junctiontrees} reviews junction trees, which we use use as a tool for decomposing UGMS into multiple subproblems.

\subsection{Graph Theoretic Concepts}
\label{subsec:graphtheory}
A graph is a tuple $G = (V,E(G))$, where $V$ is a set of vertices and $E(G) \subseteq V \times V$ are edges connecting vertices in $V$.  For any graph $H$, we use the notation $E(H)$ to denote its edges.  We only consider undirected graphs where if $(v_1,v_2) \in E(G)$, then $(v_2,v_1) \in E(G)$ for $v_1,v_2 \in V$.  Some graph theoretic notations that we use in this paper are summarized as follows:

\begin{itemize}\itemsep1pt
\item Neighbor $ne_G(i)$: Set of nodes connected to $i$.
\item Path $\{i,s_1,\ldots,s_d,j\}$: A sequence of nodes such that $(i,s_1),(s_d,j),(s_{k},s_{k+1}) \in E$ for $k = 1,\ldots,d-1$.  
\item Separator $S$: A set of nodes such that all paths from $i$ to $j$ contain at least one node in $S$. The separator $S$ is \textit{minimal} if no proper subset of $S$ separates $i$ and $j$.
\item Induced Subgraph $G[A] = (A,E(G[A]))$: A graph over the nodes $A$ such that $E(G[A])$ contains the edges only involving the nodes in $A$.
\item Complete graph $K_A$: A graph that contains all possible edges over the nodes $A$.  
\end{itemize}

\noindent
For two graphs $G_1 = (V_1,E(G_1))$ and $G_2 = (V_2,E(G_2))$, we define the following standard operations:

\begin{itemize}\itemsep1pt
\item Graph Union: $G_1 \cup G_2 = (V_1 \cup V_2, E_1 \cup E_2)$.
\item Graph Difference: $G_1 \backslash G_2 = (V_1 , E_1 \backslash E_2)$.
\end{itemize}

\subsection{Undirected Graphical Models}
\label{subsec:gm}
\begin{definition}[Undirected Graphical Model \citep{Lauritzen1996}]
\label{def:ugm}
An undirected graphical model is a probability distribution $P_X$ defined on a graph $G^* = (V,E(G^*))$, where $V = \{1,\ldots,p\}$ indexes the random vector $X = (X_1,\ldots,X_p)$ and the edges $E(G^*)$ encode the following Markov property: for a set of nodes $A$, $B$, and $S$, if $S$ separates $A$ and $B$, then $X_{A} \ind X_{B} | X_S$.
\end{definition}
The Markov property outlined above is referred to as the \textit{global Markov property}.  Undirected graphical models are also referred to as Markov random fields or Markov networks in the literature.  When the joint probability distribution $P_X$ is non-degenerate, i.e., $P_X > 0$, the Markov property in Definition~\ref{def:ugm} are equivalent to the pairwise and local Markov properties:
\begin{itemize}
 \item Pairwise Markov property: For all $(i,j) \notin E$, $X_i \ind X_j | X_{V \backslash \{i,j\}}$.
 \item Local Markov property: For all $i \in V$, $X_{i} \ind X_{V \backslash \{ne_G(i) \cup \{i\}\}} | X_{ne_G(i)}$.
\end{itemize}
In this paper, we always assume $P_X > 0$ and say $P_X$ is \textit{Markov} on $G$ to reflect the Markov properties.  Examples of conditional independence relations conveyed by a probability distribution defined on the graph in Figure~\ref{fig:jtexample}(d) are $X_1 \ind X_6 | \{X_2,X_4\}$ and $X_4 \ind X_6 | \{X_2,X_5,X_8\}$.

\subsection{Undirected Graphical Model Section (UGMS)}
\label{subsec:ugmsps}
\begin{definition}[UGMS]
\label{def:ugms}
The undirected graphical model selection (UGMS) problem is to estimate a graph $G^*$ such that the joint probability distribution $P_X$ is Markov on $G^*$, but not Markov on any subgraph of $G^*$.
\end{definition} 
The last statement in Definition~\ref{def:ugms} is important, since, if $P_X$ is Markov on $G^*$, then it is also Markov on any graph that contains $G^*$.  For example, all probability distributions are Markov on the complete graph.  Thus, the UGMS problem is to find the minimal graph that captures the Markov properties associated with a joint probability distribution.  In the literature, this is also known as finding the minimal I-map.  

Let $\Psi$ be an abstract UGMS algorithm that takes as inputs a set of $n$ i.i.d. observations $\Xf^n = \{X^{(1)},\ldots,X^{(n)}\}$ drawn from $P_X$ and a regularization parameter $\lambda_n$.  The output of $\Psi$ is a graph $\widehat{G}_n$, where $\lambda_n$ controls the number of edges estimated in $\widehat{G}_n$.  Note the dependence of the regularization parameter on $n$.  We assume $\Psi$ is consistent, which is formalized in the following assumption.

\begin{assumption}
\label{ass:cons}
There exists a $\lambda_n$ for which $P(\widehat{G}_n = G^*) \rightarrow 1$ as $n \rightarrow \infty$, where $\widehat{G}_n = \Psi(\Xf^n,\lambda_n)$.
\end{assumption}

We give examples of $\Psi$ in Appendix~\ref{app:examples}.  Assumption~\ref{ass:cons} also takes into account the high-dimensional case where $p$ depends on $n$ in such a way that $p,n \rightarrow \infty$.

\subsection{Junction Trees}
\label{subsec:junctiontrees}

Junction trees \citep{Robertson1986} are used extensively for efficiently solving various graph related problems, see \citet{ArnborgPro1989JT} for some examples.  Reference \citet{LauritzenSpiegelhalter1988} shows how junction trees can be used for exact inference (computing marginal distribution given a joint distribution) over graphical models.  We use junction trees as a tool for decomposing the UGMS problem into multiple subproblems.

\begin{definition}[Junction tree]
\label{def:junctiontree}
For an undirected graph $G = (V,E(G))$, a junction tree ${\cal J} = (\C,E({\cal J}))$ is a tree-structured graph over clusters of nodes in $V$ such that
\begin{enumerate}[(i)]
\item Each node in $V$ is associated with at least one cluster in $\C$.
\item For every edge $(i,j) \in E(G) $, there exists a cluster $C_k \in \C$ such that $i,j \in C_k$.
\item ${\cal J}$ satisfies the {running intersection property}:  For all clusters $C_u$, $C_v$, and $C_w$ such that $C_w$ separates $C_u$ and $C_v$ in the tree defined by $E({\cal J})$, $C_u \cap C_v \subset C_w$.
\end{enumerate}
\end{definition}

The first property in Definition~\ref{def:junctiontree} says that all nodes must be mapped to at least one cluster of the junction tree.  The second property states that each edge of the original graph must be contained within a cluster.  The third property, known as the running intersection property, is the most important since it restricts the clusters and the trees that can be be formed. For example, consider the graph in Figure~\ref{fig:jtexample}(a).  By simply clustering the nodes over edges, as done in Figure~\ref{fig:jtexample}(b), we can \textit{not} get a valid junction tree \citep{WainwrightThesis}.  By making appropriate clusters of size three, we get a valid junction tree in Fig.~\ref{fig:jtexample}(c).  In other words, the running intersection property says that for two clusters with a common node, all the clusters on the path between the two clusters must contain that common node.

\begin{figure}[t!]
\begin{center}
\subfigure[]{
\begin{tikzpicture}[scale=0.7]
\tikzstyle{every node}=[draw,shape=circle,scale=0.6, fill=blue!20];
\path (0,2) node (x1) {$1$};
\path (-1,1) node (x2) {$2$};
\path (1,1) node (x3) {$3$};
\path (0,0) node (x4) {$4$};
\draw (x1) -- (x2) -- (x4) -- (x3) -- (x1);
\end{tikzpicture}
} \qquad
\subfigure[]{
\begin{tikzpicture}[scale=0.7]
\tikzstyle{every node}=[draw,scale=0.6, fill=blue!20];
\path (1,1) node[ellipse] (c1) {$1$ $3$};
\path (-1,1) node[ellipse] (c2) {$1$ $2$};
\path (-1,-1) node[ellipse] (c3) {$2$ $4$};
\path (1,-1) node[ellipse] (c4) {$3$ $4$};
\draw (c1) -- (c2) -- (c3) -- (c4);
\end{tikzpicture}
} \qquad
\subfigure[]{
\begin{tikzpicture}[scale=0.7]
\tikzstyle{every node}=[draw,scale=0.6, fill=blue!20];
\path (0,1) node[ellipse] (c1) {$1$ $2$ $3$};
\path (0,-1) node[ellipse] (c2) {$2$ $3$ $4$};
\draw (c1) -- (c2);
\end{tikzpicture}
} \qquad
\subfigure[]{
\begin{tikzpicture}[scale=0.7]
\tikzstyle{every node}=[draw,shape=circle,scale=0.6,fill=blue!20];
\path (-1,1) node (x1) {$1$};
\path (0,1) node (x2) {$2$};
\path (1,1) node (x3) {$3$};
\path (-1,0) node (x4) {$4$};
\path (0,0) node (x5) {$5$};
\path (1,0) node (x6) {$6$};\path (-1,-1) node (x7) {$7$};
\path (0,-1) node (x8) {$8$};
\path (1,-1) node (x9) {$9$};
\draw (x1) -- (x2) -- (x3) -- (x6) -- (x9) -- (x8) -- (x7);
\draw (x7) -- (x4) -- (x1) (x4) -- (x5) -- (x6);
\draw (x2) -- (x5) -- (x8);
\end{tikzpicture}
} \quad \quad
\subfigure[]{
\begin{tikzpicture}[scale=0.8]
\tikzstyle{every node}=[draw,scale=0.6,fill=blue!20];
\path (-1,1) node[ellipse] (c1) {$1$ $2$ $4$};
\path (-1,0) node[ellipse] (c2) {$2$ $4$ $5$ $8$};
\path (-1,-1) node[ellipse] (c3) {$4$ $7$ $8$};
\path (1,0) node[ellipse] (c4) {$2$ $5$ $6$ $8$};
\path (1,1) node[ellipse] (c5) {$2$ $3$ $6$};
\path (1,-1) node[ellipse] (c6) {$6$ $8$ $9$};
\draw (c1) -- (c2) -- (c3);
\draw (c2) -- (c4) -- (c5);
\draw (c4) -- (c6);
\end{tikzpicture}
}
\caption{{\small{(a) An undirected graph, (b) Invalid junction tree since $\{1,2\}$ separates $\{1,3\}$ and $\{3,4\}$,but $3 \notin \{1,2\}$. (c) Valid junction tree for the graph in (a). (d) A grid graph. (e) Junction tree representation of (d).}}}
\label{fig:jtexample}
\end{center}
\end{figure}
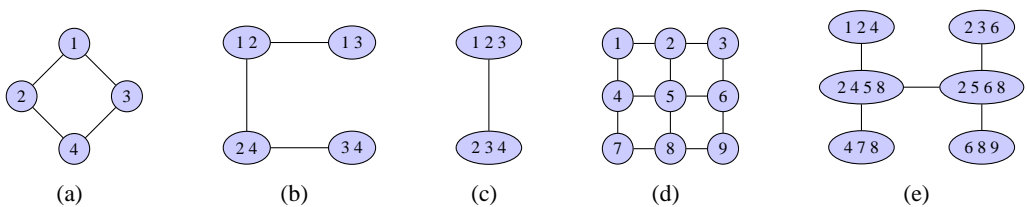

\begin{proposition}[\citep{Robertson1986}]
\label{prop:separation}
Let ${\cal J} = ({\cal C},E({\cal J}))$ be a junction tree of the graph $G$.  Let $S_{uv} = C_u \cap C_v$.  For each $(C_u,C_v) \in \Ec$, we have the following properties:
\begin{enumerate}
 \item $S_{uv} \ne \emptyset$.
 \item $S_{uv}$ separates $C_u \backslash S_{uv}$ and $C_v \backslash S_{uv}$.
\end{enumerate}
\end{proposition}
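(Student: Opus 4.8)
The plan is to exploit the tree structure directly. Fix an edge $(C_u,C_v)\in E({\cal J})$ and delete it from ${\cal J}$; since ${\cal J}$ is a tree, this splits it into two subtrees ${\cal T}_u$ and ${\cal T}_v$ containing $C_u$ and $C_v$ respectively. Let $A=\bigcup_{C\in{\cal T}_u}C$ and $B=\bigcup_{C\in{\cal T}_v}C$ be the vertex sets they cover. First I would establish the single structural lemma that underlies both claims: $A\cap B=S_{uv}$, $A\cup B=V$, and every edge of $G$ has both endpoints in $A$ or both in $B$.

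The inclusion $S_{uv}\subseteq A\cap B$ and the identity $A\cup B=V$ are immediate from properties (i)--(ii) of Definition~\ref{def:junctiontree} (every node lies in some cluster, and every cluster lies in exactly one of the two subtrees). For the reverse inclusion $A\cap B\subseteq S_{uv}$ I would first record the standard consequence of the running intersection property: the clusters containing any fixed node $a$ form a connected subtree of ${\cal J}$. This holds because if $a\in C_x\cap C_w$, then every cluster $C_y$ on the tree-path between $C_x$ and $C_w$ separates them, so property (iii) gives $a\in C_x\cap C_w\subseteq C_y$. Now if $a\in A\cap B$, then $a$ lies in some $C_x\in{\cal T}_u$ and some $C_w\in{\cal T}_v$; the tree-path from $C_x$ to $C_w$ crosses the deleted edge and hence passes through both $C_u$ and $C_v$, and by connectivity of the ``$a$-subtree'' we conclude $a\in C_u\cap C_v=S_{uv}$. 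Finally, the edge statement is easy: if $(x,y)\in E(G)$ then by property (ii) some cluster $C_k$ contains both $x$ and $y$, and $C_k$ lies entirely in ${\cal T}_u$ or in ${\cal T}_v$, putting both endpoints in $A$ or both in $B$.

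Given this lemma, both conclusions fall out quickly. For the separation claim, note $C_u\setminus S_{uv}\subseteq A\setminus B$ and $C_v\setminus S_{uv}\subseteq B\setminus A$ (using $A\cap B=S_{uv}$); since no edge of $G$ joins $A\setminus B$ to $B\setminus A$, any path in $G$ from $C_u\setminus S_{uv}$ to $C_v\setminus S_{uv}$ must use a vertex of $A\cap B=S_{uv}$, which is precisely what it means for $S_{uv}$ to separate the two sets. For $S_{uv}\ne\emptyset$, I would argue by contradiction using connectivity of $G$: if $S_{uv}=\emptyset$ then $A\cap B=\emptyset$, so $A$ and $B$ partition $V$ into two nonempty sets (nonempty since $C_u\subseteq A$ and $C_v\subseteq B$) with no edges between them, contradicting the connectedness of $G$.

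The one genuinely load-bearing step is the reverse inclusion $A\cap B\subseteq S_{uv}$, since this is the only place the running intersection property is actually invoked; everything else is bookkeeping on the tree decomposition. I should also flag that property~1 implicitly requires $G$ to be connected (which I assume, as is standard for junction trees), because a junction tree of a disconnected graph can have adjacent clusters with empty separators.
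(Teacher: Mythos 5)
Your proof is correct, but there is nothing in the paper to compare it against: the paper does not prove Proposition~\ref{prop:separation} at all --- it cites the result from \citet{Robertson1986} and uses it as a known property of junction trees. Judged on its own merits, your argument is the standard one for tree decompositions: delete the edge $(C_u,C_v)$ to split ${\cal J}$ into subtrees ${\cal T}_u,{\cal T}_v$ covering vertex sets $A,B$; prove $A\cap B=S_{uv}$ via the running intersection property (equivalently, the connectedness of the set of clusters containing any fixed node); observe that every edge of $G$ lies inside $A$ or inside $B$ by Definition~\ref{def:junctiontree}(ii); and deduce both claims. Each step is sound, including the load-bearing one you identified: the tree path from a cluster of ${\cal T}_u$ containing $a$ to a cluster of ${\cal T}_v$ containing $a$ must cross the deleted edge, hence passes through both $C_u$ and $C_v$, which is exactly what lets you place $a$ in $C_u\cap C_v$. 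Your caveat about part~1 is also well taken: $S_{uv}\ne\emptyset$ genuinely requires $G$ to be connected (a junction tree of a disconnected graph can join clusters from different components by an edge with empty separator), and the paper leaves this hypothesis implicit. Note finally that under the paper's own convention that separation by the empty set means no path exists (see the footnote to Assumption~(A2)), your argument for part~2 remains valid even in that degenerate case, since it shows there is no edge, hence no path, between $A\setminus B$ and $B\setminus A$.
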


The set of nodes $S_{uv}$ on the edges are called the \textit{separators} of the junction tree.
Proposition~\ref{prop:separation} says that all clusters$  $ connected by an edge in the junction tree have at least one common node and the common nodes separate nodes in each cluster.  For example, consider the junction tree in Figure~\ref{fig:jtexample}(e) of the graph in Figure~\ref{fig:jtexample}(d).  We can infer that $1$ and $5$ are separated by $2$ and $4$.  Similarly, we can also infer that $4$ and $6$ are separated by $2$, $5$, and $8$.  It is clear that if a graphical model is defined on the graph, then the separators can be used to easily define conditional independence relationships.  For example, using Figure~\ref{fig:jtexample}(e), we can conclude that $X_1 \ind X_5$ given $X_2$ and $X_4$.  As we will see in later Sections, Proposition~\ref{prop:separation} allow the decomposition of UGMS into multiple subproblems over clusters and subsets of the separators in a junction tree.

\section{Overview of Region Graphs}
\label{sec:regiongraphs}
In this section, we show how junction trees can be represented as region graphs.  As we will see in Section~\ref{sec:ugmsjunctiontrees}, region graphs allow us to easily decompose the UGMS problem into multiple subproblems.  There are many different types of region graphs and we refer the readers to \citet{yedidia2005constructing} for a comprehensive discussion about region graphs and how they are useful for characterizing graphical models.  The region graph we present in this section differs slightly from the standard definition of region graphs.  This is mainly because our goal is to estimate edges, while the standard region graphs defined in the literature are used for computations over graphical models.

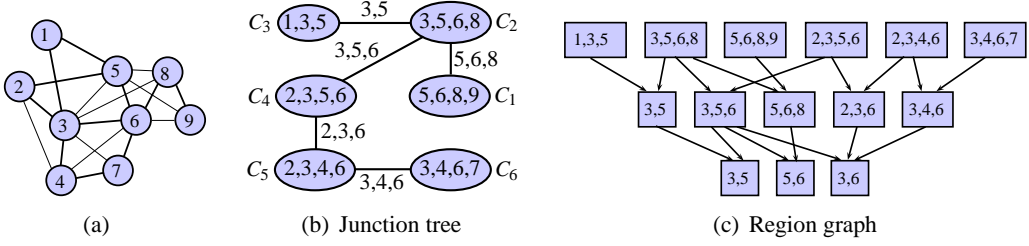
\begin{figure}
\centering
 \subfigure[]{
\scalebox{0.6} 
{
\begin{pspicture}(0,-1.9600002)(4.48,1.9600002)
\definecolor{color1603b}{rgb}{0.8,0.8,1.0}
\pscircle[linewidth=0.04,dimen=outer,fillstyle=solid,fillcolor=color1603b](0.95,1.6100001){0.35}
\usefont{T1}{ptm}{m}{n}
\rput(0.9014062,1.6149998){\large 1}
\pscircle[linewidth=0.04,dimen=outer,fillstyle=solid,fillcolor=color1603b](0.35,0.46999985){0.35}
\usefont{T1}{ptm}{m}{n}
\rput(0.33375,0.47499985){\large 2}
\pscircle[linewidth=0.04,dimen=outer,fillstyle=solid,fillcolor=color1603b](1.35,-0.39000005){0.35}
\usefont{T1}{ptm}{m}{n}
\rput(1.3128124,-0.40500018){\large 3}
\pscircle[linewidth=0.04,dimen=outer,fillstyle=solid,fillcolor=color1603b](2.51,0.8100002){0.35}
\usefont{T1}{ptm}{m}{n}
\rput(2.4771874,0.81499976){\large 5}
\pscircle[linewidth=0.04,dimen=outer,fillstyle=solid,fillcolor=color1603b](2.93,-0.2700002){0.35}
\usefont{T1}{ptm}{m}{n}
\rput(2.896406,-0.28500023){\large 6}
\pscircle[linewidth=0.04,dimen=outer,fillstyle=solid,fillcolor=color1603b](2.53,-1.3900002){0.35}
\usefont{T1}{ptm}{m}{n}
\rput(2.5032814,-1.4050002){\large 7}
\pscircle[linewidth=0.04,dimen=outer,fillstyle=solid,fillcolor=color1603b](1.27,-1.6100001){0.35}
\usefont{T1}{ptm}{m}{n}
\rput(1.2523439,-1.6250002){\large 4}
\pscircle[linewidth=0.04,dimen=outer,fillstyle=solid,fillcolor=color1603b](3.63,0.7899999){0.35}
\usefont{T1}{ptm}{m}{n}
\rput(3.5762498,0.75500005){\large 8}
\pscircle[linewidth=0.04,dimen=outer,fillstyle=solid,fillcolor=color1603b](4.13,-0.2700001){0.35}
\usefont{T1}{ptm}{m}{n}
\rput(4.1071873,-0.2850001){\large 9}
\psline[linewidth=0.04cm](1.28,1.5400001)(2.26,1.0000001)
\psline[linewidth=0.04cm](3.42,0.5400001)(3.14,0.0)
\psline[linewidth=0.04cm](2.62,0.5200001)(2.78,0.03999995)
\psline[linewidth=0.04cm](1.66,-0.36000016)(2.6,-0.28000018)
\psline[linewidth=0.04cm](3.82,0.5000001)(4.02,0.0400001)
\psline[linewidth=0.04cm](2.84,-0.57999986)(2.66,-1.0799999)
\psline[linewidth=0.04cm](1.32,-0.7199999)(1.26,-1.2799999)
\psline[linewidth=0.04cm](0.58,0.2200001)(1.08,-0.19999996)
\psline[linewidth=0.04cm](1.6,-1.56)(2.22,-1.4399999)
\psline[linewidth=0.04cm](1.02,1.2800001)(1.28,-0.0799999)
\psline[linewidth=0.04cm](0.68,0.5000001)(2.2,0.7600001)
\psline[linewidth=0.02cm](2.28,0.5800001)(1.58,-0.1599999)
\psline[linewidth=0.02cm](3.3,0.8200001)(2.84,0.8400001)
\psline[linewidth=0.02cm](3.78,-0.2799999)(3.28,-0.2799999)
\psline[linewidth=0.02cm](0.44,0.1400001)(1.06,-1.3599999)
\psline[linewidth=0.02cm](2.66,-0.4399999)(1.5,-1.3599999)
\psline[linewidth=0.02cm](2.28,-1.1599998)(1.58,-0.6199999)
\psline[linewidth=0.02cm](3.34,0.6200001)(1.62,-0.2399999)
\psline[linewidth=0.02cm](3.86,-0.0799999)(2.76,0.6200001)
\end{pspicture} 
}}
\subfigure[Junction tree]{
\scalebox{0.7} 
{
\begin{pspicture}(0,-1.8765625)(5.900781,1.8540626)
\definecolor{color1603b}{rgb}{0.8,0.8,1.0}
\psellipse[linewidth=0.04,dimen=outer,fillstyle=solid,fillcolor=color1603b](4.1771874,0.026562516)(0.78,0.41)
\usefont{T1}{ptm}{m}{n}
\rput(4.1596875,0.031562485){\large 5,6,8,9}
\psellipse[linewidth=0.04,dimen=outer,fillstyle=solid,fillcolor=color1603b](4.1971874,-1.3734375)(0.78,0.41)
\usefont{T1}{ptm}{m}{n}
\rput(4.1818748,-1.3684375){\large 3,4,6,7}
\psellipse[linewidth=0.04,dimen=outer,fillstyle=solid,fillcolor=color1603b](1.6171877,-1.3734375)(0.78,0.41)
\usefont{T1}{ptm}{m}{n}
\rput(1.613125,-1.3684375){\large 2,3,4,6}
\psellipse[linewidth=0.04,dimen=outer,fillstyle=solid,fillcolor=color1603b](1.4971877,1.4165626)(0.6,0.36)
\usefont{T1}{ptm}{m}{n}
\rput(1.41875,1.4115624){\large 1,3,5}
\psellipse[linewidth=0.04,dimen=outer,fillstyle=solid,fillcolor=color1603b](1.6371877,0.026562516)(0.78,0.41)
\usefont{T1}{ptm}{m}{n}
\rput(1.633125,0.031562485){\large 2,3,5,6}
\psellipse[linewidth=0.04,dimen=outer,fillstyle=solid,fillcolor=color1603b](4.1971874,1.4265625)(0.78,0.41)
\usefont{T1}{ptm}{m}{n}
\rput(4.17625,1.4315625){\large 3,5,6,8}
\psline[linewidth=0.04cm](4.1971874,0.4365625)(4.1971874,1.0365626)
\psline[linewidth=0.04cm](2.0771878,1.4165626)(3.4371874,1.4165626)
\psline[linewidth=0.04cm](2.1371877,0.3165625)(3.6771877,1.1565626)
\psline[linewidth=0.04cm](2.3771877,-1.3634375)(3.4371874,-1.3634375)
\psline[linewidth=0.04cm](1.6371877,-0.36343747)(1.6371877,-0.9834375)
\usefont{T1}{ptm}{m}{n}
\rput(2.7525,1.6515625){\large 3,5}
\usefont{T1}{ptm}{m}{n}
\rput(4.673125,0.7115625){\large 5,6,8}
\usefont{T1}{ptm}{m}{n}
\rput(2.4046876,0.89156246){\large 3,5,6}
\usefont{T1}{ptm}{m}{n}
\rput(2.153125,-0.66843754){\large 2,3,6}
\usefont{T1}{ptm}{m}{n}
\rput(2.8846877,-1.6284375){\large 3,4,6}
\usefont{T1}{ptm}{m}{n}
\rput(5.2518754,0.031562485){\large $C_1$}
\usefont{T1}{ptm}{m}{n}
\rput(5.269687,1.4315625){\large $C_2$}
\usefont{T1}{ptm}{m}{n}
\rput(0.5804688,1.4115624){\large $C_3$}
\usefont{T1}{ptm}{m}{n}
\rput(0.57109374,0.011562484){\large $C_4$}
\usefont{T1}{ptm}{m}{n}
\rput(0.5442188,-1.3884375){\large $C_5$}
\usefont{T1}{ptm}{m}{n}
\rput(5.2703123,-1.3884375){\large $C_6$}
\end{pspicture} }}
\subfigure[Region graph]{
\scalebox{0.55} 
{
\begin{pspicture}(0,-2.11)(11.18,2.11)
\definecolor{color0b}{rgb}{0.8,0.8,1.0}
\psframe[linewidth=0.04,dimen=outer,fillstyle=solid,fillcolor=color0b](1.52,2.11)(0.0,1.23)
\usefont{T1}{ptm}{m}{n}
\rput(0.649375,1.66){\large 1,3,5}
\psframe[linewidth=0.04,dimen=outer,fillstyle=solid,fillcolor=color0b](3.46,2.11)(1.94,1.23)
\usefont{T1}{ptm}{m}{n}
\rput(2.666875,1.68){\large 3,5,6,8}
\psframe[linewidth=0.04,dimen=outer,fillstyle=solid,fillcolor=color0b](5.38,2.11)(3.86,1.23)
\usefont{T1}{ptm}{m}{n}
\rput(4.5903125,1.68){\large 5,6,8,9}
\psframe[linewidth=0.04,dimen=outer,fillstyle=solid,fillcolor=color0b](7.34,2.11)(5.82,1.23)
\usefont{T1}{ptm}{m}{n}
\rput(6.56375,1.68){\large 2,3,5,6}
\psframe[linewidth=0.04,dimen=outer,fillstyle=solid,fillcolor=color0b](9.28,2.11)(7.76,1.23)
\usefont{T1}{ptm}{m}{n}
\rput(8.50375,1.68){\large 2,3,4,6}
\psframe[linewidth=0.04,dimen=outer,fillstyle=solid,fillcolor=color0b](11.18,2.11)(9.66,1.23)
\usefont{T1}{ptm}{m}{n}
\rput(10.3925,1.68){\large 3,4,6,7}
\psframe[linewidth=0.04,dimen=outer,fillstyle=solid,fillcolor=color0b](4.4,0.43)(3.14,-0.45)
\usefont{T1}{ptm}{m}{n}
\rput(3.7353125,-0.04){\large 3,5,6}
\psframe[linewidth=0.04,dimen=outer,fillstyle=solid,fillcolor=color0b](6.08,0.43)(4.82,-0.45)
\usefont{T1}{ptm}{m}{n}
\rput(5.40375,-0.04){\large 5,6,8}
\psframe[linewidth=0.04,dimen=outer,fillstyle=solid,fillcolor=color0b](7.76,0.43)(6.5,-0.45)
\usefont{T1}{ptm}{m}{n}
\rput(7.10375,-0.04){\large 2,3,6}
\psframe[linewidth=0.04,dimen=outer,fillstyle=solid,fillcolor=color0b](9.42,0.43)(8.16,-0.45)
\usefont{T1}{ptm}{m}{n}
\rput(8.755313,-0.04){\large 3,4,6}
\psframe[linewidth=0.04,dimen=outer,fillstyle=solid,fillcolor=color0b](4.72,-1.23)(3.78,-2.11)
\usefont{T1}{ptm}{m}{n}
\rput(4.203125,-1.7){\large 3,5}
\psframe[linewidth=0.04,dimen=outer,fillstyle=solid,fillcolor=color0b](6.06,-1.23)(5.12,-2.11)
\usefont{T1}{ptm}{m}{n}
\rput(5.5521874,-1.7){\large 5,6}
\psframe[linewidth=0.04,dimen=outer,fillstyle=solid,fillcolor=color0b](7.38,-1.23)(6.44,-2.11)
\usefont{T1}{ptm}{m}{n}
\rput(6.8753123,-1.7){\large 3,6}
\psframe[linewidth=0.04,dimen=outer,fillstyle=solid,fillcolor=color0b](2.72,0.43)(1.78,-0.45)
\usefont{T1}{ptm}{m}{n}
\rput(2.203125,-0.04){\large 3,5}
\psline[linewidth=0.04cm,arrowsize=0.05291667cm 2.0,arrowlength=1.4,arrowinset=0.4]{->}(0.64,1.23)(2.0,0.43)
\psline[linewidth=0.04cm,arrowsize=0.05291667cm 2.0,arrowlength=1.4,arrowinset=0.4]{->}(2.44,1.25)(2.28,0.43)
\psline[linewidth=0.04cm,arrowsize=0.05291667cm 2.0,arrowlength=1.4,arrowinset=0.4]{->}(2.68,1.27)(3.56,0.41)
\psline[linewidth=0.04cm,arrowsize=0.05291667cm 2.0,arrowlength=1.4,arrowinset=0.4]{->}(3.12,1.23)(5.2,0.41)
\psline[linewidth=0.04cm,arrowsize=0.05291667cm 2.0,arrowlength=1.4,arrowinset=0.4]{->}(6.3,1.25)(4.0,0.43)
\psline[linewidth=0.04cm,arrowsize=0.05291667cm 2.0,arrowlength=1.4,arrowinset=0.4]{->}(4.62,1.27)(5.38,0.43)
\psline[linewidth=0.04cm,arrowsize=0.05291667cm 2.0,arrowlength=1.4,arrowinset=0.4]{->}(6.46,1.27)(6.94,0.43)
\psline[linewidth=0.04cm,arrowsize=0.05291667cm 2.0,arrowlength=1.4,arrowinset=0.4]{->}(8.22,1.27)(7.24,0.43)
\psline[linewidth=0.04cm,arrowsize=0.05291667cm 2.0,arrowlength=1.4,arrowinset=0.4]{->}(8.44,1.27)(8.64,0.43)
\psline[linewidth=0.04cm,arrowsize=0.05291667cm 2.0,arrowlength=1.4,arrowinset=0.4]{->}(10.18,1.25)(9.0,0.43)
\psline[linewidth=0.04cm,arrowsize=0.05291667cm 2.0,arrowlength=1.4,arrowinset=0.4]{->}(2.22,-0.41)(4.1,-1.25)
\psline[linewidth=0.04cm,arrowsize=0.05291667cm 2.0,arrowlength=1.4,arrowinset=0.4]{->}(3.56,-0.43)(4.34,-1.25)
\psline[linewidth=0.04cm,arrowsize=0.05291667cm 2.0,arrowlength=1.4,arrowinset=0.4]{->}(3.88,-0.43)(5.44,-1.25)
\psline[linewidth=0.04cm,arrowsize=0.05291667cm 2.0,arrowlength=1.4,arrowinset=0.4]{->}(5.48,-0.41)(5.62,-1.29)
\psline[linewidth=0.04cm,arrowsize=0.05291667cm 2.0,arrowlength=1.4,arrowinset=0.4]{->}(7.08,-0.43)(6.86,-1.23)
\psline[linewidth=0.04cm,arrowsize=0.05291667cm 2.0,arrowlength=1.4,arrowinset=0.4]{->}(8.74,-0.43)(7.0,-1.25)
\psline[linewidth=0.04cm,arrowsize=0.05291667cm 2.0,arrowlength=1.4,arrowinset=0.4]{->}(4.18,-0.43)(6.7,-1.23)
\end{pspicture} 
}
}
\caption{(a) An example of $H$. (b) A junction tree representation of $H$. (c) A region graph representation of (b) computed using Algorithm~\ref{alg:constructregion}.}
\label{fig:exampleregiongraph}
\end{figure}

\begin{algorithm}
\label{alg:constructregion}
\DontPrintSemicolon
\caption{Constructing region graphs}
\textbf{Input:} A junction tree ${\cal J} = ({\cal C},E({\cal J}))$ of a graph $H$. \;
\textbf{Output:} A region graph ${\cal G} = ({\cal R}, \vec{E}({\cal G}))$. \;
\nl ${\cal R}^1 = {\cal C}$, where ${\cal C}$ are the clusters of the junction tree ${\cal J}$. \;
\nl Let ${\cal R}^2$ be all the separators of ${\cal J}$, i.e., ${\cal R}^2 = \{S_{uv} = C_u \cap C_v : (C_u,C_v) \in E({\cal J}) \}$. \;
 \nl To construct ${\cal R}^3$, find all possible pairwise intersections of regions in ${\cal R}^2$.  Add all intersecting regions with cardinality greater than one to ${\cal R}^3$. \;
 \nl Repeat previous step to construct ${\cal R}^4,\ldots,{\cal R}^L$ until there are no more intersecting regions of cardinality greater than one. \;
 \nl For $R \in {\cal R}^{\ell}$ and $S \in {\cal R}^{\ell+1}$, add the edge $(R,S)$ to $\vec{E}({\cal G})$ if $S \subseteq R$. \;
 \nl Let ${\cal R} = \{ {\cal R}^1,\ldots,{\cal R}^L \}$.
\end{algorithm}

A \textit{region} is a collection of nodes, which in this paper can be the clusters of the junction tree, separators of the junction tree, or subsets of the separators.  A region graph ${\cal G} = ({\cal R},\vec{E}({\cal G}))$ is a directed graph where the vertices are regions and the edges represent directed edges from one region to another.  We use the notation $\vec{E}(\cdot)$ to emphasize that region graphs contain directed edges.  A description of region graphs is given as follows:
\begin{itemize}
 \item The set $\vec{E}({\cal G})$ contains directed edges so that if $(R,S) \in \vec{E}({\cal G})$, then there exists a directed edge from region $R$ to region $S$.
 \item Whenever $R \longrightarrow S$, then $S \subseteq R $.
\end{itemize}

Algorithm~\ref{alg:constructregion} outlines an algorithm to construct region graphs given a junction tree representation of a graph $H$.  We associate a label $l$ with every region in ${\cal R}$ and group regions with the same label to partition ${\cal R}$ into $L$ groups ${\cal R}^1,\ldots,{\cal R}^L$. In Algorithm~\ref{alg:constructregion}, we initialize ${\cal R}^1$ and ${\cal R}^2$ to be the clusters and separators of a junction tree ${\cal J}$, respectively, and then iteratively find ${\cal R}^3,\ldots,{\cal R}^L$ by computing all possible intersections of regions with the same label.  The edges in $\vec{E}({\cal G})$ are only drawn from a region in ${\cal R}^{l}$ to a region in ${\cal R}^{l+1}$.  Figure~\ref{fig:exampleregiongraph}(c) shows an example of a region graph computed using the junction tree in Figure~\ref{fig:exampleregiongraph}(b).

\begin{remark}
Note that the construction of the region graph depends on the junction tree.  Using methods in \citet{VatsMouraGraph2012}, we can always construct junction trees such that the region graph only has two sets of regions, namely the clusters of the junction tree and the separators of the junction tree.  However, in this case, the size of the regions or clusters may be too large.  This may not be desirable since the computational complexity of applying UGMS algorithms to region graphs, as shown in Section~\ref{sec:ugmsjunctiontrees}, depends on the size of the regions.
\end{remark}

\begin{remark}[Region graph vs. Junction tree]
For every junction tree, Algorithm~\ref{alg:constructregion} outputs a unique region graph.  The junction tree only characterizes the relationship between the clusters in a junction tree.  A region graph extends the junction tree representation to characterize the relationships between the clusters as well as the separators.  For example, in Figure~\ref{fig:exampleregiongraph}(c), the region $\{5,6\}$ is in the third row and is a subset of two separators of the junction tree.  Thus, the only difference between the region graph and the junction tree is the additional set of regions introduced in ${\cal R}^3, \ldots, {\cal R}^L$.  
\end{remark}

\begin{remark}
From the construction in Algorithm~\ref{alg:constructregion}, ${\cal R}$ may have two or more regions that are the same but have different labels.  For example, in Figure~\ref{fig:exampleregiongraph}(c), the region $\{3,5\}$ is in both ${\cal R}^2$ and ${\cal R}^3$.  We can avoid this situation by removing $\{3,5\}$ from ${\cal R}^2$ and adding an edge from the region $\{1,3,5\}$ in ${\cal R}^1$ to the region $\{3,5\}$ in ${\cal R}^3$.  For notational simplicity and for the purpose of illustration, we allow for duplicate regions.  This does not change the theory or the algorithms that we develop.
\end{remark}

\section{Applying UGMS to Region Graphs}
\label{sec:applyingUGMSSubset}

Before presenting our framework for decomposing UGMS into multiple subproblems, we first show how UGMS algorithms can be applied to estimate a subset of edges in a region of a region graph.  In particular, for a region graph ${\cal G} = ({\cal R},\vec{E}({\cal G}))$, \textit{we want to identify a set of edges in the induced subgraph $H[R]$ that can be estimated by applying a UGMS algorithm to either $R$ or a set of vertices that contains $R$}.  With this goal in mind, define the children $ch(R)$ of a region $R$ as follows:
\begin{align}
\text{Children: } ch(R) = \left\{S : (R,S) \in \vec{{\cal E}} \right\} \,.  \label{eq:children}
\end{align}
We say $R$ connects to $S$ if $(R,S) \in \vec{E}({\cal G})$.  Thus, the children in (\ref{eq:children}) consist of all regions that $R$ connects to.  For example, in Figure~\ref{fig:exampleregiongraph}(c), 
\[ch(\{2,3,4,6\}) = \{\{2,3,6\},\{3,4,6\}\}\,.\]  
If there exists a direct path from $S$ to $R$, we say $S$ is an \emph{ancestor} of $R$.  The set of all ancestors of $R$ is denoted by $an(R)$.  For example, in Figure~\ref{fig:exampleregiongraph}(c), 
\begin{align*}
an(\{5,6,8,9\}) &= \emptyset, \\
an(\{3,5,6\}) &= \{\{3,5,6,8\},\{2,3,5,6\}\}, \text{and} \\
an(\{3,6\}) &= \{\{3,5,6\},\{2,3,6\},\{3,4,6\},
\{2,3,5,6\},\{2,3,4,6\},\{3,4,6,7\},\{3,5,6,8\} \} \}.
\end{align*}
The notation $\overline{R}$ takes the union of all regions in $an(R)$ and $R$ so that
\begin{equation}
\overline{R} = \bigcup_{S \in \{an(R),R\}} S \,. \label{eq:rbar}
\end{equation}
Thus, $\overline{R}$\textit{ contains the union of all clusters in the junction tree that contain $R$}.  An illustration of some of the notations defined on region graphs is shown in Figure~\ref{fig:NotationRegionGraphs}.  Using $ch(R)$, define the subgraph $H'_R$ as
\footnote{For graphs $G_1$ and $G_2$, $E(G_1 \backslash G_2) = E(G_1) \backslash E(G_2)$ and $E(G_1 \cup G_2) = E(G_1) \cup E(G_2)$}
\begin{equation}
H_{R}' = H[R] \backslash \left\{\cup_{S \in ch(R)} K_S \right\} \,, \label{eq:haprime}
\end{equation}
where $H[R]$ is the induced subgraph that contains all edges in $H$ over the region $R$ and $K_S$ is the complete graph over $S$.  In words, \textit{$H'_R$ is computed by removing all edges from $H[R]$ that are contained in another separator}.  For example, in Figure~\ref{fig:exampleregiongraph}(c), when $R = \{5,6,8\}$, $E(H'_R) = \{ (5,8),(6,8) \}$.
The subgraph $H_R'$ is important since it identifies the edges that can be estimated when applying a UGMS algorithm to the set of vertices $\overline{R}$.

\begin{proposition}
\label{prop:regionest}
Suppose $E(G^*) \subseteq E(H)$.  All edges in $H'_R$ can be estimated by solving a UGMS problem over the vertices $\overline{R}$.
\end{proposition}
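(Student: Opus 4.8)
The plan is to reduce the statement to a purely combinatorial separation property of the junction tree, with the probabilistic content confined to a single opening step. By Assumption~\ref{ass:cons}, applying $\Psi$ to the samples of $X_{\overline{R}}$ consistently recovers the minimal I-map of the marginal distribution $P_{X_{\overline{R}}}$. Since $P_X>0$, this marginal is positive as well, so by the pairwise Markov property the estimated graph contains an edge $(i,j)$ with $i,j\in\overline{R}$ if and only if $X_i\ind X_j\mid X_{\overline{R}\setminus\{i,j\}}$ \emph{fails}. Hence it is enough to show that for every candidate edge $(i,j)\in E(H'_R)$, the relation $X_i\ind X_j\mid X_{\overline{R}\setminus\{i,j\}}$ holds exactly when $(i,j)\notin E(G^*)$. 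One direction is immediate: if $(i,j)\in E(G^*)$, then no conditioning set separates the adjacent nodes $i$ and $j$, so, under the faithfulness implicit in Assumption~\ref{ass:cons}, $X_i$ and $X_j$ stay dependent given $X_{\overline{R}\setminus\{i,j\}}$ and the edge is retained. The substance is the converse.

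For the converse I would invoke the global Markov property (Definition~\ref{def:ugm}): it suffices to show that whenever $(i,j)\in E(H'_R)$ and $(i,j)\notin E(G^*)$, the set $\overline{R}\setminus\{i,j\}$ separates $i$ and $j$ in $G^*$. Because $E(G^*)\subseteq E(H)$, every $i$--$j$ path in $G^*$ is an $i$--$j$ path in $H$, and since $(i,j)\notin E(G^*)$ each such path has length at least two. The whole claim therefore reduces to the following combinatorial fact about $H$: \emph{for $(i,j)\in E(H'_R)$, every path of length at least two between $i$ and $j$ in $H$ passes through a vertex of $\overline{R}\setminus\{i,j\}$.}

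To prove this I would work with the subtree $T_R=\{C\in{\cal C}: R\subseteq C\}$ of clusters containing $R$. The running intersection property (Definition~\ref{def:junctiontree}(iii)) shows $T_R$ is connected, and by the remark following \eqref{eq:rbar} we have $\overline{R}=\bigcup_{C\in T_R}C$. Suppose a path $i=v_0,v_1,\dots,v_m=j$ of length $m\ge 2$ avoids $\overline{R}\setminus\{i,j\}$; then all its internal vertices lie outside $\overline{R}$ and, being connected by the path, lie in a single connected component of the graph obtained from $H$ by deleting $\overline{R}$. Using the running intersection property one shows each such component attaches to $T_R$ through exactly one junction-tree separator $S=C_a\cap C_b$ with $C_a\in T_R$, $C_b\notin T_R$, and that any vertex lying both in $\overline{R}$ and in a cluster of that hanging subtree must belong to $S$. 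The edges $(i,v_1)$ and $(v_{m-1},j)$ place $i$ and $j$ in clusters of this one hanging subtree while they also lie in $\overline{R}$, forcing $i,j\in S$. Since $R\not\subseteq S$ (otherwise $C_b\supseteq R$ would put $C_b\in T_R$), the pair $\{i,j\}$ lies in the proper subset $S\cap R$ of $R$.

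The final step—and the one I expect to be the main obstacle—is to turn $\{i,j\}\subseteq S\cap R$ into a contradiction with $(i,j)\in E(H'_R)$, i.e.\ to exhibit a child $S'\in ch(R)$ with $i,j\in S'$, so that $(i,j)\in K_{S'}$ is removed from $H[R]$ in \eqref{eq:haprime}. When $R$ is a cluster this is immediate, since $T_R=\{R\}$ and its boundary separators $S$ are precisely the junction-tree separators incident to $R$, which by Algorithm~\ref{alg:constructregion} are exactly the children $ch(R)$. For a region $R$ in a deeper layer ${\cal R}^{\ell}$, however, $S\cap R$ is an intersection of $R$ with a separator and need not itself be a child; here I would argue by induction on $\ell$, using that each region in ${\cal R}^{\ell+1}$ is a pairwise intersection of regions in ${\cal R}^{\ell}$, to produce a child region still containing both $i$ and $j$. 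Carefully matching the boundary separators of $T_R$ to the children generated by Algorithm~\ref{alg:constructregion} across all layers is the delicate part; the rest is either standard junction-tree bookkeeping or the one-line probabilistic reduction of the first paragraph.
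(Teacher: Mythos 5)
You take essentially the same route as the paper: the probabilistic content is dispatched through the observation that $\Psi$ applied to $X_{\overline{R}}$ consistently recovers the minimal I-map of the marginal distribution (the paper's Proposition~\ref{prop:marg} on marginal graphs), and everything is reduced to your displayed combinatorial fact, which is precisely the paper's Lemma~\ref{prop:pathsIn}: for $(i,j)\in E(H'_R)$, every non-direct $i$--$j$ path in $H$ must meet $\overline{R}\setminus\{i,j\}$. For first-row regions (clusters of the junction tree) your hanging-subtree argument is correct, and it is the paper's own argument carried out in more detail: an escaping path forces $i$ and $j$ into a boundary separator $S$ of the subtree of clusters containing $R$, and for a cluster $R$ such an $S$ is itself a child of $R$ under Algorithm~\ref{alg:constructregion}, contradicting $(i,j)\in E(H'_R)$ by \eqref{eq:haprime}.

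The gap is exactly where you flag it, and it is genuine: for regions in rows $\ell\ge 2$ your argument stops at $\{i,j\}\subseteq S\cap R$ for a boundary separator $S$, and the step that this pair must then be covered by a \emph{child} of $R$ is only promised as an induction on $\ell$ that you never carry out. The paper's proof contains the one idea you are missing, and it is not a layer-by-layer induction: it merges all clusters containing $R$ (hence the vertex set $\overline{R}$) into a single cluster. Collapsing a connected subtree of a junction tree preserves the running-intersection property, so this yields a valid junction tree of $H$ in which $\overline{R}$ is a cluster; $\overline{R}$ therefore sits in the first row of the associated region graph, and the row-one argument is repeated verbatim. (Even with this device one must still match the children of the merged cluster with children of $R$ in the original region graph --- by running intersection the new separators incident to $\overline{R}$ are exactly the old boundary separators $S=C_a\cap C_b$ --- and the paper is silent on this last point, so your instinct that something delicate happens here is sound.) For completeness, your induction can be made to work: writing $R=R_1\cap R_2$ with $R_1,R_2\in{\cal R}^{\ell-1}$, running intersection gives a boundary separator $S'$ of the subtree of clusters containing $R_1$ with $S\cap R\subseteq S'\cap R_1$; the inductive hypothesis produces $R_1'\in{\cal R}^{\ell-1}$ with $S'\cap R_1\subseteq R_1'$, and then $R\cap R_1\cap R_1'$ is a region of ${\cal R}^{\ell+1}$ contained in $R$ and containing $\{i,j\}$, hence a child of $R$. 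But as written, your proposal establishes the proposition only for clusters, which is not enough: the sub-separator regions are exactly the ones Algorithm~\ref{alg:jtframework} applies Algorithm~\ref{alg:ugmsregion} to in its later iterations.
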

\begin{proof}
See Appendix~\ref{sub:proofpropregioest}.
\end{proof}

\begin{algorithm}[tb]
\caption{UGMS over regions in a region graph}
\label{alg:ugmsregion}
\begin{algorithmic}[1]
   	\STATE {\bfseries Input:} Region graph ${\cal G} = ({\cal R},\vec{E}({\cal G}))$, a region $R$, observations $\Xf^n$, and a UGMS algorithm $\Psi$.
	\STATE Compute $H_R'$ using (\ref{eq:haprime}) and $\overline{R}$ using (\ref{eq:rbar}).
	\STATE Apply $\Psi$ to $\Xf^n_{\overline{R}}$ to estimate edges in $H'_R$. See Appendix~\ref{app:examples} for examples.
	\STATE \textbf{Return} the estimated edges $\widehat{E}_R$.
\end{algorithmic}
\end{algorithm}

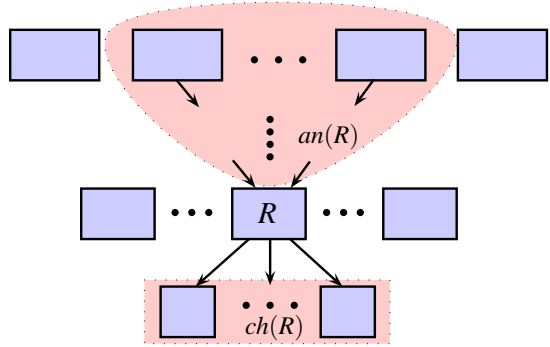
\begin{figure}
\centering
\scalebox{0.8} 
{
\begin{pspicture}(0,-2.9292195)(11.690312,2.9392195)
\definecolor{color2875b}{rgb}{0.8,0.8,1.0}
\definecolor{color410b}{rgb}{1.0,0.8,0.8}
\psbezier[linewidth=0.02,fillstyle=solid,fillcolor=color410b,linestyle=dotted,dotsep=0.16cm](2.6525,2.2731726)(3.1213067,2.9231725)(7.903931,2.9292195)(8.3725,2.2531726)(8.841069,1.5771257)(6.29176,-0.32682744)(5.2925,-0.2980469)(4.29324,-0.26926637)(2.1836934,1.6231728)(2.6525,2.2731726)
\psframe[linewidth=0.02,linestyle=dotted,dotsep=0.16cm,dimen=outer,fillstyle=solid,fillcolor=color410b](7.3325,-1.8492196)(3.2325,-2.9292195)
\psframe[linewidth=0.04,dimen=outer,fillstyle=solid,fillcolor=color2875b](2.5125,2.3207805)(0.9925,1.4407805)
\psframe[linewidth=0.04,dimen=outer,fillstyle=solid,fillcolor=color2875b](4.5525,2.3207805)(3.0325,1.4407805)
\psframe[linewidth=0.04,dimen=outer,fillstyle=solid,fillcolor=color2875b](7.9325,2.3207805)(6.4125,1.4407805)
\psframe[linewidth=0.04,dimen=outer,fillstyle=solid,fillcolor=color2875b](9.9525,2.3207805)(8.4325,1.4407805)
\psdots[dotsize=0.12](5.0725,1.8007805)
\psdots[dotsize=0.12](5.4725,1.8007805)
\psdots[dotsize=0.12](5.8725,1.8007805)
\psframe[linewidth=0.04,dimen=outer,fillstyle=solid,fillcolor=color2875b](3.4325,-0.31921953)(2.1725,-1.1992195)
\psframe[linewidth=0.04,dimen=outer,fillstyle=solid,fillcolor=color2875b](5.9425,-0.31921953)(4.6825,-1.1992195)
\psframe[linewidth=0.04,dimen=outer,fillstyle=solid,fillcolor=color2875b](8.4525,-0.31921953)(7.1925,-1.1992195)
\psframe[linewidth=0.04,dimen=outer,fillstyle=solid,fillcolor=color2875b](4.4325,-1.9492195)(3.4925,-2.8292196)
\psframe[linewidth=0.04,dimen=outer,fillstyle=solid,fillcolor=color2875b](7.0925,-1.9492195)(6.1525,-2.8292196)
\psdots[dotsize=0.12](3.7525,-0.7392195)
\psdots[dotsize=0.12](4.0325,-0.7392195)
\psdots[dotsize=0.12](4.3125,-0.7392195)
\psdots[dotsize=0.12](6.2725,-0.7392195)
\psdots[dotsize=0.12](6.5525,-0.7392195)
\psdots[dotsize=0.12](6.8325,-0.7392195)
\usefont{T1}{ptm}{m}{n}
\rput(5.286523,-0.74421954){\Large $R$}
\psdots[dotsize=0.12](4.9325,-2.2792196)
\psdots[dotsize=0.12](5.3325,-2.2792196)
\psdots[dotsize=0.12](5.7325,-2.2792196)
\psdots[dotsize=0.12](5.3325,0.81078047)
\psdots[dotsize=0.12](5.3325,0.6041138)
\psdots[dotsize=0.12](5.3325,0.39744714)
\psdots[dotsize=0.12](5.3325,0.19078048)
\psline[linewidth=0.04cm,arrowsize=0.05291667cm 3.0,arrowlength=1.4,arrowinset=0.4]{->}(4.9925,-1.1692195)(4.0925,-1.9692194)
\psline[linewidth=0.04cm,arrowsize=0.05291667cm 3.0,arrowlength=1.4,arrowinset=0.4]{->}(5.3325,-1.1692195)(5.3325,-1.9492195)
\psline[linewidth=0.04cm,arrowsize=0.05291667cm 3.0,arrowlength=1.4,arrowinset=0.4]{->}(5.6525,-1.1692195)(6.5325,-1.9692194)
\usefont{T1}{ptm}{m}{n}
\rput(5.406387,-2.65){\large $ch(R)$}
\usefont{T1}{ptm}{m}{n}
\rput(6.3060355,0.5206175){\large ${an(R)}$}
\psline[linewidth=0.04cm,arrowsize=0.05291667cm 3.0,arrowlength=1.4,arrowinset=0.4]{->}(4.7125,0.13078047)(5.0925,-0.32921952)
\psline[linewidth=0.04cm,arrowsize=0.05291667cm 3.0,arrowlength=1.4,arrowinset=0.4]{->}(5.9925,0.11078046)(5.6725,-0.32921952)
\psline[linewidth=0.04cm,arrowsize=0.05291667cm 3.0,arrowlength=1.4,arrowinset=0.4]{->}(3.7725,1.4707805)(4.1525,1.0107805)
\psline[linewidth=0.04cm,arrowsize=0.05291667cm 3.0,arrowlength=1.4,arrowinset=0.4]{->}(7.0525,1.4707805)(6.7325,1.0307804)
\end{pspicture} 
}
\caption{Notations defined on region graphs.  The children $ch(R)$ are the set of regions that $R$ connects to.  The ancestors $an(R)$ are all the regions that have a directed path to the region $R$.  The set $\overline{R}$ takes the union of all regions in $an(R)$ and $R$.}
\label{fig:NotationRegionGraphs}
\end{figure}

Proposition~\ref{prop:regionest} says that all edges in $H_R'$ can be estimated by applying a UGMS algorithm to the set of vertices $\overline{R}$.  The intuition behind the result is that only those edges in the region $R$ can be estimated whose Markov properties can be deduced using the vertices in $\overline{R}$.  Moreover, the edges \textit{not estimated} in $H[R]$ share an edge with another region that does not contain all the vertices in $R$.   Algorithm~\ref{alg:ugmsregion} summarizes the steps involved in estimating the edges in $H_R'$ using the UGMS algorithm $\Psi$ defined in Section~\ref{subsec:ugmsps}.  Some examples on how to use Algorithm~\ref{alg:ugmsregion} to estimate some edges of the graph in Figure~\ref{fig:exampleregiongraph}(a) using the region graph in  Figure~\ref{fig:exampleregiongraph}(c) are described as follows.
\begin{enumerate}
\item Let $R = \{1,3,5\}$.  This region only connects to $\{3,5\}$.  This means that all edges, except the edge $(3,5)$ in $H[R]$, can be estimated by applying $\Psi$ to $R$.
\item Let $R = \{3,5,6\}$.  The children of this region are $\{3,5\}$, $\{5,6\}$, and $\{3,6\}$.  This means that $H'_R = \emptyset$, i.e., no edge over $H[R]$ can be estimated by applying $\Psi$ to $\{3,5,6\}$.
\item Let $R = \{3,4,6\}$.  This region only connects to $\{3,6\}$.  Thus, all edges except $(3,6)$ can be estimated.  The regions $\{2,3,4,6\}$ and $\{3,4,6,7\}$ connect to $R$, so $\Psi$ needs to be applied to $\overline{R} = \{2,3,4,6,7\}$.
\end{enumerate}

\section{UGMS Using Junction Trees: A General Framework}
\label{sec:ugmsjunctiontrees}

\begin{table}
\centering
    \begin{tabular}{l|l}
        Notation & Description  \\ \hline \hline
        $G^* = (V,E(G^*))$ & Unknown graph that we want to estimate.  \\ 
		$H$ & Known graph such that $E(G^*) \subseteq E(H)$.\\
		${\cal G} = ({\cal R},\vec{E}({\cal G}))$ & Region graph of $H$ constructed using Algorithm~\ref{alg:constructregion}.\\
		${\cal R} = ({\cal R}^1,\ldots,{\cal R}^L)$ & Partitioning of the regions in ${\cal R}$ into $L$ labels. \\
		$\overline{R}$ & The set of vertices used when applying $\Psi$ to estimate edges over $R$.  \\
		&See (\ref{eq:rbar}) for definition. \\
		$H_R'$ & Edges in $H[R]$ that can be estimated using Algorithm~\ref{alg:ugmsregion}.  \\
		&See (\ref{eq:haprime}) for definition. \\
    \hline
    \end{tabular}
\smallskip
\caption{A summary of some notations.}
\label{tab:notations}
\end{table}

In this section, we present the junction tree framework for UGMS using the results from Sections~\ref{sec:regiongraphs}-\ref{sec:applyingUGMSSubset}. Section~\ref{subsec:description} presents the junction tree framework.  Section~\ref{subsec:compcomplexity} discusses the computational complexity of the framework.  Section~\ref{subsec:whyrg} highlights the advantages of using junction trees for UGMS using some examples.  We refer to Table~\ref{tab:notations} for a summary of all the notations that we use in this section.  

\subsection{Description of Framework}
\label{subsec:description}

Recall that Algorithm~\ref{alg:ugmsregion} shows that to estimate a subset of edges in $H[R]$, where $R$ is a region in the region graph ${\cal G}$, the UGMS algorithm $\Psi$ in Assumption~\ref{ass:cons} needs to be applied to the set $\overline{R}$ defined in (\ref{eq:rbar}).  Given this result, a straightforward approach to decomposing the UGMS problem is to apply Algorithm~\ref{alg:ugmsregion} to each region $R$ and combine all the estimated edges.  This will work since for any $R,S \in {\cal R}$ such that $R \ne S$, $E(H_R') \cap E(H_S') = \emptyset$.  This means that each application of Algorithm~\ref{alg:ugmsregion} estimates a different set of edges in the graph.  However, for some edges, this may require applying a UGMS algorithm to a large set of nodes.  For example, in Figure~\ref{fig:exampleregiongraph}(c), when applying Algorithm~\ref{alg:ugmsregion}  to $R = \{3,6\}$, the UGMS algorithm needs to be applied to $\overline{R} = \{2,3,4,5,6,7,8\}$, which is almost the full set of vertices.  To reduce the problem size of the subproblems, we  apply Algorithms~\ref{alg:constructregion} and \ref{alg:ugmsregion}  in an iterative manner as outlined in Algorithm~\ref{alg:jtframework}.

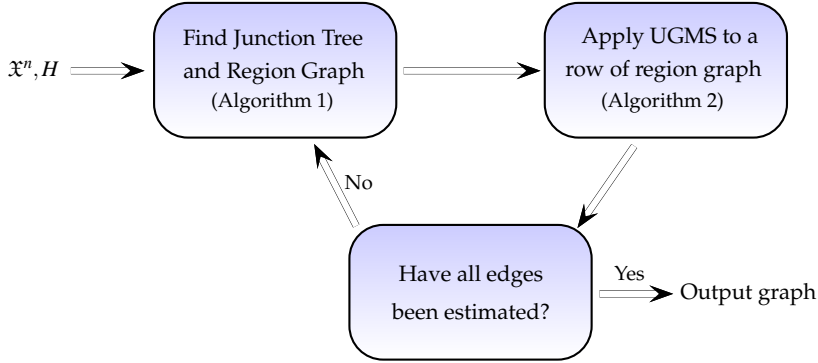
\begin{figure}[h]
\centering
\scalebox{0.7} 
{
\begin{pspicture}(0,-3.41)(15.24,3.41)
\definecolor{color0b}{rgb}{0.8,0.8,1.0}
\psframe[linewidth=0.04,framearc=0.5,dimen=outer,fillstyle=gradient,gradlines=2000,gradbegin=color0b,gradend=white,gradmidpoint=1.0,fillcolor=color0b](14.78,3.41)(10.14,0.81)
\usefont{T1}{ppl}{m}{n}
\rput(12.54,2.715){\large Apply UGMS to a}
\usefont{T1}{ppl}{m}{n}
\rput(12.42,2.095){\large row of region graph}
\usefont{T1}{ppl}{m}{n}
\rput(12.4,1.49){(Algorithm 2)}
\psframe[linewidth=0.04,framearc=0.5,dimen=outer,fillstyle=gradient,gradlines=2000,gradbegin=color0b,gradend=white,gradmidpoint=1.0,fillcolor=color0b](7.36,3.41)(2.72,0.81)
\usefont{T1}{ppl}{m}{n}
\rput(5.0,2.705){\large Find Junction Tree}
\usefont{T1}{ppl}{m}{n}
\rput(5.02,2.065){\large and Region Graph}
\usefont{T1}{ppl}{m}{n}
\rput(4.99,1.5){(Algorithm 1)}
\psframe[linewidth=0.04,framearc=0.5,dimen=outer,fillstyle=gradient,gradlines=2000,gradbegin=color0b,gradend=white,gradmidpoint=1.0,fillcolor=color0b](11.07,-0.81)(6.43,-3.41)
\usefont{T1}{ppl}{m}{n}
\rput(8.76,-1.795){\large Have all edges}
\usefont{T1}{ppl}{m}{n}
\rput(8.76,-2.475){\large been estimated?}
\psline[linewidth=0.0080cm,arrowsize=0.05291667cm 2.0,arrowlength=1.4,arrowinset=0.4,doubleline=true,doublesep=0.12]{->}(7.48,2.1)(10.08,2.1)
\psline[linewidth=0.0080cm,arrowsize=0.05291667cm 2.0,arrowlength=1.4,arrowinset=0.4,doubleline=true,doublesep=0.12]{->}(11.86,0.67)(10.78,-0.89)
\psline[linewidth=0.0080cm,arrowsize=0.05291667cm 2.0,arrowlength=1.4,arrowinset=0.4,doubleline=true,doublesep=0.12]{->}(6.58,-0.85)(5.78,0.71)
\psline[linewidth=0.0080cm,arrowsize=0.05291667cm 2.0,arrowlength=1.4,arrowinset=0.4,doubleline=true,doublesep=0.12]{->}(1.16,2.1)(2.56,2.1)
\usefont{T1}{ptm}{m}{n}
\rput(0.45,2.1){\large $\Xf^n,H$}
\usefont{T1}{ppl}{m}{n}
\rput(14.03,-2.14){\large Output graph}
\psline[linewidth=0.0080cm,arrowsize=0.05291667cm 2.0,arrowlength=1.4,arrowinset=0.4,doubleline=true,doublesep=0.12]{->}(11.2,-2.14)(12.6,-2.14)
\usefont{T1}{ppl}{m}{n}
\rput(6.63,-0.02){No}
\usefont{T1}{ppl}{m}{n}
\rput(11.77,-1.74){Yes}
\end{pspicture} 
}
\caption{A high level overview of the junction tree framework for UGMS in Algorithm~\ref{alg:jtframework}.}
\label{fig:description}
\end{figure}

\begin{algorithm}[h]
\label{alg:jtframework}
\caption{Junction Tree Framework for UGMS}
See Table~\ref{tab:notations} for notations.
\begin{enumerate}[Step 1.]
 \item Initialize $\widehat{G}$ so that $E(\widehat{G}) = \emptyset$ and find the region graph ${\cal G}$ of $H$.
 \item Find the smallest $\ell$ such that there exists a region $R \in {\cal R}^{\ell}$ such that $E(H'_R) \ne \emptyset$.
 \item Apply Algorithm~\ref{alg:ugmsregion} to each region in ${\cal R}^{\ell}$.
 \item Add all estimated edges to $\widehat{G}$ and \textit{remove edges} from $H$ that have been estimated.  Now $H \cup \widehat{G}$ contains all the edges in $G^*$.
 \item Compute a new junction tree and region graph ${\cal G}$ using the graph $\widehat{G} \cup H$.
 \item If $E(H) = \emptyset$, stop the algorithm, else go to Step 2.
\end{enumerate}
\end{algorithm}

Figure~\ref{fig:description} shows a high level description of Algorithm~\ref{alg:jtframework}.  We first find a junction tree and then a region graph of the graph $H$ using Algorithm~\ref{alg:constructregion}.  We then find the row in the region graph over which edges can be estimated and apply Algorithm~\ref{alg:ugmsregion} to each region in that row. We note that when estimating edges over a region, we use model selection algorithms to choose an appropriate regularization parameter to select the number of edges to estimate.   Next, all estimated edges are added to $\widehat{G}$ and all edges that are estimated are removed from $H$.  Thus, $H$ now represents all the edges that are left to be estimated and $\widehat{G} \cup H$ contains all the edges in $G^*$.  We repeat the above steps on a new region graph computed using $\widehat{G} \cup H$ and stop the algorithm when $H$ is an empty graph.

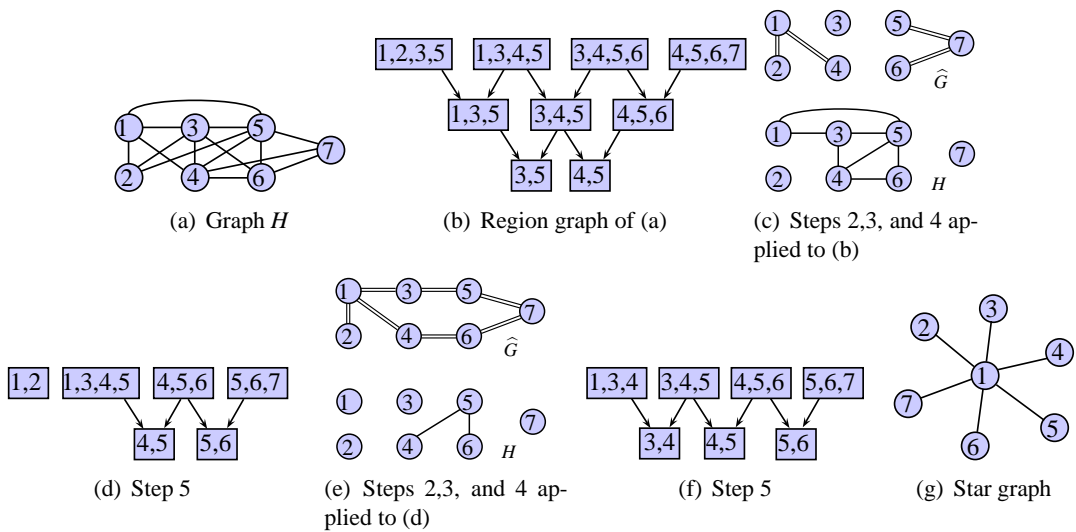
\begin{figure}
\centering
\subfigure[Graph $H$]{
\scalebox{0.55} 
{
\begin{pspicture}(0,-1.165)(5.58,1.185)
\definecolor{color2875b}{rgb}{0.8,0.8,1.0}
\pscircle[linewidth=0.04,dimen=outer,fillstyle=solid,fillcolor=color2875b](0.35,0.385){0.35}
\usefont{T1}{ptm}{m}{n}
\rput(0.18859375,0.3949999){\LARGE 1}
\pscircle[linewidth=0.04,dimen=outer,fillstyle=solid,fillcolor=color2875b](0.35,-0.81499994){0.35}
\usefont{T1}{ptm}{m}{n}
\rput(0.31093752,-0.8249999){\LARGE 2}
\pscircle[linewidth=0.04,dimen=outer,fillstyle=solid,fillcolor=color2875b](1.95,-0.815){0.35}
\usefont{T1}{ptm}{m}{n}
\rput(1.9064062,-0.805){\LARGE 4}
\pscircle[linewidth=0.04,dimen=outer,fillstyle=solid,fillcolor=color2875b](1.95,0.38499996){0.35}
\usefont{T1}{ptm}{m}{n}
\rput(1.8943751,0.395){\LARGE 3}
\pscircle[linewidth=0.04,dimen=outer,fillstyle=solid,fillcolor=color2875b](3.55,-0.815){0.35}
\usefont{T1}{ptm}{m}{n}
\rput(3.480781,-0.825){\LARGE 6}
\pscircle[linewidth=0.04,dimen=outer,fillstyle=solid,fillcolor=color2875b](3.55,0.38499996){0.35}
\usefont{T1}{ptm}{m}{n}
\rput(3.4821875,0.395){\LARGE 5}
\psline[linewidth=0.04cm](0.68,0.395)(1.62,0.395)
\psline[linewidth=0.04cm](2.28,0.395)(3.22,0.395)
\psline[linewidth=0.04cm](2.28,-0.8249999)(3.22,-0.8249999)
\psline[linewidth=0.04cm](1.94,0.05500004)(1.94,-0.48499998)
\psline[linewidth=0.04cm](3.54,0.05500004)(3.54,-0.48499998)
\pscircle[linewidth=0.04,dimen=outer,fillstyle=solid,fillcolor=color2875b](5.23,-0.15500002){0.35}
\usefont{T1}{ptm}{m}{n}
\rput(5.168906,-0.16500002){\LARGE 7}
\psline[linewidth=0.04cm](0.34,0.05500004)(0.34,-0.48499998)
\psline[linewidth=0.04cm](0.58,0.15500003)(1.66,-0.62499994)
\psline[linewidth=0.04cm](0.56,-0.56499994)(1.7,0.19500002)
\psline[linewidth=0.04cm](0.64,-0.66499996)(3.22,0.29500005)
\psline[linewidth=0.04cm](2.18,0.17500003)(3.28,-0.60499996)
\psline[linewidth=0.04cm](2.24,-0.66499996)(4.9,-0.12499997)
\psline[linewidth=0.04cm](2.14,-0.54499996)(3.3,0.17500003)
\psline[linewidth=0.04cm](3.86,0.29500005)(4.94,-0.00499998)
\psline[linewidth=0.04cm](3.86,-0.68499994)(4.92,-0.28499997)
\psbezier[linewidth=0.04](0.3404207,0.7040723)(0.34,1.1639334)(3.5395792,1.165)(3.5399997,0.70513886)
\end{pspicture} 
}
}
\subfigure[Region graph of (a)]{
\scalebox{0.55} 
{
\begin{pspicture}(0,-1.86)(8.98,1.86)
\definecolor{color2875b}{rgb}{0.8,0.8,1.0}
\psframe[linewidth=0.04,dimen=outer,fillstyle=solid,fillcolor=color2875b](1.88,1.86)(0.0,1.08)
\usefont{T1}{ptm}{m}{n}
\rput(0.8720313,1.46){\LARGE 1,2,3,5}
\psframe[linewidth=0.04,dimen=outer,fillstyle=solid,fillcolor=color2875b](4.2466664,1.86)(2.3666666,1.08)
\usefont{T1}{ptm}{m}{n}
\rput(3.238698,1.46){\LARGE 1,3,4,5}
\psframe[linewidth=0.04,dimen=outer,fillstyle=solid,fillcolor=color2875b](6.613333,1.86)(4.733333,1.08)
\usefont{T1}{ptm}{m}{n}
\rput(5.636615,1.46){\LARGE 3,4,5,6}
\psframe[linewidth=0.04,dimen=outer,fillstyle=solid,fillcolor=color2875b](8.98,1.86)(7.1,1.08)
\usefont{T1}{ptm}{m}{n}
\rput(8.009063,1.46){\LARGE 4,5,6,7}
\psframe[linewidth=0.04,dimen=outer,fillstyle=solid,fillcolor=color2875b](3.24,0.39)(1.76,-0.39)
\usefont{T1}{ptm}{m}{n}
\rput(2.4220312,-0.01){\LARGE 1,3,5}
\psframe[linewidth=0.04,dimen=outer,fillstyle=solid,fillcolor=color2875b](5.23,0.39)(3.75,-0.39)
\usefont{T1}{ptm}{m}{n}
\rput(4.455,-0.01){\LARGE 3,4,5}
\psframe[linewidth=0.04,dimen=outer,fillstyle=solid,fillcolor=color2875b](7.22,0.39)(5.74,-0.39)
\usefont{T1}{ptm}{m}{n}
\rput(6.460781,-0.01){\LARGE 4,5,6}
\psframe[linewidth=0.04,dimen=outer,fillstyle=solid,fillcolor=color2875b](4.28,-1.08)(3.28,-1.86)
\usefont{T1}{ptm}{m}{n}
\rput(3.755,-1.48){\LARGE 3,5}
\psframe[linewidth=0.04,dimen=outer,fillstyle=solid,fillcolor=color2875b](5.7,-1.08)(4.7,-1.86)
\usefont{T1}{ptm}{m}{n}
\rput(5.1825,-1.48){\LARGE 4,5}
\psline[linewidth=0.04cm,arrowsize=0.05291667cm 3.0,arrowlength=1.4,arrowinset=0.4]{->}(3.12,1.1)(2.72,0.4)
\psline[linewidth=0.04cm,arrowsize=0.05291667cm 3.0,arrowlength=1.4,arrowinset=0.4]{->}(3.68,1.1)(4.14,0.38)
\psline[linewidth=0.04cm,arrowsize=0.05291667cm 3.0,arrowlength=1.4,arrowinset=0.4]{->}(5.2,1.08)(4.8,0.38)
\psline[linewidth=0.04cm,arrowsize=0.05291667cm 3.0,arrowlength=1.4,arrowinset=0.4]{->}(6.1,1.1)(6.56,0.38)
\psline[linewidth=0.04cm,arrowsize=0.05291667cm 3.0,arrowlength=1.4,arrowinset=0.4]{->}(7.38,1.08)(6.98,0.38)
\psline[linewidth=0.04cm,arrowsize=0.05291667cm 3.0,arrowlength=1.4,arrowinset=0.4]{->}(1.62,1.08)(2.08,0.36)
\psline[linewidth=0.04cm,arrowsize=0.05291667cm 3.0,arrowlength=1.4,arrowinset=0.4]{->}(3.04,-0.38)(3.5,-1.1)
\psline[linewidth=0.04cm,arrowsize=0.05291667cm 3.0,arrowlength=1.4,arrowinset=0.4]{->}(4.58,-0.38)(5.04,-1.1)
\psline[linewidth=0.04cm,arrowsize=0.05291667cm 3.0,arrowlength=1.4,arrowinset=0.4]{->}(5.92,-0.38)(5.52,-1.08)
\psline[linewidth=0.04cm,arrowsize=0.05291667cm 3.0,arrowlength=1.4,arrowinset=0.4]{->}(4.4,-0.38)(4.0,-1.08)
\end{pspicture}}} 
\subfigure[Steps 2,3, and 4 applied to (b)]{
\scalebox{0.5} 
{
\begin{pspicture}(0,-2.42)(5.6,2.42)
\definecolor{color2875b}{rgb}{0.8,0.8,1.0}
\pscircle[linewidth=0.04,dimen=outer,fillstyle=solid,fillcolor=color2875b](0.35,2.07){0.35}
\usefont{T1}{ptm}{m}{n}
\rput(0.28515625,2.06){\LARGE 1}
\pscircle[linewidth=0.04,dimen=outer,fillstyle=solid,fillcolor=color2875b](0.35,0.87000006){0.35}
\usefont{T1}{ptm}{m}{n}
\rput(0.333125,0.8600001){\LARGE 2}
\pscircle[linewidth=0.04,dimen=outer,fillstyle=solid,fillcolor=color2875b](1.95,0.87){0.35}
\usefont{T1}{ptm}{m}{n}
\rput(1.9160937,0.88){\LARGE 4}
\pscircle[linewidth=0.04,dimen=outer,fillstyle=solid,fillcolor=color2875b](1.95,2.07){0.35}
\usefont{T1}{ptm}{m}{n}
\rput(1.9403125,2.06){\LARGE 3}
\pscircle[linewidth=0.04,dimen=outer,fillstyle=solid,fillcolor=color2875b](3.55,0.87){0.35}
\usefont{T1}{ptm}{m}{n}
\rput(3.5117185,0.86){\LARGE 6}
\pscircle[linewidth=0.04,dimen=outer,fillstyle=solid,fillcolor=color2875b](3.55,2.07){0.35}
\usefont{T1}{ptm}{m}{n}
\rput(3.53375,2.06){\LARGE 5}
\pscircle[linewidth=0.04,dimen=outer,fillstyle=solid,fillcolor=color2875b](5.23,1.53){0.35}
\usefont{T1}{ptm}{m}{n}
\rput(5.200156,1.5){\LARGE 7}
\psline[linewidth=0.03cm](0.3,1.74)(0.3,1.2)
\psline[linewidth=0.03cm](0.62,1.88)(1.7,1.1)
\psline[linewidth=0.03cm](3.86,2.0)(4.94,1.7)
\psline[linewidth=0.03cm](3.88,0.94)(4.94,1.34)
\psline[linewidth=0.03cm](0.38,1.74)(0.38,1.2)
\psline[linewidth=0.03cm](0.56,1.82)(1.64,1.04)
\psline[linewidth=0.03cm](3.86,1.02)(4.92,1.4200001)
\psline[linewidth=0.03cm](3.82,1.9200001)(4.9,1.62)
\pscircle[linewidth=0.04,dimen=outer,fillstyle=solid,fillcolor=color2875b](0.37,-0.87){0.35}
\usefont{T1}{ptm}{m}{n}
\rput(0.30515626,-0.8800001){\LARGE 1}
\pscircle[linewidth=0.04,dimen=outer,fillstyle=solid,fillcolor=color2875b](0.37,-2.07){0.35}
\usefont{T1}{ptm}{m}{n}
\rput(0.353125,-2.08){\LARGE 2}
\pscircle[linewidth=0.04,dimen=outer,fillstyle=solid,fillcolor=color2875b](1.97,-2.07){0.35}
\usefont{T1}{ptm}{m}{n}
\rput(1.9360937,-2.06){\LARGE 4}
\pscircle[linewidth=0.04,dimen=outer,fillstyle=solid,fillcolor=color2875b](1.97,-0.87000006){0.35}
\usefont{T1}{ptm}{m}{n}
\rput(1.9603126,-0.88){\LARGE 3}
\pscircle[linewidth=0.04,dimen=outer,fillstyle=solid,fillcolor=color2875b](3.57,-2.07){0.35}
\usefont{T1}{ptm}{m}{n}
\rput(3.5317185,-2.08){\LARGE 6}
\pscircle[linewidth=0.04,dimen=outer,fillstyle=solid,fillcolor=color2875b](3.57,-0.87000006){0.35}
\usefont{T1}{ptm}{m}{n}
\rput(3.55375,-0.88){\LARGE 5}
\psline[linewidth=0.04cm](0.7,-0.86)(1.64,-0.86)
\psline[linewidth=0.04cm](2.3,-0.86)(3.24,-0.86)
\psline[linewidth=0.04cm](2.3,-2.08)(3.24,-2.08)
\psline[linewidth=0.04cm](1.96,-1.1999999)(1.96,-1.74)
\psline[linewidth=0.04cm](3.56,-1.1999999)(3.56,-1.74)
\pscircle[linewidth=0.04,dimen=outer,fillstyle=solid,fillcolor=color2875b](5.25,-1.41){0.35}
\usefont{T1}{ptm}{m}{n}
\rput(5.220156,-1.44){\LARGE 7}
\psline[linewidth=0.04cm](2.16,-1.8)(3.32,-1.0799999)
\psbezier[linewidth=0.04](0.40042067,-0.5609277)(0.4,-0.1010666)(3.599579,-0.1)(3.5999997,-0.5598612)
\usefont{T1}{ptm}{m}{n}
\rput(4.6815624,0.585){\Large $\widehat{G}$}
\usefont{T1}{ptm}{m}{n}
\rput(4.644375,-2.215){\Large $H$}
\end{pspicture} 
}}
\subfigure[Step 5]{
\scalebox{0.55} 
{
\begin{pspicture}(0,-1.13)(6.78,1.13)
\definecolor{color2875b}{rgb}{0.8,0.8,1.0}
\psframe[linewidth=0.04,dimen=outer,fillstyle=solid,fillcolor=color2875b](3.1933331,1.13)(1.3133334,0.35)
\usefont{T1}{ptm}{m}{n}
\rput(2.130365,0.73){\LARGE 1,3,4,5}
\psframe[linewidth=0.04,dimen=outer,fillstyle=solid,fillcolor=color2875b](4.9866667,1.13)(3.5066667,0.35)
\usefont{T1}{ptm}{m}{n}
\rput(4.2111974,0.73){\LARGE 4,5,6}
\psframe[linewidth=0.04,dimen=outer,fillstyle=solid,fillcolor=color2875b](1.0,1.13)(0.0,0.35)
\usefont{T1}{ptm}{m}{n}
\rput(0.42734376,0.73){\LARGE 1,2}
\psframe[linewidth=0.04,dimen=outer,fillstyle=solid,fillcolor=color2875b](4.04,-0.35)(3.04,-1.13)
\usefont{T1}{ptm}{m}{n}
\rput(3.4979687,-0.75){\LARGE 4,5}
\psline[linewidth=0.04cm,arrowsize=0.05291667cm 3.0,arrowlength=1.4,arrowinset=0.4]{->}(2.82,0.35)(3.28,-0.37)
\psline[linewidth=0.04cm,arrowsize=0.05291667cm 3.0,arrowlength=1.4,arrowinset=0.4]{->}(4.36,0.35)(4.82,-0.37)
\psline[linewidth=0.04cm,arrowsize=0.05291667cm 3.0,arrowlength=1.4,arrowinset=0.4]{->}(5.7,0.35)(5.3,-0.35)
\psline[linewidth=0.04cm,arrowsize=0.05291667cm 3.0,arrowlength=1.4,arrowinset=0.4]{->}(4.18,0.35)(3.78,-0.35)
\psframe[linewidth=0.04,dimen=outer,fillstyle=solid,fillcolor=color2875b](6.78,1.13)(5.3,0.35)
\usefont{T1}{ptm}{m}{n}
\rput(5.993125,0.73){\LARGE 5,6,7}
\psframe[linewidth=0.04,dimen=outer,fillstyle=solid,fillcolor=color2875b](5.56,-0.35)(4.56,-1.13)
\usefont{T1}{ptm}{m}{n}
\rput(5.0165625,-0.75){\LARGE 5,6}
\end{pspicture}}} \quad
\subfigure[Steps 2,3, and 4 applied to (d)]{
\scalebox{0.5} 
{
\begin{pspicture}(0,-2.4365625)(6.1775,2.42)
\definecolor{color2875b}{rgb}{0.8,0.8,1.0}
\pscircle[linewidth=0.04,dimen=outer,fillstyle=solid,fillcolor=color2875b](0.35,2.07){0.35}
\usefont{T1}{ptm}{m}{n}
\rput(0.2134375,2.06){\LARGE 1}
\pscircle[linewidth=0.04,dimen=outer,fillstyle=solid,fillcolor=color2875b](0.35,0.87000006){0.35}
\usefont{T1}{ptm}{m}{n}
\rput(0.31421876,0.8600001){\LARGE 2}
\pscircle[linewidth=0.04,dimen=outer,fillstyle=solid,fillcolor=color2875b](1.95,0.87){0.35}
\usefont{T1}{ptm}{m}{n}
\rput(1.9009374,0.88){\LARGE 4}
\pscircle[linewidth=0.04,dimen=outer,fillstyle=solid,fillcolor=color2875b](1.95,2.07){0.35}
\usefont{T1}{ptm}{m}{n}
\rput(1.9032812,2.06){\LARGE 3}
\pscircle[linewidth=0.04,dimen=outer,fillstyle=solid,fillcolor=color2875b](3.55,0.87){0.35}
\usefont{T1}{ptm}{m}{n}
\rput(3.4871871,0.86){\LARGE 6}
\pscircle[linewidth=0.04,dimen=outer,fillstyle=solid,fillcolor=color2875b](3.55,2.07){0.35}
\usefont{T1}{ptm}{m}{n}
\rput(3.4995313,2.06){\LARGE 5}
\pscircle[linewidth=0.04,dimen=outer,fillstyle=solid,fillcolor=color2875b](5.23,1.53){0.35}
\usefont{T1}{ptm}{m}{n}
\rput(5.175781,1.5){\LARGE 7}
\psline[linewidth=0.03cm](0.3,1.74)(0.3,1.2)
\psline[linewidth=0.03cm](0.62,1.88)(1.7,1.1)
\psline[linewidth=0.03cm](3.86,2.0)(4.94,1.7)
\psline[linewidth=0.03cm](3.88,0.94)(4.94,1.34)
\psline[linewidth=0.03cm](0.38,1.74)(0.38,1.2)
\psline[linewidth=0.03cm](0.56,1.82)(1.64,1.04)
\psline[linewidth=0.03cm](3.86,1.02)(4.92,1.4200001)
\psline[linewidth=0.03cm](3.82,1.9200001)(4.9,1.62)
\pscircle[linewidth=0.04,dimen=outer,fillstyle=solid,fillcolor=color2875b](0.37,-0.87){0.35}
\usefont{T1}{ptm}{m}{n}
\rput(0.23343751,-0.8800001){\LARGE 1}
\pscircle[linewidth=0.04,dimen=outer,fillstyle=solid,fillcolor=color2875b](0.37,-2.07){0.35}
\usefont{T1}{ptm}{m}{n}
\rput(0.33421874,-2.08){\LARGE 2}
\pscircle[linewidth=0.04,dimen=outer,fillstyle=solid,fillcolor=color2875b](1.97,-2.07){0.35}
\usefont{T1}{ptm}{m}{n}
\rput(1.9209374,-2.06){\LARGE 4}
\pscircle[linewidth=0.04,dimen=outer,fillstyle=solid,fillcolor=color2875b](1.97,-0.87000006){0.35}
\usefont{T1}{ptm}{m}{n}
\rput(1.9232813,-0.88){\LARGE 3}
\pscircle[linewidth=0.04,dimen=outer,fillstyle=solid,fillcolor=color2875b](3.57,-2.07){0.35}
\usefont{T1}{ptm}{m}{n}
\rput(3.5071874,-2.08){\LARGE 6}
\pscircle[linewidth=0.04,dimen=outer,fillstyle=solid,fillcolor=color2875b](3.57,-0.87000006){0.35}
\usefont{T1}{ptm}{m}{n}
\rput(3.5195312,-0.88){\LARGE 5}
\psline[linewidth=0.03cm](2.28,2.14)(3.22,2.14)
\psline[linewidth=0.04cm](3.56,-1.1999999)(3.56,-1.74)
\pscircle[linewidth=0.04,dimen=outer,fillstyle=solid,fillcolor=color2875b](5.25,-1.41){0.35}
\usefont{T1}{ptm}{m}{n}
\rput(5.195781,-1.44){\LARGE 7}
\psline[linewidth=0.04cm](2.16,-1.8)(3.32,-1.0799999)
\usefont{T1}{ptm}{m}{n}
\rput(4.656875,0.585){\Large $\widehat{G}$}
\usefont{T1}{ptm}{m}{n}
\rput(4.6196876,-2.215){\Large $H$}
\psline[linewidth=0.03cm](2.28,2.06)(3.22,2.06)
\psline[linewidth=0.03cm](0.66,2.14)(1.6,2.14)
\psline[linewidth=0.03cm](0.66,2.06)(1.6,2.06)
\psline[linewidth=0.03cm](2.28,0.9)(3.22,0.9)
\psline[linewidth=0.03cm](2.28,0.82)(3.22,0.82)
\end{pspicture}}}
\subfigure[Step 5]{
\scalebox{0.55} 
{
\begin{pspicture}(0,-1.13)(6.713333,1.13)
\definecolor{color2875b}{rgb}{0.8,0.8,1.0}
\psframe[linewidth=0.04,dimen=outer,fillstyle=solid,fillcolor=color2875b](4.9688888,1.13)(3.488889,0.35)
\usefont{T1}{ptm}{m}{n}
\rput(4.17717,0.73){\LARGE 4,5,6}
\psframe[linewidth=0.04,dimen=outer,fillstyle=solid,fillcolor=color2875b](2.2733333,-0.33)(1.2733333,-1.11)
\usefont{T1}{ptm}{m}{n}
\rput(1.7686458,-0.73){\LARGE 3,4}
\psline[linewidth=0.04cm,arrowsize=0.05291667cm 3.0,arrowlength=1.4,arrowinset=0.4]{->}(1.0933332,0.37)(1.5533333,-0.35)
\psline[linewidth=0.04cm,arrowsize=0.05291667cm 3.0,arrowlength=1.4,arrowinset=0.4]{->}(2.5933332,0.37)(3.0533333,-0.35)
\psline[linewidth=0.04cm,arrowsize=0.05291667cm 3.0,arrowlength=1.4,arrowinset=0.4]{->}(4.0133333,0.37)(3.6133332,-0.33)
\psline[linewidth=0.04cm,arrowsize=0.05291667cm 3.0,arrowlength=1.4,arrowinset=0.4]{->}(2.4133334,0.37)(2.0133333,-0.33)
\psframe[linewidth=0.04,dimen=outer,fillstyle=solid,fillcolor=color2875b](6.713333,1.13)(5.233333,0.35)
\usefont{T1}{ptm}{m}{n}
\rput(5.8988023,0.73){\LARGE 5,6,7}
\psframe[linewidth=0.04,dimen=outer,fillstyle=solid,fillcolor=color2875b](3.8433332,-0.33)(2.8433332,-1.11)
\usefont{T1}{ptm}{m}{n}
\rput(3.2753646,-0.73){\LARGE 4,5}
\psframe[linewidth=0.04,dimen=outer,fillstyle=solid,fillcolor=color2875b](1.48,1.13)(0.0,0.35)
\usefont{T1}{ptm}{m}{n}
\rput(0.6589057,0.73){\LARGE 1,3,4}
\psframe[linewidth=0.04,dimen=outer,fillstyle=solid,fillcolor=color2875b](3.2244444,1.13)(1.7444445,0.35)
\usefont{T1}{ptm}{m}{n}
\rput(2.405538,0.73){\LARGE 3,4,5}
\psframe[linewidth=0.04,dimen=outer,fillstyle=solid,fillcolor=color2875b](5.5733333,-0.35)(4.5733333,-1.13)
\usefont{T1}{ptm}{m}{n}
\rput(5.003958,-0.75){\LARGE 5,6}
\psline[linewidth=0.04cm,arrowsize=0.05291667cm 3.0,arrowlength=1.4,arrowinset=0.4]{->}(4.4533334,0.37)(4.9133334,-0.35)
\psline[linewidth=0.04cm,arrowsize=0.05291667cm 3.0,arrowlength=1.4,arrowinset=0.4]{->}(5.633333,0.37)(5.233333,-0.33)
\end{pspicture} 
}
}
\subfigure[Star graph]{
\scalebox{0.55} 
{
\begin{pspicture}(0,-2.01)(4.36,2.01)
\definecolor{color2875b}{rgb}{0.8,0.8,1.0}
\pscircle[linewidth=0.04,dimen=outer,fillstyle=solid,fillcolor=color2875b](2.21,0.04000002){0.35}
\usefont{T1}{ptm}{m}{n}
\rput(2.136875,0.02999992){\LARGE 1}
\pscircle[linewidth=0.04,dimen=outer,fillstyle=solid,fillcolor=color2875b](0.75,1.22){0.35}
\usefont{T1}{ptm}{m}{n}
\rput(0.7320313,1.2100002){\LARGE 2}
\pscircle[linewidth=0.04,dimen=outer,fillstyle=solid,fillcolor=color2875b](4.01,0.6){0.35}
\usefont{T1}{ptm}{m}{n}
\rput(3.97125,0.61){\LARGE 4}
\pscircle[linewidth=0.04,dimen=outer,fillstyle=solid,fillcolor=color2875b](2.41,1.66){0.35}
\usefont{T1}{ptm}{m}{n}
\rput(2.377344,1.65){\LARGE 3}
\pscircle[linewidth=0.04,dimen=outer,fillstyle=solid,fillcolor=color2875b](1.97,-1.66){0.35}
\usefont{T1}{ptm}{m}{n}
\rput(1.9162498,-1.67){\LARGE 6}
\pscircle[linewidth=0.04,dimen=outer,fillstyle=solid,fillcolor=color2875b](3.89,-1.22){0.35}
\usefont{T1}{ptm}{m}{n}
\rput(3.8479688,-1.25){\LARGE 5}
\pscircle[linewidth=0.04,dimen=outer,fillstyle=solid,fillcolor=color2875b](0.35,-0.64){0.35}
\usefont{T1}{ptm}{m}{n}
\rput(0.324531,-0.65){\LARGE 7}
\psline[linewidth=0.04cm](2.24,0.38000003)(2.36,1.36)
\psline[linewidth=0.04cm](2.5,0.18000002)(3.72,0.46)
\psline[linewidth=0.04cm](2.48,-0.17999998)(3.64,-1.04)
\psline[linewidth=0.04cm](2.16,-0.27999997)(2.02,-1.3199999)
\psline[linewidth=0.04cm](1.9,-0.03999998)(0.66,-0.49999997)
\psline[linewidth=0.04cm](1.0,1.04)(1.94,0.26000002)
\end{pspicture} 
}
}
\caption{Example to illustrate the junction tree framework in Algorithm~\ref{alg:jtframework}.}
\label{fig:exampleFinalFramework}
\end{figure}

An example illustrating the junction tree framework is shown in Figure~\ref{fig:exampleFinalFramework}.  The region graph in Figure~\ref{fig:exampleFinalFramework}(b) is constructed using the graph $H$ in Figure~\ref{fig:exampleFinalFramework}(a). The true graph $G^*$ we want to estimate is shown in Figure~\ref{fig:FirstExample}(a).  The top and bottom in Figure~\ref{fig:exampleFinalFramework}(c) show the graphs $\widehat{G}$ and $H$, respectively, after estimating all the edges in ${\cal R}^1$ of Figure~\ref{fig:exampleFinalFramework}(b).  The edges in $\widehat{G}$ are represented by double lines to distinguish them from the edges in $H$.  Figure~\ref{fig:exampleFinalFramework}(d) shows the region graph of $\widehat{G} \cup H$.  Figure~\ref{fig:exampleFinalFramework}(e) shows the updated $\widehat{G}$ and $H$ where only the edges $(4,5)$ and $(5,6)$ are left to be estimated.  This is done by applying Algorithm~\ref{alg:ugmsregion}  to the regions in ${\cal R}^2$ of Figure~\ref{fig:exampleFinalFramework}(f).  Notice that we did not include the region $\{1,2\}$ in the last region graph since we know all edges in this region have already been estimated.  In general, if $E(H[R]) = \emptyset$ for any region $R$, we can remove this region and thereby reduce the computational complexity of constructing region graphs.

\subsection{Computational Complexity}
\label{subsec:compcomplexity}

In this section, we discuss the computational complexity of the junction tree framework.  It is difficult to write down a closed form expression since the computational complexity depends on the structure of the junction tree.  Moreover, merging clusters in the junction tree can easily control the computations.  With this in mind, the main aim in this section is to show that the complexity of the framework is roughly the same as that of applying a standard UGMS algorithm.  Consider the following observations.

\begin{enumerate}

\item \textit{Computing $H$:}  Assuming no prior knowledge about $H$ is given, this graph needs to be computed from the observations.  This can be done using standard screening algorithms, such as those in \citet{fan2008sure,VatsMuG2012}, or by applying a UGMS algorithm with a regularization parameter that selects a larger number of edges (than that computed by using a standard UGMS algorithm).  Thus, the complexity of computing $H$ is roughly the same as that of applying a UGMS algorithm to all the vertices in the graph.

\item \textit{Applying UGMS to regions:} Recall from Algorithm~\ref{alg:ugmsregion} that we apply a UGMS algorithm to observations over $\overline{R}$ to estimate edges over the vertices $R$, where $R$ is a region in a region graph representation of $H$.  Since $|\overline{R}| \le p$, it is clear that the complexity of Algorithm~\ref{alg:ugmsregion} is less than that of applying a UGMS algorithm to estimate all edges in the graph.

\item \textit{Computing junction trees:}  For a given graph, there exists several junction tree representations.  The computational complexity of applying UGMS algorithms to a junction tree depends on the size of the clusters, the size of the separators, and the degree of the junction tree.  In theory, it is useful to select a junction tree so that the overall computational complexity of the framework is as small as possible.  However, this is hard since there can be an exponential number of possible junction tree representations.  Alternatively, we can select a junction tree so that the maximum size of the clusters is as small as possible.  Such junction trees are often referred to as \textit{optimal junction trees} in the literature.
Although finding optimal junction trees is also hard \citep{arnborg1987complexity}, there exists several computationally tractable heuristics for finding close to optimal junction trees \citep{Kjaerulff1990,berry2003minimum}.  The complexity of such algorithms range from $O(p^2)$ to $O(p^3)$, depending on the degree of approximation.  We note that this time complexity is less than that of standard UGMS algorithms.
\end{enumerate}

It is clear that the complexity of all the intermediate steps in the framework is less than that of applying a standard UGMS algorithm.  The overall complexity of the framework depends on the number of clusters in the junction tree and the size of the separators in the junction tree.  The size of the separators in a junction tree can be controlled by merging clusters that share a large separator.  This step can be done in linear time.  Removing large separators also reduces the total number of clusters in a junction tree.  In the worst case, if all the separators in $H$ are too large, the junction tree will only have one cluster that contains all the vertices.  In this case, using the junction tree framework will be no different than using a standard UGMS algorithm.  


\subsection{Advantages of using Junction Trees and Region Graphs}
\label{subsec:whyrg}

An alternative approach to estimating $G^*$ using $H$ is to modify some current UGMS algorithms (see Appendix~\ref{app:examples} for some concrete examples).  For example, neighborhood selection based algorithms first estimate the neighborhood of each vertex and then combine all the estimated neighborhoods to construct an estimate $\widehat{G}$ of $G^*$ \citep{NicolaiPeter2006,BreslerMosselSly2008,NetrapalliSanghaviAllerton2010,RavikumarWainwrightLafferty2010}.  Two ways in which these algorithms can be modified when given $H$ are described as follows:

\begin{enumerate}
\item A straightforward approach is to decompose the UGMS problem into $p$ different subproblems of estimating the neighborhood of each vertex.  The graph $H$ can be used to restrict the estimated neighbors of each vertex to be subsets of the neighbors in $H$.  For example, in Figure~\ref{fig:exampleFinalFramework}(a), the neighborhood of $1$ is estimated from the set $\{2,3,4,5\}$ and the neighborhood of $3$ is estimated from the set $\{1,4,5,6\}$.  This approach can be compared to independently applying Algorithm~\ref{alg:ugmsregion}  to each region in the region graph.  For example, when using the region graph, the edge $(1,4)$ can be estimated by applying a UGMS algorithm to $\{1,3,4,5\}$.  In comparison, when not using region graphs, the edge $(1,4)$ is estimated by applying a UGMS algorithm to $\{1,2,3,4,5\}$.  In general, using region graphs results in smaller subproblems.  A good example to illustrate this is the star graph in Figure~\ref{fig:exampleFinalFramework}(g).  A junction tree representation of the star graph can be computed so that all clusters will have size two.  Subsequently, the junction tree framework will only require applying a UGMS algorithm to a pair of nodes.  On the other hand, neighborhood selection needs to be applied to all the nodes to estimate the neighbors of the central node $1$ which is connected to all other nodes.

\item An alternative approach is to estimate the neighbors of each vertex in an iterative manner.  However, it is not clear what ordering should be chosen for the vertices.  The region graph approach outlined in Section~\ref{subsec:description} leads to a natural choice for choosing which edges to estimate in the graph so as to reduce the problem size of subsequent subproblems.  Moreover, iteratively applying neighborhood selection may still lead to large subproblems.  For example, suppose the star graph in Figure~\ref{fig:exampleFinalFramework}(g) is in fact the true graph.  In this case, using neighborhood selection always leads to applying UGMS to all the nodes in the graph.
\end{enumerate}

From the above discussion, it is clear that using junction trees for UGMS leads to smaller subproblems and a natural choice of an ordering for estimating edges in the graph.  We will see in Section~\ref{sec:theoreticalPC} that the smaller subproblems lead to weaker conditions on the number of observations required for consistent graph estimation.  Moreover, our numerical simulations in Section~\ref{sec:numericalSim} empirically show the advantages of using junction tree over neighborhood selection based algorithms.

\section{PC-Algorithm for UGMS}
\label{sec:fPC}

So far, we have presented the junction tree framework using an abstract undirected graphical model selection (UMGS) algorithm.  This shows that our framework can be used in conjunction with any UGMS algorithm.  In this section, we review the PC-Algorithm, since we use it to analyze the junction tree framework in Section~\ref{sec:theoreticalPC}.  The PC-Algorithm was originally proposed in the literature for learning directed graphical models \citep{PCAlgorithm}.  The first stage of the PC-Algorithm, which we refer to as $\fPC$, estimates an undirected graph using conditional independence tests.  The second stage orients the edges in the undirected graph to estimate a directed graph.  We use the first stage of the PC-Algorithm for UGMS.  Algorithm~\ref{alg:fPC} outlines $\fPC$.  Variants of the PC-Algorithm for learning undirected graphical models have recently been analyzed in \citet{AnimaTanWillsky2011a,AnimaTanWillsky2011b}.  The main property used in $\fPC$ is the global Markov property of undirected graphical models which states that if a set of vertices $S$ separates $i$ and $j$, then $X_i \ind X_j | X_S$.  As seen in Line~5 of Algorithm~\ref{alg:fPC}, $\fPC$ deletes an edge $(i,j)$ if it identifies a conditional independence relationship.  Some properties of $\fPC$ are summarized as follows:

\begin{algorithm}[t]
\label{alg:fPC}
\DontPrintSemicolon
\caption{ PC-Algorithm for UGMS: $\fPC(\kappa,\Xf^n,H,L$)}
\textbf{Inputs:} \\
$\quad \kappa$: An integer that controls the computational complexity of $\fPC$. \\
$\quad \Xf^n$: $n$ i.i.d. observations. \\
$\quad H$: A graph that contains all the true edges $G^*$. \\
$\quad L$: A graph that contains the edges that need to be estimated. \\
\textbf{Output:} A graph $\widehat{G}$ that contains edges in $L$ that are estimated to be in $G^*$.\;
\nl $\widehat{G} \gets L$ \;
\nl \For{each $k \in \{0,1,\ldots,\kappa\}$}{
\nl \For{each $(i,j) \in E(\widehat{G})$}{
\nl ${\cal S}_{ij} \gets$ Neighbors of $i$ or $j$ in $H$ depending on which one has lower cardinality. \;
\nl \If{$\exists$ $S \subset {\cal S}_{ij}$, $|S| = k$, s.t. $X_i \ind X_j | X_S$ (computed using $\Xf^n$)}{
\nl Delete edge $(i,j)$ from $\widehat{G}$ and $H$. \;
}}}
\nl Return $\widehat{G}$.
\end{algorithm}

\begin{enumerate}
 \item \textbf{Parameter $\kappa$}:  $\fPC$ iteratively searches for separators for an edge $(i,j)$ by searching for separators of size $0,1,\ldots,\kappa$.  This is reflected in Line~2 of Algorithm~\ref{alg:fPC}.  Theoretically, the algorithm can automatically stop after searching for all possible separators for each edge in the graph.  However, this may not be computationally tractable, which is why $\kappa$ needs to be specified.

 \item \textbf{Conditional Independence Test: } Line~5 of Algorithm~\ref{alg:fPC} uses a conditional independence test to determine if an edge $(i,j)$ is in the true graph.  This makes $\fPC$ extremely flexible since nonparametric independence tests may be used, see \citet{hoeffding1948non,rasch2kernel,zhang2012kernel} for some examples.  In this paper, for simplicity, we only consider Gaussian graphical models.  In this case, conditional independence can be tested using the conditional correlation coefficient defined as
\begin{equation}
 \text{Conditional correlation coefficient: } \rho_{ij|S} = \frac{\Sigma_{ij} - \Sigma_{i,S} \Sigma_{S,S}^{-1} \Sigma_{S,j}}{\sqrt{\Sigma_{i,i|S} \Sigma_{j,j|S}}} \,,
\end{equation}
where $P_X \sim {\cal N}(0,\Sigma)$, $\Sigma_{A,B}$ is the covariance matrix of $X_A$ and $X_B$, and $\Sigma_{A,B|S}$ is the conditional covariance defined by 
\begin{equation}
 \Sigma_{A,B|S} = \Sigma_{A,B} - \Sigma_{A,S} \Sigma_{S,S}^{-1} \Sigma_{B,S} \,.
\end{equation}
Whenever $X_i \ind X_j | X_S$, then $\rho_{ij|S} = 0$.  This motivates the following test for independence:
\begin{equation}
 \text{Conditional Independence Test: } |\widehat{\rho}_{ij|S}| < \lambda_n \Longrightarrow X_i \ind X_j | X_S \,, \label{eq:cit}
\end{equation}
where $\widehat{\rho}_{ij|S}$ is computed using the empirical covariance matrix from the observations $\Xf^n$.  The regularization parameter $\lambda_n$ controls the number of edges estimated in $\widehat{G}$.

\item \textbf{The graphs $H$ and $L$:} Recall that $H$ contains all the edges in $G^*$.  The graph $L$ contains edges that need to be estimated since, as seen in Algorithm~\ref{alg:ugmsregion}, we apply UGMS to only certain parts of the graph instead of the whole graph.  As an example, to estimate edges in a region $R$ of a region graph representation of $H$, we apply Algorithm~\ref{alg:fPC} as follows:
\begin{equation}
 \widehat{G}_R = \fPC\left(\eta,\Xf^n,H,H_R' \right) \,, \label{eq:tt1}
\end{equation}
where $H_R'$ is defined in (\ref{eq:haprime}).  Notice that we do not use $\overline{R}$ in (\ref{eq:tt1}).  This is because Line~4 of Algorithm~\ref{alg:fPC} automatically finds the set of vertices to apply the $\fPC$ algorithm to.  Alternatively, we can apply Algorithm~\ref{alg:fPC} using $\overline{R}$ as follows:
\begin{equation}
 \widehat{G}_R = \fPC\left(\eta,\Xf^n_{\overline{R}},K_{\overline{R}},H_R' \right) \,, \label{eq:tt2}
\end{equation}
where $K_{\overline{R}}$ is the complete graph over $\overline{R}$.

\item \textbf{The set ${\cal S}_{ij}$:} An important step in Algorithm~\ref{alg:fPC} is specifying the set ${\cal S}_{ij}$ in Line~4 to restrict the search space for finding separators for an edge $(i,j)$.  This step significantly reduces the computational complexity of $\fPC$ and differentiates $\fPC$ from the first stage of the SGS-Algorithm \citep{SGSAlgorithm}, which specifies ${\cal S}_{ij} = V \backslash \{i,j\}$.
\end{enumerate}


\section{Theoretical Analysis of Junction Tree based $\fPC$}
\label{sec:theoreticalPC}

We use the PC-algorithm to analyze the junction tree based UGMS algorithm.  Our main result, stated in Theorem~\ref{thm:mainResult1}, shows that when using the PC-Algorithm with the junction tree framework, we can potentially estimate the graph using fewer number of observations than what is required by the standard PC-Algorithm.  As we shall see in Theorem~\ref{thm:mainResult1}, the particular gain in performance depends on the structure of the graph.

Section~\ref{subsec:assumptions} discusses the assumptions we place on the graphical model.  Section~\ref{subsec:theory} presents the main theoretical result highlighting the advantages of using junction trees.  Throughout this Section, we use standard asymptotic notation so that $f(n) = \Omega(g(n))$ implies that there exists an $N$ and a constant $c$ such that for all $n \ge N$, $f(n) \ge c g(n)$.  For $f(n) = O(g(n))$, replace $\ge$ by $\le$. 

\subsection{Assumptions}
\label{subsec:assumptions}

\begin{enumerate}[({A}1)]
 \item \textbf{Gaussian graphical model:} We assume $X = (X_1,\ldots,X_p) \sim P_X$, where $P_X$ is a multivariate normal distribution with mean zero and covariance $\Sigma$.  Further, $P_X$ is Markov on $G^*$ and not Markov on any subgraph of $G^*$.  It is well known that this is assumption translates into the fact that $\Sigma_{ij}^{-1} = 0$ if and only if $(i,j) \notin G^*$ \citep{SpeedKiiveri1986}.

\item \textbf{Faithfulness:}  If $X_i \ind X_j | X_S$, then $i$ and $j$ are separated by\footnote{If $S$ is the empty set, this means that there is no path between $i$ and $j$.} $S$.  This assumption is important for the $\fPC$ algorithm to output the correct graph.  Further, note that the Markov assumption is different since it goes the other way: if $i$ and $j$ are separated by $S$, then $X_i \ind X_j | X_S$.  Thus, when both (A1) and (A2) hold, we have that $X_i \ind X_j | X_S \Longleftrightarrow (i,j) \notin G^*$.

\item \textbf{Separator Size $\eta$:} For all $(i,j) \notin G^*$, there exists a subset of nodes $S \subset V \backslash \{i,j\}$, where $|S| \le \eta$, such that $S$ is a separator for $i$ and $j$ in $G^*$.  This assumption allows us to use $\kappa = \eta$ when using $\fPC$.

\item \textbf{Conditional Correlation Coefficient $\rho_{ij|S}$ and $\Sigma$: }  Under (A3), we assume that $\rho_{ij|S}$ satisfies
\begin{equation}
 \sup \{ |\rho_{ij|S}|: i,j \in V, S \subset V, |S| \le \eta \} \} \le M < 1 \,,
\end{equation}
where $M$ is a constant.  Further, we assume that $\max_{i,S, |S| \le \eta} \Sigma_{i,i|S} \le L < \infty$.

\item \textbf{High-Dimensionality} We assume that the number of vertices in the graph $p$ scales with $n$ so that $p \rightarrow \infty$ as $n \rightarrow \infty$.  Furthermore, both $\rho_{ij|S}$ and $\eta$ are assumed to be functions of $n$ and $p$ unless mentioned otherwise.

\item \textbf{Structure of $G^*$: } Under (A3), we assume that there \textit{exists} a set of vertices $V_1$, $V_2$, and $T$ such that $T$ separates $V_1$ and $V_2$ in $G^*$ and $|T| < \eta$.
Figure~\ref{fig:genstruc}(a) shows the general structure of this assumption.  
\end{enumerate}

Assumptions (A1)-(A5) are standard conditions for proving high-dimensional consistency of the PC-Algorithm for Gaussian graphical models.  The structural constraints on the graph in Assumption~(A6) are required for showing the advantages of the junction tree framework.  We note that although (A6) appears to be a strong assumption, there are several graph families that satisfy this assumption.  For example, the graph in Figure~\ref{fig:FirstExample}(a) satisfies (A6) with $V_1 = \{1,2\}$, $V_2 = \{1,3,4,5,6,7\}$, and $T = \{1\}$.  In general, if there exists a separator in the graph of size less than $\eta$, then (A6) is clearly satisfied.  Further, we remark that we only assume the \textit{existence} of the sets $V_1$, $V_2$, and $T$ and do \textit{not} assume that these sets are known \textit{a priori}.  We refer to Remark~\ref{rem:ext} for more discussions about (A6) and some extensions of this assumption.

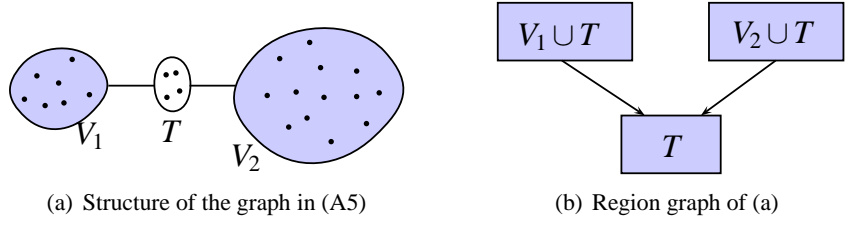
\begin{figure}
\centering
 \subfigure[Structure of the graph in (A5)]{
\scalebox{0.6}{
\begin{pspicture}(0,-1.7426543)(9.106651,1.7426542)
\definecolor{color30b}{rgb}{0.8,0.8,1.0}
\psbezier[linewidth=0.04,fillstyle=solid,fillcolor=color30b](0.38826162,0.6918938)(0.77652323,1.1330954)(1.6471232,1.1826543)(2.0335617,0.7397271)(2.42,0.2967998)(2.3162072,0.011489006)(1.8954945,-0.39631054)(1.4747819,-0.80411005)(0.88718396,-0.8773457)(0.45729518,-0.4800186)(0.027406394,-0.08269151)(0.0,0.25069222)(0.38826162,0.6918938)
\psbezier[linewidth=0.04,fillstyle=solid,fillcolor=color30b](5.554909,0.9018673)(6.229819,1.6397681)(7.743169,1.7226542)(8.41491,0.9818674)(9.086651,0.2410806)(8.906227,-0.23609608)(8.174909,-0.91813266)(7.443591,-1.6001692)(6.422179,-1.7226542)(5.6749096,-1.0581326)(4.92764,-0.39361107)(4.8799996,0.1639665)(5.554909,0.9018673)
\psbezier[linewidth=0.04](3.3149095,0.26186737)(3.3149095,-0.5381326)(4.1149096,-0.55813265)(4.1149096,0.24186738)(4.1149096,1.0418674)(3.3149095,1.0618674)(3.3149095,0.26186737)
\psdots[dotsize=0.12](1.4949093,0.8018674)
\psdots[dotsize=0.12](1.1949093,0.28186736)
\psdots[dotsize=0.12](0.89490926,-0.21813263)
\psdots[dotsize=0.12](0.43490934,-0.07813263)
\psdots[dotsize=0.12](0.7149093,0.44186735)
\psdots[dotsize=0.12](1.8749093,0.02186736)
\psdots[dotsize=0.12](1.3149093,-0.15813264)
\psdots[dotsize=0.12](5.8149095,0.00186737)
\psdots[dotsize=0.12](6.9949102,0.6218674)
\psdots[dotsize=0.12](7.794909,0.5418674)
\psdots[dotsize=0.12](8.2949095,0.06186737)
\psdots[dotsize=0.12](8.014909,-0.61813265)
\psdots[dotsize=0.12](7.234909,-1.0181326)
\psdots[dotsize=0.12](7.0749097,-0.03813262)
\psdots[dotsize=0.12](6.4549103,0.08186738)
\psdots[dotsize=0.12](6.2949095,-0.69813263)
\psdots[dotsize=0.12](6.694909,-0.49813262)
\psdots[dotsize=0.12](7.794909,-0.01813262)
\psdots[dotsize=0.12](6.094909,0.8218674)
\psdots[dotsize=0.12](6.754909,1.1818672)
\psdots[dotsize=0.12](3.574909,0.46186736)
\psdots[dotsize=0.12](3.874909,0.10186737)
\psdots[dotsize=0.12](3.594909,-0.03813262)
\psdots[dotsize=0.12](3.7949092,0.5018674)
\usefont{T1}{ptm}{m}{n}
\rput(3.7089717,-0.7831326){\huge $T$}
\usefont{T1}{ptm}{m}{n}
\rput(1.8961593,-0.8831326){\huge $V_1$}
\usefont{T1}{ptm}{m}{n}
\rput(5.3631907,-1.3831327){\huge $V_2$}
\psline[linewidth=0.04cm](2.2949092,0.22186737)(3.3149095,0.22186737)
\psline[linewidth=0.04cm](4.094909,0.22186737)(5.1149096,0.22186737)
\end{pspicture}
}} \qquad
\subfigure[Region graph of (a)]{
\scalebox{0.6} 
{
\begin{pspicture}(0,-1.91)(7.7,1.91)
\definecolor{color0b}{rgb}{0.8,0.8,1.0}
\psframe[linewidth=0.04,dimen=outer,fillstyle=solid,fillcolor=color0b](3.02,1.91)(0.0,0.59)
\psframe[linewidth=0.04,dimen=outer,fillstyle=solid,fillcolor=color0b](7.7,1.91)(4.68,0.59)
\usefont{T1}{ptm}{m}{n}
\rput(1.4,1.14){\huge $V_1 \cup T$}
\usefont{T1}{ptm}{m}{n}
\rput(6.15,1.18){\huge $V_2 \cup T$}
\psframe[linewidth=0.04,dimen=outer,fillstyle=solid,fillcolor=color0b](4.98,-0.59)(2.74,-1.91)
\usefont{T1}{ptm}{m}{n}
\rput(3.9,-1.3){\huge $T$}
\psline[linewidth=0.04cm,arrowsize=0.05291667cm 3.0,arrowlength=1.4,arrowinset=0.4]{->}(1.44,0.61)(3.28,-0.61)
\psline[linewidth=0.04cm,arrowsize=0.05291667cm 3.0,arrowlength=1.4,arrowinset=0.4]{->}(6.18,0.61)(4.52,-0.61)
\end{pspicture} 
}}
\caption{General Structure of the graph we use in showing the advantages of the junction tree framework.}
\label{fig:genstruc}
\end{figure}

\subsection{Theoretical Result and Analysis}
\label{subsec:theory}

Recall $\fPC$ in Algorithm~\ref{alg:fPC}.  Since we assume (A1), the conditional independence test in (\ref{eq:cit}) can be used in Line~5 of Algorithm~\ref{alg:fPC}.  To analyze the junction tree framework, consider the following steps to construct $\widehat{G}$ using $\fPC$ when given $n$ i.i.d. observations $\Xf^n$:

\begin{enumerate}[Step 1.]
 \item Compute $H$: Apply $\fPC$ using a regularization parameter $\lambda_n^0$ such that 
\begin{equation}
H = \fPC(|T|,\Xf^n,K_V,K_V)\,, 
\end{equation}
where $K_V$ is the complete graph over the nodes $V$.  In the above equation, we apply $\fPC$ to remove all edges for which there exists a separator of size less than or equal to $|T|$.

\item Estimate a subset of edges over $V_1 \cup T$ and $V_2 \cup T$ using regularization parameters $\lambda^1_n$ and $\lambda^2_n$, respectively, such that
\begin{equation}
 \widehat{G}_{V_k} = \fPC\left(\eta, \Xf^n, H[V_k \cup T] \cup K_T, 
 H'_{V_k \cup T} \right), \text{for $k = 1,2$},
\end{equation}
where $H'_{V_k \cup T} = H[V_k \cup T] \backslash K_T$ as defined in (\ref{eq:haprime}).

\item Estimate edges over $T$ using a regularization parameter $\lambda_n^T$:
\begin{equation}
 \widehat{G}_{T} = \fPC\left(\eta, \Xf^n, H[T \cup ne_{\widehat{G}_{V_1} \cup \widehat{G}_{V_2}}(T)], H[T] \right) \,.
\end{equation}

\item Final estimate is $\widehat{G} = \widehat{G}_{V_1} \cup \widehat{G}_{V_2} \cup \widehat{G}_T$.
\end{enumerate}

Step~1 is the screening algorithm used to eliminate some edges from the complete graph.  For the region graph in Figure~\ref{fig:genstruc}(b), Step~2 corresponds to applying $\fPC$ to the regions $V_1 \cup T$ and $V_2 \cup T$.  Step~3 corresponds to applying $\fPC$ to the region $T$ and all neighbors of $T$ estimated so far.  Step~4 merges all the estimated edges.  Although the neighbors of $T$ are sufficient to estimate all the edges in $T$, in general, depending on the graph, a smaller set of vertices is required to estimate edges in $T$.  The main result is stated using the following terms defined on the graphical model:
\begin{align}
p_1 &= |V_1| + |T| \,, \;
p_2 = |V_2| + |T| \,, \;
p_T = |T \cup ne_{G^*}(T) |  \,, \;
\eta_T = |T| \label{eq:p1} \\
\rho_{0} &= \inf\{|\rho_{ij|S}|: i,j \;s.t.\;|S| \le \eta_T \;\&\; |\rho_{ij|S}| > 0 \} \\
\rho_{1} &= \inf\{|\rho_{ij|S}|: i \in V_1, j \in V_1 \cup T \;s.t.\; (i,j) \in E(G^*), S \subseteq V_1 \cup T, |S| \le \eta\}  \\
\rho_{2} &= \inf\{|\rho_{ij|S}|: i \in V_2, j \in V_2 \cup T \;s.t.\; (i,j) \in E(G^*), S \subseteq V_2 \cup T, |S| \le \eta \}  \\
\rho_{T} &= \inf\{|\rho_{ij|S}|: i,j \in T\;s.t.\; (i,j) \in E, S \subseteq T \cup ne_{G^*}(T), \eta_T < |S| \le \eta \} \,, \label{eq:rhoTdef}
\end{align}
The term $\rho_0$ is a measure of how hard it is to learn the graph $H$ in Step 1 so that $E(G^*) \subseteq E(H)$ and all edges that have a separator of size less than $|T|$ are deleted in $H$.  The terms $\rho_{1}$ and $\rho_2$ are measures of how hard it is learn the edges in $G^*[{V_1 \cup T}] \backslash K_T$ and $G^*[V_2 \cup T] \backslash K_T$ (Step 2), respectively, given that $E(G^*) \subseteq E(H)$.  The term $\rho_T$ is a measure of how hard it is learn the graph over the nodes ${T}$ given that we know the edges that connect $V_1$ to $T$ and $V_2$ to $T$.

\begin{theorem}
\label{thm:mainResult1}
Under Assumptions~(A1)-(A6), there exists a conditional independence test such that if
\begin{align}
n = \Omega\left(\max\left\{ 
\rho_0^{-2}\eta_T \log(p), 
\rho_1^{-2}\eta \log(p_1),
\rho_2^{-2}\eta \log(p_2),
\rho_T^{-2}\eta \log(p_T)
\right\} \right) \,, \label{eq:pcref}
\end{align}
then $P(\widehat{G} \ne G) \rightarrow 0$ as $n \rightarrow \infty$.
\end{theorem}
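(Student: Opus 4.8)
The plan is to prove consistency of each of the four steps separately and then combine them with a union bound over the constantly many steps. The single technical engine is a uniform concentration inequality for the empirical conditional correlation coefficient: under (A1) and (A4), for any pair $(i,j)$ and any conditioning set $S$ with $|S|\le\eta$, the Fisher $z$-transform $\widehat z_{ij|S}=\tfrac12\log\tfrac{1+\widehat\rho_{ij|S}}{1-\widehat\rho_{ij|S}}$ concentrates around its population value at rate $P(|\widehat z_{ij|S}-z_{ij|S}|>\gamma)\le C_1\exp(-C_2(n-\eta)\gamma^2)$, with constants depending only on $M$ and $L$. I would state this as a lemma (it is the Gaussian partial-correlation bound underlying the analysis of the PC-algorithm) and then, for each step, choose the threshold $\lambda_n$ strictly inside the gap $(0,\rho_\bullet)$ cut out by the relevant minimum correlation, so that a correct test amounts to avoiding a deviation of size $\Omega(\rho_\bullet)$.

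For Step~1, I would take $\lambda_n^0\asymp\rho_0$ and union-bound the concentration inequality over all $\binom{p}{2}$ pairs and all conditioning sets of size $\le\eta_T$, of which there are at most $p^{\eta_T+2}$. On the resulting high-probability event every true edge survives (a true edge has $|\rho_{ij|S}|>0$ for every admissible $S$, since by faithfulness (A2) no set can separate adjacent vertices), while every non-edge admitting a separator of size $\le\eta_T$ is deleted; this certifies $E(G^*)\subseteq E(H)$ and fixes the $\rho_0^{-2}\eta_T\log p$ term. For Step~2 the key structural input is Proposition~\ref{prop:regionest}: because $T$ separates $V_1$ and $V_2$, every edge in $H'_{V_1\cup T}$ has a separator contained in $V_1\cup T$, so restricting $\fPC$ to conditioning sets inside $V_1\cup T$ (which is what Line~4 of Algorithm~\ref{alg:fPC} does once $H$ is replaced by $H[V_1\cup T]\cup K_T$) loses nothing. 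The union bound then runs only over pairs and conditioning sets inside $V_1\cup T$, i.e.\ at most $p_1^{\eta+2}$ tests, yielding the $\rho_1^{-2}\eta\log p_1$ term; the argument for $V_2\cup T$ is identical and gives $\rho_2^{-2}\eta\log p_2$. The entire gain in sample complexity is located here, in the replacement of $\log p$ by $\log p_1$ or $\log p_2$.

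For Step~3 I would condition on success of Steps~1--2, a high-probability event on which $\widehat G_{V_1}\cup\widehat G_{V_2}$ recovers all edges incident to $T$ except those internal to $T$, so that $ne_{\widehat G_{V_1}\cup\widehat G_{V_2}}(T)=ne_{G^*}(T)$ and the conditioning universe $T\cup ne_{G^*}(T)$ is the correct one. Internal $T$-edges absent in $G^*$ necessarily require separators of size $>\eta_T$ (else Step~1 would have removed them), which is exactly why $\rho_T$ ranges over $\eta_T<|S|\le\eta$; union-bounding over the at most $p_T^{\eta+2}$ relevant tests gives the $\rho_T^{-2}\eta\log p_T$ term. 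Step~4 then merges the three edge sets, and since $E(H'_R)\cap E(H'_S)=\emptyset$ for distinct regions, the merged graph equals $G^*$ on the intersection of the good events; a final union bound over the four steps shows $P(\widehat G\ne G^*)\to0$ under \eqref{eq:pcref}.

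The main obstacle I anticipate is the statistical dependence between steps: Step~3 runs on a \emph{random} conditioning universe determined by the output of Steps~1--2, so the clean union bound applies only after conditioning on the high-probability event that the earlier steps succeeded, and one must check that this conditioning does not spoil the concentration bound for the Step~3 tests (it does not, because once the universe is fixed those tests are fixed functionals of $\Xf^n$). A secondary subtlety is the structural claim that separators for edges inside $V_1\cup T$ may be taken inside $V_1\cup T$; I would derive it from the local Markov property, noting that $i\in V_1$ forces $ne_{G^*}(i)\subseteq V_1\cup T$, since a neighbor of $i$ in $V_2$ would give a $V_1$--$V_2$ path avoiding $T$, contradicting that $T$ separates $V_1$ and $V_2$.
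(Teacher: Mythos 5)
Your proposal is correct and follows essentially the same route as the paper's proof: the same concentration inequality for empirical conditional correlations (the paper uses it directly via the analysis of Theorem~\ref{thm:suffcondgeneral}, you via the equivalent Fisher $z$-transform), union bounds of size $p^{\eta_T+2}$, $p_1^{\eta+2}$, $p_2^{\eta+2}$, $p_T^{\eta+2}$ for the respective steps with thresholds $\lambda_n \asymp \rho_\bullet$, and a final decomposition of $P(\widehat{G} \ne G^*)$ into per-step conditional error probabilities. You additionally spell out two structural points that the paper leaves implicit (that separators for edges inside $V_k \cup T$ can be taken inside $V_k \cup T$, and that on the good event the Step-3 conditioning universe equals $T \cup ne_{G^*}(T)$), which only strengthens the argument.
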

\begin{proof}
 See Appendix~\ref{app:mainResult1}.
\end{proof}
We now make several remarks regarding Theorem~\ref{thm:mainResult1} and its consequences.

\begin{remark}[Comparison to Necessary Conditions]
Using results from \citet{WangWainwright2010}, it follows that a necessary condition for any algorithm to recover the graph $G^*$ that satisfies Assumptions (A1) and (A6) is that $n = \Omega( \max \{ \theta_{1}^{-2} \log(p_1-d), \theta_{2}^{-2} \log(p_2-d) \}$, where $d$ is the maximum degree of the graph and $\theta_1$ and $\theta_2$ are defined as follows:
\begin{align}
\theta_k &= \min_{(i,j) \in G^*[V_k \cup T] \backslash G^*[T]} \frac{|\Sigma_{ij}^{-1}|}{\sqrt{|\Sigma_{ii}^{-1} \Sigma_{jj}^{-1}|}} \,, k = 1,2 \,.
\end{align}
If $\eta$ is a constant and $\rho_1$ and $\rho_2$ are chosen so that the corresponding expressions dominate all other expressions, then (\ref{eq:pcref}) reduces to $n = \Omega( \max\{ \rho_1^{-2} \log(p_1), \rho_2^{-2} \log(p_2) \})$.  Furthermore, for certain classes of Gaussian graphical models, namely walk summable graphical models \citep{malioutov2006walk}, the results in \citet{AnimaTanWillsky2011b} show that there exists conditions under which $\rho_1 = \Omega(\theta_1)$ and $\rho_2 = \Omega(\theta_2)$.  In this case, (\ref{eq:pcref}) is equivalent to $n = \Omega( \max\{ \theta_1^{-2} \log(p_1), \theta_2^{-2} \log(p_2)\})$.  Thus, as long as $p_1,p_2 \gg d$, there exists a family of graphical models for which the sufficient conditions in Theorem~\ref{thm:mainResult1} nearly match the necessary conditions for asymptotically reliable estimation of the graph.  We note that the particular family of graphical models is quite broad, and includes forests, scale-free graphs, and some random graphs.  We refer to \citet{AnimaTanWillsky2011b} for a characterization of such graphical models.

\end{remark}

\begin{remark}[Choice of Regularization Parameters]
We use the conditional independence test in (\ref{eq:cit}) that thresholds the conditional correlation coefficient.  From the proof in  Appendix~\ref{app:mainResult1}, the thresholds, which we refer to as the regularization parameter, are chosen as follows: 
 \begin{align}
 \lambda_n^0 &= O(\rho_0) \text{ and } \rho_0 = \Omega\left( \sqrt{\eta_T \log(p)/n} \right) \\
 \lambda_n^k &= O(\rho_k) \text{ and } \rho_k = \Omega\left( \sqrt{ \eta \log(p_k)/n} \right), k = 1,2 \\
 \lambda_n^T &= O(\rho_T) \text{ and } \rho_T = \Omega\left( \sqrt{ \eta \log(p_T)/n} \right) \,.
 \end{align} 
We clearly see that different regularization parameters are used to estimate different parts of the graph.  Furthermore, just like in the traditional analysis of UGMS algorithms, the optimal choice of the regularization parameter depends on unknown parameters of the graphical model.  In practice, we use model selection algorithms to select regularization parameters.  We refer to Section~\ref{sec:numericalSim} for more details.
\end{remark}

\begin{remark}[Weaker Condition]
If we do not use the junction tree based approach outlined in Steps~1-4, and instead directly apply $\fPC$, the sufficient condition on the number of observations will be $n = \Omega( \rho_{min}^{-2} \eta \log(p))$, where \begin{equation}
\rho_{min} := \inf\{|\rho_{ij|S}|: (i,j) \in E(G^*), |S| \le \eta \} \,.  
\end{equation}
This result is proven in Appendix~\ref{app:GeneralResult} using results from \citet{KalischBuhlmann2007,AnimaTanWillsky2011b}.  Since $\rho_{min} \le \min\{\rho_0,\rho_1,\rho_2,\rho_T\}$, it is clear that (\ref{eq:pcref}) is a weaker condition.  The main reason for this difference is that the junction tree approach defines an \emph{ordering} on the edges to test if an edge belongs to the true graph.  This ordering allows for a reduction in separator search space (see ${\cal S}_{ij}$ in Algorithm~\ref{alg:fPC}) for testing edges over the set $T$.  Standard analysis of $\fPC$ assumes that the edges are tested randomly, in which case, the separator search space is always upper bounded by the full set of nodes.
\end{remark}

\begin{remark}[Reduced Sample Complexity]
\label{rm:prev}
Suppose $\eta$, $\rho_0$, and $\rho_T$ are constants and $\rho_1 < \rho_2$.  In this case, (\ref{eq:pcref}) reduces to
\begin{align}
n = \Omega\left( \max\left\{ \log(p),\rho_1^{-2} \log(p_1), \rho_2^{-2} \log(p_2) \right\}  \right) \,. \label{eq:ssim}
\end{align}
If $\rho_1^{-2} = \Omega\left( \max\left\{ \rho_2^{-2} \log(p_2)/\log(p_1), \log(p)  \right\} \right)$, then (\ref{eq:ssim}) reduces to
\begin{equation}
n = \Omega\left( \rho_1^{-2} \log(p_1) \right) \,.
\end{equation}
On the other hand, if we do not use junction trees, $n = \Omega\left( \rho_{min}^{-2} \log(p)\right)$, where $\rho_{min} \le \rho_1$.
Thus, if $p_1 \ll p$, for example $p_1 = \log(p)$, then using the junction tree based $\fPC$ requires lower number of observations for consistent UGMS.  Informally, the above condition says that if the graph structure in (A6) is easy to identify, $p_1 \ll p_2$, and the minimal conditional correlation coefficient over the true edges lies in the smaller cluster (but not over the separator),  the junction tree framework may accurately learn the graph using significantly less number of observations.
\end{remark}

\begin{remark}[Learning Weak Edges]
We now analyze Theorem~\ref{thm:mainResult1} to see how the conditional correlation coefficients scale for high-dimensional consistency.  Under the assumption in Remark~\ref{rm:prev}, it is easy to see that the minimal conditional correlation coefficient scales as $\Omega(\sqrt{\log(p_1)/n})$ when using junction trees and as $\Omega(\sqrt{\log(p)/n})$ when not using junction trees.  This suggests that when $p_1 \ll p$, it may be possible to learn edges with weaker conditional correlation coefficients when using junction trees.  Our numerical simulations in Section~\ref{sec:numericalSim} empirically show this property of the junction tree framework.
\end{remark}

\begin{remark}[Computational complexity]
It is easy to see that the worst case computational complexity of the PC-Algorithm is $O(p^{\eta+2})$ since there are $O(p^2)$ edges and testing for each edge requires a search over at most $O(p^{\eta})$ separators.  The worst case computational complexity of Steps~1-4 is roughly $O\left(p^{|T|+2} + p_1^{\eta+2} + p_2^{\eta+2} + p_T^{\eta+2}\right)$.  Under the conditions in Remark~8.3 and when $p_1 \ll p$, this complexity is roughly $O(p^{\eta+2})$, which is the same as the standard PC-Algorithm.  In practice, especially when the graph is sparse, the computational complexity is much less than $O(p^{\eta+2})$ since the PC-Algorithm restricts the search space for finding separators.
\end{remark}

\begin{figure}
\centering
\scalebox{0.6} 
{
\begin{pspicture}(0,-2.152845)(15.512709,2.162845)
\definecolor{color114b}{rgb}{0.8,0.8,1.0}
\psellipse[linewidth=0.04,dimen=outer,fillstyle=solid,fillcolor=color114b](1.4905808,-1.482845)(0.95,0.51)
\psellipse[linewidth=0.04,dimen=outer,fillstyle=solid,fillcolor=color114b](1.5305808,0.7371551)(0.95,0.51)
\psellipse[linewidth=0.04,dimen=outer,fillstyle=solid,fillcolor=color114b](4.8705807,0.7371551)(0.95,0.51)
\psellipse[linewidth=0.04,dimen=outer,fillstyle=solid,fillcolor=color114b](4.8705807,-1.482845)(0.95,0.51)
\psellipse[linewidth=0.04,dimen=outer,fillstyle=solid,fillcolor=color114b](8.23058,0.71715504)(0.95,0.51)
\psframe[linewidth=0.04,dimen=outer](3.4405808,1.067155)(2.9605808,0.50715506)
\psframe[linewidth=0.04,dimen=outer](6.7405806,1.0071551)(6.2605805,0.4471551)
\rput{90.0}(4.487736,-5.2734256){\psframe[linewidth=0.04,dimen=outer](5.1205807,-0.11284491)(4.640581,-0.6728449)}
\rput{90.0}(1.1277359,-1.9134257){\psframe[linewidth=0.04,dimen=outer](1.7605808,-0.11284491)(1.2805808,-0.6728449)}
\psline[linewidth=0.04cm](2.4805808,0.7671551)(2.9805808,0.7671551)
\psline[linewidth=0.04cm](3.4205809,0.7671551)(3.9405808,0.7671551)
\psline[linewidth=0.04cm](5.800581,0.7471551)(6.280581,0.7471551)
\psline[linewidth=0.04cm](6.7205806,0.7471551)(7.300581,0.7471551)
\psline[linewidth=0.04cm](4.900581,0.24715507)(4.900581,-0.15284492)
\psline[linewidth=0.04cm](4.880581,-0.6128449)(4.880581,-0.97284496)
\psline[linewidth=0.04cm](1.5205808,0.24715507)(1.5205808,-0.15284492)
\psline[linewidth=0.04cm](1.5005808,-0.6128449)(1.5005808,-0.97284496)
\usefont{T1}{ptm}{m}{n}
\rput(1.4568552,0.7371551){\huge $V_2$}
\usefont{T1}{ptm}{m}{n}
\rput(1.4568552,-1.482845){\huge $V_1$}
\usefont{T1}{ptm}{m}{n}
\rput(4.8168554,0.7571551){\huge $V_3$}
\usefont{T1}{ptm}{m}{n}
\rput(4.7968554,-1.482845){\huge $V_4$}
\usefont{T1}{ptm}{m}{n}
\rput(8.156856,0.71715504){\huge $V_5$}
\psbezier[linewidth=0.02,linestyle=dashed,dash=0.16cm 0.16cm](0.5019791,1.3070685)(1.0039582,2.152845)(9.5954485,1.8617712)(9.848015,0.9162697)(10.100581,-0.02923172)(5.8366313,-2.0729175)(4.834047,-2.0928812)(3.8314626,-2.112845)(2.2728229,-0.9671744)(1.5448843,-0.2952066)(0.8169457,0.37676117)(0.0,0.461292)(0.5019791,1.3070685)
\psline[linewidth=0.08cm,arrowsize=0.05291667cm 2.0,arrowlength=1.4,arrowinset=0.4]{->}(8.94058,-0.5528449)(10.260581,-0.5528449)
\psellipse[linewidth=0.04,dimen=outer,fillstyle=solid,fillcolor=color114b](12.170581,-1.642845)(0.95,0.51)
\rput{90.0}(11.647737,-12.753425){\psframe[linewidth=0.04,dimen=outer](12.440581,-0.27284402)(11.960581,-0.832844)}
\psline[linewidth=0.04cm](12.18058,-0.7728449)(12.18058,-1.1328449)
\usefont{T1}{ptm}{m}{n}
\rput(12.136855,-1.642845){\huge $V_1$}
\psellipse[linewidth=0.04,dimen=outer,fillstyle=solid,fillcolor=color114b](12.205291,0.88500005)(1.7847095,0.802155)
\psline[linewidth=0.04cm](12.200581,0.08715508)(12.200581,-0.31284493)
\usefont{T1}{ptm}{m}{n}
\rput(12.196855,0.93715507){\huge $\cup_{i=2}^{5} V_i$}
\end{pspicture} 
}
\caption{Junction tree representation with clusters $V_1,\ldots,V_5$ and separators denotes by rectangular boxes.  We can cluster vertices in the junction tree to get a two cluster representation as in Figure~\ref{fig:genstruc}.}
\label{fig:seg}
\end{figure}
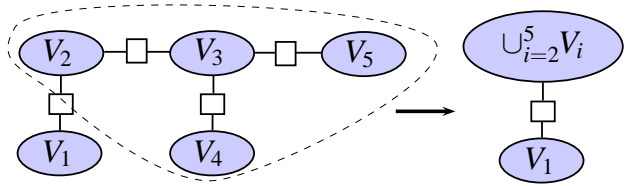

\begin{remark}[Using other UGMS Algorithms]
Although our analysis used the PC-Algorithm to derive sufficient conditions for accurately estimating the graph, we can easily use other algorithms, such as the graphical Lasso or the neighborhood selection based Lasso, for analysis.  The main difference will be in the assumptions imposed on the graphical model.
\end{remark}

\begin{remark}[Extensions]
\label{rem:ext}
We have analyzed the junction tree framework assuming that the junction tree of $H$ only has two clusters.  One way to generalize our analysis to junction trees with multiple clusters is to merge clusters so that the resulting junction tree admits the structure in Figure~\ref{fig:genstruc}.  For example, suppose the graph $G^*$ has a junction tree representation as in Figure~\ref{fig:seg} with five clusters.  If $|V_1 \cap V_2| < \eta$, then we can merge the clusters $V_2,V_3,\ldots,V_5$ so that the resulting junction tree admits the two cluster representation in Figure~\ref{fig:genstruc}.  Furthermore, we can also generalize Theorem~\ref{thm:mainResult1} to cases when $|T| = \eta$. The main change in the analysis will be in the definition of $\rho_0$. For example, if the graph is a chain so that the first $p_1$ vertices are associated with ``weak edges", we can get similar results as in Theorem~\ref{thm:mainResult1}.  Finally, we note that a full analysis of the junction tree framework, that also incorporates the step of updating the junction tree in Algorithm~\ref{alg:jtframework}, is challenging and will be addressed in future work.
\end{remark}

\section{Numerical Simulations}
\label{sec:numericalSim}

In this section, we present numerical simulations that highlight the advantages of using the junction tree framework for UGMS.  Throughout this Section, we assume a Gaussian graphical model such that $P_X \sim {\cal N}(0,\Theta^{-1})$ is Markov on $G^*$.  It is well known that this implies that $(i,j) \notin G^* \Longleftrightarrow \Theta_{ij} = 0$ \citep{SpeedKiiveri1986}.  Some algorithmic details used in the simulations are described as follows.

\medskip

\noindent
\textbf{Computing $H$:}  We apply Algorithm~\ref{alg:fPC} with a suitable value of $\kappa$ in such a way that the separator search space ${\cal S}_{ij}$ (see Line~4) is restricted to be small.  In other words, we do not test for all possible conditional independence tests so as to restrict the computational complexity of the screening algorithm.  We use the conditional partial correlation to test for conditional independence and choose a \textit{separate threshold} to test for each edge in the graph.  The thresholds for the conditional independence test are computed using $5$-fold cross-validation.  The computational complexity of this step is roughly $O(p^2)$ since there are $O(p^2)$ edges to be tested.  Note that this method for computing $H$ is equivalent to Step~1 in Section~\ref{subsec:theory} with $|T| = \kappa$.  Finally, we note that the above method does not guarantee that all edges in $G^*$ will be included in $H$.  This can result in false edges being included in the junction tree estimated graphs.  To avoid this situation, once a graph estimate $\widehat{G}$ has been computed using the junction tree based UGMS algorithm, we apply conditional independence tests again to prune the estimated edge set.

\medskip

\noindent
\textbf{Computing the junction tree:} We use standard algorithms in the literature for computing close to optimal junction trees\footnote{We use the GreedyFillin heuristic.  This is known to give good results with reasonable computational time \citep{Kjaerulff1990}.}.  Once the junction tree is computed, we merge clusters so that the maximum size of the separator is at most $\kappa + 1$, where $\kappa$ is the parameter used when computing the graph $H$.  For example, in Figure~\ref{fig:seg}, if the separator associated with $V_2$ and $V_3$ has cardinality greater than $\kappa+1$, then we merge $V_2$ and $V_3$ and resulting junction tree is such that $V_1$, $V_4$, and $V_5$ all connect to the cluster $V_2 \cup V_3$.  

\medskip

\noindent
\textbf{UGMS Algorithms:}  We apply the junction tree framework in conjunction with graphical Lasso ($\gL$) \citep{BanerjeeGhaoui2008}, neighborhood selection using Lasso ($\nL$) \citep{NicolaiPeter2006}, and the PC-Algorithm ($\fPC$) \citep{PCAlgorithm}.  See Appendix~\ref{app:examples} for a review of $\gL$ and $\nL$ and Algorithm~\ref{alg:fPC} for $\fPC$.  When using $\nL$, we use the intersection rule to combine neighborhood estimates.  Further, we use the adaptive Lasso \citep{Zou2006} for finding neighbors of a vertex since this is known to give superior results for variable selection \citep{van2011adaptive}.

\medskip

\noindent
\textbf{Choosing Regularization Parameters:}  An important step when applying UGMS algorithms is to choose a suitable regularization parameter.  It is now well known that classical methods, such as cross-validation and information criterion based methods, tend to choose a much larger number of edges when compared to an oracle estimator for high-dimensional problems \citep{meinshausen2010stability,liu2010stability}.  Several alternative methods have been proposed in the literature; see for example stability selection \citep{meinshausen2010stability,liu2010stability} and extended Bayesian information (EBIC) criterion \citep{chen2008extended,foygel2010extended}.  In all our simulations, we use EBIC since it is much faster than stability based methods when the distribution is Gaussian.  EBIC selects a regularization parameter $\widehat{\lambda}_n$ as follows:
\begin{equation}
\widehat{\lambda}_n = \max_{\lambda_n > 0}  \left\{ n \left[ \log \det \widehat{ \Theta }_{\lambda_n} - \text{trace} (\widehat{S} \Theta)  \right] + |E(\widehat{G}_{\lambda_n})| \log n + 4 \gamma |E(\widehat{G}_{\lambda_n})| \log p \right\} \,,
\end{equation}
where $\widehat{S}$ is the empirical covariance matrix, $\widehat{\Theta}_{\lambda_n}$ is the estimate of the inverse covariance matrix and $|E(\widehat{G}_{\lambda_n})|$ is the number of edges in the estimated graph.  The estimate $\widehat{\lambda}_n$  depends on a parameter $\gamma \in [0,1]$ such that $\gamma = 0$ results in the BIC estimate and increasing $\gamma$ produces sparser graphs.  The authors in reference \citet{foygel2010extended} suggest that $\gamma = 0.5$ is a reasonable choice for high-dimensional problems.  When solving subproblems using Algorithm~\ref{alg:ugmsregion}, the $\log p$ term is replaced by $\log |\overline{R}|$, $\widehat{\Theta}_{\lambda_n}$ is replaced by the inverse covariance over the vertices $\overline{R}$, and $|\widehat{G}_{\lambda_n}|$ is replaced by the number of edges estimated from the graph $H'_R$.

\medskip

\noindent
\textbf{Small subproblems:}  Whenever $|\overline{R}|$ is small (less than 8 in our simulations), we independently test whether each edge is in $G^*$ using hypothesis testing.   This shows the application of using different algorithms to learn different parts of the graph.

\subsection{Results on Synthetic Graphs}
\label{subsec:syn}

We assume that $\Theta_{ii} = 1$ for all $i = 1,\ldots,p$.  We refer to all edges connected to the first $p_1$ vertices as \textit{weak edges} and the rest of the edges are referred to as \textit{strong edges}.  The different types of synthetic graphical models we study are described as follows:

\begin{itemize}
\item Chain ($\CHo$ and $\CHt$): $\Theta_{i,i+1} = \rho_1$ for $i = 1,\ldots,p_1-1$ (weak edges) and $\Theta_{i,i+1} = \rho_2$ for $i = p_1,p-1$ (strong edges).  For $\CHo$, $\rho_1 = 0.15$ and $\rho_2 = 0.245$.  For $\CHt$, $\rho_1 = 0.075$ and $\rho_2 = 0.245$.  Let $\Theta_{ij} = \Theta_{ji}$.

\item Cycle ($\CYo$ and $\CYt$): $\Theta_{i,i+1} = \rho_1$ for $i = 1,\ldots,p_1-1$ (weak edges) and $\Theta_{i,i+1} = \rho_2$ for $i = p_1,p-1$ (strong edges).  In addition, $\Theta_{i,i+3} = \rho_1$ for $i = 1,\ldots,p_1-3$ and $\Theta_{i,i+3} = \rho_2$ for $i = p_1,p_1+1,\ldots,p-3$.  This introduces multiple cycles in the graph.  For $\CYo$, $\rho_1 = 0.15$ and $\rho_2 = 0.245$.  For $\CYt$, $\rho_1 = 0.075$ and $\rho_2 = 0.245$.

\item Hub ($\HBo$ and $\HBt$): For the first $p_1$ vertices, construct as many star\footnote{A star is a tree where one vertex is connected all other vertices.} graphs of size $d_1$ as possible.  For the remaining vertices, construct star graphs of size $d_2$ (at most one may be of size less than $d_2$).  The hub graph $G^*$ is constructed by taking a union of all star graphs. For $(i,j) \in G^*$ s.t. $i,j \le p_1$, let $\Theta_{i,j} = 1/d_1$.  For the remaining edges, let $\Theta_{ij} = 1/d_2$. For $\HBo$, $d_1 = 8$ and $d_2 = 5$.  For $\HBt$, $d_1 = 12$ and $d_2 = 5$.

\item Neighborhood graph ($\NBo$ and $\NBt$): Randomly place vertices on the unit square at coordinates $y_1,\ldots,y_p$.  Let $\Theta_{ij} = 1/\rho_1$ with probability $(\sqrt{2\pi})^{-1} \exp(-4||y_i-y_j||_2^2)$, otherwise $\Theta_{ij} = 0$ for all $i,j \in \{1,\ldots,p_1\}$ such that $i > j$.  For all $i,j \in \{p_1+1,\ldots,p\}$ such that $i > j$, $\Theta_{ij} = \rho_2$.  For edges over the first $p_1$ vertices, delete edges so that each vertex is connected to at most $d_1$ other vertices.  For the vertices $p_1+1,\ldots,p$, delete edges such that the neighborhood of each vertex is at most $d_2$.  Finally, randomly add four edges from a vertex in $\{1,\ldots,p_1\}$ to a vertex in $\{p_1,p_1+1,\ldots,p\}$ such that for each such edge, $\Theta_{ij} = \rho_1$.  We let $\rho_2 = 0.245$, $d_1 = 6$, and $d_2 = 4$.  For $\NBo$, $\rho_1 = 0.15$ and for $\NBt$, $\rho_2 = 0.075$.
\end{itemize}

Notice that the parameters associated with the weak edges are lower than the parameters associated with the strong edges.  Some comments regarding notation and usage of various algorithms is given as follows.

\begin{itemize}
\item The junction tree versions of the UGMS algorithms are denoted by $\JgL$, $\JfPC$, and $\JnL$.
\item  We use EBIC with $\gamma = 0.5$ to choose regularization parameters when estimating graphs using $\JgL$ and $\JfPC$.  To objectively compare $\JgL$ ($\JfPC$) and $\gL$ ($\fPC$), we make sure that the number of edges estimated by $\gL$ ($\fPC$) is roughly the same as the number of edges estimated by $\JgL$ ($\JfPC$).
\item The $\nL$ and $\JnL$ estimates are computed differently since it is difficult to control the number of edges estimated using both these algorithms\footnote{Recall that both these algorithms use different regularization parameters, so there may exist multiple different estimates with the same number of edges}.  We apply both $\nL$ and $\JnL$ for multiple different values of $\gamma$ (the parameter for EBIC) and choose graphs so that the  number of edges estimated is closest to the number of edges estimated by $\gL$.
\item When applying $\fPC$ and $\JfPC$, we choose $\kappa$ as $1$, $2$, $1$, and $3$ for Chain, Cycle, Hub, and Neighborhood graphs, respectively.  When computing $H$, we choose $\kappa$ as $0$, $1$, $0$, and $2$ for Chain, Cycle, Hub, and Neighborhood graphs, respectively.
\end{itemize}

Tables~\ref{tab:chain}-\ref{tab:nb300} summarize the results for the different types of synthetic graphical models.  For an estimate $\widehat{G}$ of $G^*$, we evaluate $\widehat{G}$ using the weak edge discovery rate (WEDR), false discovery rate (FDR), true positive rate (TPR), and the edit distance (ED).  
\begin{align}
\text{WEDR} &= \frac{\text{ $\#$ weak edges in $\widehat{G}$ }}{\text{$\#$ of weak edges in $G^*$}} \\
\text{FDR} &= \frac{\text{$\#$ of edges in $\widehat{G} \backslash G^*$}}{\text{$\#$ of edges in $\widehat{G}$}} \\
\text{TPR} &= \frac{\text{$\#$ of edges in $\widehat{G} \cap G^*$}}{\text{$\#$ of edges in $G^*$}} \\
\text{ED} &= \{\text{$\#$ edges in $\widehat{G} \backslash G^*$}\} + \{\text{$\#$ edges in $ G^* \backslash \widehat{G}$}\} \,,
\end{align}
Recall that the weak edges are over the first $p_1$ vertices in the graph.  Naturally, we want WEDR and TPR to be large and FDR and ED to be small.  Each entry in the table shows the mean value and standard error (in brackets) over $50$ observations.  We now make some remarks regarding the results.

\begin{table}
\centering
\caption{Results for Chain graphs: $p = 100$ and $p_1 = 20$}
{\scriptsize{
   \begin{tabular}{llllllll} \toprule
      Model & $n$ & Alg & WEDR & FDR & TPR &ED & $|\widehat{G}|$ \\
\toprule
$\CHo$ & \multirow{6}{*}{} $300$ & $\JgL$  & $\textbf{0.305}$ ($0.005$) & $0.048$ ($0.001$) & $0.767$ ($0.002$) & $27.0$ ($0.176$) & $79.8$ \\
{$p = 100$} & & $\gL$  & ${0.180}$ ($0.004$) & $0.061$ ($0.001$) & $0.757$ ($0.001$) & $29.0$ ($0.153$) & $79.8$ \\ \cdashline{3-8} 
\multirow{4}{*}{} && $\JfPC$  & $\textbf{0.312}$ ($0.004$) & $0.047$ ($0.001$) & $0.775$ ($0.001$) & $26.0$ ($0.162$) & $80.5$ \\
&& $\fPC$  & $0.264$ ($0.005$) & $0.047$ ($0.001$) & $0.781$ ($0.001$) & $25.6$ ($0.169$) & $81.2$ \\ \cdashline{3-8} 
&&$\JnL$  & $\textbf{0.306}$ ($0.005$) & $0.072$ ($0.001$) & $0.769$ ($0.002$) & $28.8$ ($0.188$) & $82.1$ \\
&&$\nL$ & $0.271$ ($0.005$) & $0.073$ ($0.001$) & $0.757$ ($0.001$) & $30.0$ ($0.197$) & $80.9$ \\
\hline
$\CHt$ & \multirow{6}{*}{} $300$ & $\JgL$  & $\textbf{0.052}$ ($0.002$) & $0.067$ ($0.001$) & $0.727$ ($0.001$) & $32.2$ ($0.173$) & $77.3$ \\
{$p = 100$} & & $\gL$  & $0.009$ ($0.001$) & $0.062$ ($0.001$) & $0.733$ ($0.002$) & $31.3$ ($0.162$) & $77.4$ \\ \cdashline{3-8} 
\multirow{4}{*}{} && $\JfPC$  & $\textbf{0.048}$ ($0.002$) & $0.064$ ($0.001$) & $0.735$ ($0.001$) & $31.2$ ($0.169$) & $77.8$ \\
&& $\fPC$  & $0.0337$ ($0.002$) & $0.055$ ($0.001$) & $0.748$ ($0.001$) & $29.3$ ($0.144$) & $78.4$ \\ \cdashline{3-8} 
&&$\JnL$  & $\textbf{0.052}$ ($0.002$) & $0.077$ ($0.001$) & $0.733$ ($0.001$) & $32.5$ ($0.186$) & $78.7$ \\
&&$\nL$ & $0.039$ ($0.002$) & $0.086$ ($0.001$) & $0.723$ ($0.001$) & $34.2$ ($0.216$) & $78.4$ \\
\hline
$\CHo$ & \multirow{6}{*}{} $500$ & $\JgL$  & $\textbf{0.596}$ ($0.006$) & $0.021$ ($0.001$) & $0.916$ ($0.001$) & $10.2$ ($0.133$) & $92.6$ \\
{$p = 100$} & & $\gL$  & ${0.44}$ ($0.005$) & $0.050$ ($0.001$) & $0.889$ ($0.001$) & $15.6$ ($0.132$) & $92.7$ \\ \cdashline{3-8} 
\multirow{4}{*}{} && $\JfPC$  & $\textbf{0.612}$ ($0.005$) & $0.022$ ($0.001$) & $0.921$ ($0.001$) & $9.86$ ($0.128$) & $93.2$ \\
&& $\fPC$  & $0.577$ ($0.005$) & $0.032$ ($0.001$) & $0.916$ ($0.001$) & $11.4$ ($0.124$) & $93.7$ \\ \cdashline{3-8} 
&&$\JnL$  & $\textbf{0.623}$ ($0.005$) & $0.059$ ($0.001$) & $0.922$ ($0.001$) & $13.5$ ($0.133$) & $97.0$ \\
&&$\nL$ & $0.596$ ($0.005$) & $0.069$ ($0.001$) & $0.918$ ($0.001$) & $14.9$ ($0.164$) & $97.6$ \\
\hline
$\CHt$ & \multirow{6}{*}{} $500$ & $\JgL$  & $\textbf{0.077}$ ($0.002$) & $0.044$ ($0.001$) & $0.816$ ($0.001$) & $22.0$ ($0.107$) & $84.5$ \\
{$p = 100$} & & $\gL$  & $0.0211$ ($0.001$) & $0.053$ ($0.001$) & $0.808$ ($0.000$) & $23.5$ ($0.082$) & $84.6$ \\ \cdashline{3-8} 
\multirow{4}{*}{} && $\JfPC$  & $\textbf{0.073}$ ($0.002$) & $0.042$ ($0.001$) & $0.817$ ($0.001$) & $21.7$ ($0.082$) & $84.5$ \\
&& $\fPC$  & $0.0516$ ($0.002$) & $0.049$ ($0.001$) & $0.815$ ($0.001$) & $22.5$ ($0.092$) & $84.9$ \\ \cdashline{3-8} 
&&$\JnL$  & $\textbf{0.076}$ ($0.002$) & $0.070$ ($0.001$) & $0.818$ ($0.001$) & $24.2$ ($0.102$) & $87.2$ \\
&&$\nL$ & $0.066$ ($0.002$) & $0.077$ ($0.001$) & $0.815$ ($0.001$) & $25.1$ ($0.126$) & $87.5$ \\
    \bottomrule
    \end{tabular}
}}
\label{tab:chain}
\end{table}

\begin{table}
\centering
\caption{Results for Cycle graphs, $p = 100$ and $p_1 = 20$}
{\scriptsize{
   \begin{tabular}{llllllll} \toprule
      Model & $n$ & Alg & WEDR & FDR & TPR &ED & $|\widehat{G}|$ \\
\toprule
$\CYo$ & \multirow{6}{*}{} $300$ & $\JgL$  & $\textbf{0.314}$ ($0.003$) & $0.036$ ($0.001$) & $0.814$ ($0.001$) & $28.5$ ($0.142$) & $111$ \\
{$p = 100$} & & $\gL$  & $0.105$ ($0.003$) & $0.057$ ($0.001$) & $0.798$ ($0.001$) & $32.9$ ($0.16$) & $112$ \\  \cdashline{3-8} 
\multirow{4}{*}{} && $\JfPC$  & $\textbf{0.326}$ ($0.004$) & $0.030$ ($0.001$) & $0.819$ ($0.001$) & $27.2$ ($0.18$) & $112$ \\
&& $\fPC$  & $0.307$ ($0.004$) & $0.027$ ($0.001$) & $0.826$ ($0.001$) & $26$ ($0.169$) & $112$ \\\cdashline{3-8} 
&&$\JnL$  & $\textbf{0.342}$ ($0.004$) & $0.043$ ($0.001$) & $0.813$ ($0.001$) & $29.5$ ($0.175$) & $112$ \\  
&&$\nL$ & $0.299$ ($0.004$) & $0.044$ ($0.001$) & $0.793$ ($0.001$) & $32.3$ ($0.192$) & $110$ \\
\hline
$\CYt$ & \multirow{6}{*}{} $300$ & $\JgL$  & $\textbf{0.047}$ ($0.002$) & $0.045$ ($0.001$) & $0.762$ ($0.001$) & $36.2$ ($0.163$) & $105$ \\
{$p = 100$} & & $\gL$  & $0.001$ ($0.001$) & $0.049$ ($0.001$) & $0.759$ ($0.001$) & $37.0$ ($0.172$) & $105$ \\  \cdashline{3-8} 
\multirow{4}{*}{} && $\JfPC$  & $\textbf{0.043}$ ($0.002$) & $0.042$ ($0.001$) & $0.764$ ($0.001$) & $35.6$ ($0.174$) & $105$ \\
&& $\fPC$  & $0.027$ ($0.002$) & $0.036$ ($0.001$) & $0.773$ ($0.001$) & $33.7$ ($0.137$) & $106$ \\  \cdashline{3-8} 
&&$\JnL$  & $\textbf{0.042}$ ($0.002$) & $0.058$ ($0.002$) & $0.754$ ($0.001$) & $38.6$ ($0.210$) & $106$ \\
&&$\nL$ & $0.035$ ($0.002$) & $0.057$ ($0.002$) & $0.743$ ($0.001$) & $39.9$ ($0.228$) & $104$ \\
\hline
$\CYo$ & \multirow{6}{*}{} $500$ & $\JgL$  & $\textbf{0.532}$ ($0.005$) & $0.022$ ($0.001$) & $0.907$ ($0.001$) & $15.1$ ($0.139$) & $122$ \\
{$p = 100$} & & $\gL$  & $0.278$ ($0.001$) & $0.071$ ($0.001$) & $0.862$ ($0.001$) & $26.9$ ($0.178$) & $122$ \\  \cdashline{3-8} 
\multirow{4}{*}{} && $\JfPC$  & $\textbf{0.61}$ ($0.004$) & $0.012$ ($0.001$) & $0.925$ ($0.001$) & $11.9$ ($0.150$) & $124$ \\
&& $\fPC$  & $0.609$ ($0.004$) & $0.020$ ($0.001$) & $0.925$ ($0.001$) & $12.5$ ($0.134$) & $125$ \\  \cdashline{3-8} 
&&$\JnL$  & $\textbf{0.612}$ ($0.005$) & $0.028$ ($0.001$) & $0.924$ ($0.001$) & $13.6$ ($0.151$) & $125$ \\
&&$\nL$ & $0.584$ ($0.005$) & $0.041$ ($0.001$) & $0.919$ ($0.001$) & $15.9$ ($0.171$) & $126$ \\
\hline
$\CYt$ & \multirow{6}{*}{} $500$ & $\JgL$  & $\textbf{0.086}$ ($0.003$) & $0.039$ ($0.001$) & $0.821$ ($0.001$) & $28.1$ ($0.116$) & $113$ \\
{$p = 100$} & & $\gL$  & $0.004$ ($0.001$) & $0.058$ ($0.001$) & $0.805$ ($0.000$) & $32.3$ ($0.088$) & $113$ \\  \cdashline{3-8} 
\multirow{4}{*}{} && $\JfPC$  & $\textbf{0.087}$ ($0.002$) & $0.034$ ($0.001$) & $0.825$ ($0.001$) & $27.0$ ($0.099$) & $113$ \\
&& $\fPC$  & $0.074$ ($0.002$) & $0.040$ ($0.001$) & $0.823$ ($0.001$) & $27.9$ ($0.010$) & $113$ \\  \cdashline{3-8} 
&&$\JnL$  & $\textbf{0.085}$ ($0.003$) & $0.045$ ($0.001$) & $0.824$ ($0.001$) & $28.4$ ($0.147$) & $114$ \\
&&$\nL$ & $0.069$ ($0.003$) & $0.053$ ($0.001$) & $0.821$ ($0.001$) & $29.8$ ($0.158$) & $114$ \\
    \bottomrule
    \end{tabular}
}}
\label{tab:cycle}
\end{table}

\begin{table}
\centering
\caption{Results for Hub graphs:$p = 100$ and $p_1 = 20$}
{\scriptsize{
   \begin{tabular}{llllllll} \toprule
      Model & $n$ & Alg & WEDR & FDR & TPR &ED & $|\widehat{G}|$ \\
\toprule
$\HBo$ & \multirow{6}{*}{} $300$ & $\JgL$  & $\textbf{0.204}$ ($0.004$) & $0.039$ ($0.001$) & $0.755$ ($0.002$) & $22.3$ ($0.151$) & $63.7$ \\
{$p = 100$} & & $\gL$  & $0.154$ ($0.004$) & $0.038$ ($0.001$) & $0.758$ ($0.002$) & $22.1$ ($0.130$) & $63.8$ \\  \cdashline{3-8} 
\multirow{4}{*}{} && $\JfPC$  & $\textbf{0.204}$ ($0.004$) & $0.038$ ($0.001$) & $0.753$ ($0.002$) & $22.4$ ($0.160$) & $63.4$ \\
&& $\fPC$  & $0.193$ ($0.004$) & $0.038$ ($0.001$) & $0.762$ ($0.002$) & $21.7$ ($0.143$) & $64.2$ \\  \cdashline{3-8} 
&&$\JnL$  & $0.245$ ($0.005$) & $0.089$ ($0.001$) & $0.750$ ($0.002$) & $26.2$ ($0.174$) & $66.7$ \\
&&$\nL$ & $\textbf{0.247}$ ($0.005$) & $0.098$ ($0.002$) & $0.752$ ($0.002$) & $26.8$ ($0.198$) & $67.6$ \\
\hline
$\HBt$ & \multirow{6}{*}{} $300$ & $\JgL$  & $\textbf{0.044}$ ($0.002$) & $0.047$ ($0.001$) & $0.710$ ($0.001$) & $26.7$ ($0.116$) & $61.2$ \\
{$p = 100$} & & $\gL$  & $0.013$ ($0.002$) & $0.043$ ($0.001$) & $0.716$ ($0.001$) & $26.0$ ($0.121$) & $61.4$ \\  \cdashline{3-8} 
\multirow{4}{*}{} && $\JfPC$  & $\textbf{0.048}$ ($0.002$) & $0.043$ ($0.001$) & $0.709$ ($0.001$) & $26.5$ ($0.108$) & $60.8$ \\
&& $\fPC$  & $0.029$ ($0.002$) & $0.038$ ($0.001$) & $0.718$ ($0.001$) & $25.5$ ($0.121$) & $61.3$ \\  \cdashline{3-8} 
&&$\JnL$  & $\textbf{0.054}$ ($0.003$) & $0.083$ ($0.001$) & $0.704$ ($0.001$) & $29.6$ ($0.146$) & $63.0$ \\
&&$\nL$ & $0.0467$ ($0.002$) & $0.096$ ($0.001$) & $0.700$ ($0.001$) & $30.7$ ($0.138$) & $63.5$ \\
\hline
$\HBo$ & \multirow{6}{*}{} $500$ & $\JgL$  & $\textbf{0.413}$ ($0.007$) & $0.026$ ($0.001$) & $0.870$ ($0.002$) & $12.4$ ($0.156$) & $72.4$ \\
{$p = 100$} & & $\gL$  & $0.364$ ($0.007$) & $0.035$ ($0.001$) & $0.863$ ($0.002$) & $13.7$ ($0.144$) & $72.5$ \\  \cdashline{3-8} 
\multirow{4}{*}{} && $\JfPC$  & $0.438$ ($0.007$) & $0.027$ ($0.001$) & $0.878$ ($0.002$) & $11.9$ ($0.148$) & $73.1$ \\
&& $\fPC$  & $\textbf{0.448}$ ($0.007$) & $0.027$ ($0.001$) & $0.882$ ($0.001$) & $11.6$ ($0.141$) & $73.4$ \\  \cdashline{3-8} 
&&$\JnL$  & $0.507$ ($0.006$) & $0.076$ ($0.001$) & $0.890$ ($0.001$) & $14.9$ ($0.152$) & $78.2$ \\
&&$\nL$ & $\textbf{0.52}$ ($0.007$) & $0.091$ ($0.001$) & $0.893$ ($0.002$) & $15.9$ ($0.191$) & $79.6$ \\
\hline
$\HBt$ & \multirow{6}{*}{} $500$ & $\JgL$  & $\textbf{0.086}$ ($0.003$) & $0.042$ ($0.001$) & $0.794$ ($0.001$) & $19.8$ ($0.086$) & $68.0$ \\
{$p = 100$} & & $\gL$  & $0.050$ ($0.002$) & $0.047$ ($0.001$) & $0.789$ ($0.001$) & $20.6$ ($0.098$) & $68.0$ \\  \cdashline{3-8} 
\multirow{4}{*}{} && $\JfPC$  & $\textbf{0.097}$ ($0.003$) & $0.040$ ($0.001$) & $0.798$ ($0.001$) & $19.3$ ($0.109$) & $68.2$ \\
&& $\fPC$  & $0.087$ ($0.003$) & $0.044$ ($0.001$) & $0.797$ ($0.001$) & $19.7$ ($0.111$) & $68.4$ \\  \cdashline{3-8} 
&&$\JnL$  & $\textbf{0.123}$ ($0.004$) & $0.084$ ($0.002$) & $0.804$ ($0.001$) & $22.2$ ($0.15$) & $72.1$ \\
&&$\nL$ & $0.106$ ($0.003$) & $0.105$ ($0.002$) & $0.801$ ($0.001$) & $24.1$ ($0.143$) & $73.4$ \\
    \bottomrule
    \end{tabular}
}}
\label{tab:hub}
\end{table}

\begin{table}
\centering
\caption{Results for Neighborhood graph, $p = 300$ and $p_1 = 30$}
{\scriptsize{
   \begin{tabular}{llllllll} \toprule
      Model & $n$ & Alg & WEDR & FDR & TPR &ED & $|\widehat{G}|$ \\
\toprule
$\NBo$ & \multirow{6}{*}{} $300$ & $\JgL$  & $\textbf{0.251}$ ($0.002$) & $0.030$ ($0.000$) & $0.813$ ($0.000$) & $126$ ($0.329$) & $498$ \\
{$p = 100$} & & $\gL$  & $0.102$ ($0.0015$) & $0.039$ ($0.000$) & $0.806$ ($0.001$) & $135$ ($0.345$) & $498$ \\  \cdashline{3-8} 
\multirow{4}{*}{} && $\JfPC$  & $\textbf{0.259}$ ($0.002$) & $0.031$ ($0.000$) & $0.814$ ($0.000$) & $126$ ($0.260$) & $499$ \\
&& $\fPC$  & $0.255$ ($0.002$) & $0.036$ ($0.000$) & $0.813$ ($0.000$) & $129$ ($0.330$) & $501$ \\  \cdashline{3-8} 
&&$\JnL$  & $\textbf{0.254}$ ($0.002$) & $0.035$ ($0.000$) & $0.812$ ($0.001$) & $129$ ($0.461$) & $500$ \\
&&$\nL$ & $0.226$ ($0.002$) & $0.039$ ($0.000$) & $0.804$ ($0.001$) & $136$ ($0.458$) & $497$ \\
\hline
$\NBo$ & \multirow{6}{*}{} $300$ & $\JgL$  & $\textbf{0.005}$ ($0.000$) & $0.043$ ($0.000$) & $0.784$ ($0.001$) & $149$ ($0.385$) & $486$ \\
{$p = 100$} & & $\gL$  & $0.000$ ($0.000$) & $0.036$ ($0.000$) & $0.790$ ($0.000$) & $142$ ($0.259$) & $487$ \\  \cdashline{3-8} 
\multirow{4}{*}{} && $\JfPC$  & $\textbf{0.004}$ ($0.000$) & $0.042$ ($0.000$) & $0.784$ ($0.001$) & $148$ ($0.376$) & $486$ \\
&& $\fPC$  & $0.003$ ($0.000$) & $0.048$ ($0.000$) & $0.782$ ($0.000$) & $153$ ($0.239$) & $488$ \\  \cdashline{3-8} 
&&$\JnL$  & $\textbf{0.005}$ ($0.000$) & $0.046$ ($0.000$) & $0.783$ ($0.000$) & $151$ ($0.356$) & $488$ \\
&&$\nL$ & $0.003$ ($0.000$) & $0.050$ ($0.000$) & $0.775$ ($0.000$) & $158$ ($0.374$) & $485$ \\
\hline
$\NBo$ & \multirow{6}{*}{} $500$ & $\JgL$  & $\textbf{0.449}$ ($0.001$) & $0.018$ ($0.000$) & $0.921$ ($0.000$) & $57.1$ ($0.199$) & $557$ \\
{$p = 100$} & & $\gL$  & $0.319$ ($0.002$) & $0.035$ ($0.000$) & $0.905$ ($0.000$) & $75.8$ ($0.242$) & $557$ \\  \cdashline{3-8} 
\multirow{4}{*}{} && $\JfPC$  & $0.489$ ($0.002$) & $0.019$ ($0.000$) & $0.925$ ($0.000$) & $52.8$ ($0.189$) & $558$ \\
&& $\fPC$  & $\textbf{0.496}$ ($0.002$) & $0.023$ ($0.000$) & $0.920$ ($0.000$) & $60.2$ ($0.214$) & $559$ \\  \cdashline{3-8} 
&&$\JnL$  & $\textbf{0.508}$ ($0.003$) & $0.027$ ($0.000$) & $0.929$ ($0.000$) & $57.9$ ($0.348$) & $567$ \\
&&$\nL$ & $0.494$ ($0.003$) & $0.033$ ($0.000$) & $0.927$ ($0.000$) & $62.3$ ($0.400$) & $570$ \\
\hline
$\NBt$ & \multirow{6}{*}{} $500$ & $\JgL$  & $\textbf{0.008}$ ($0.000$) & $0.033$ ($0.000$) & $0.870$ ($0.000$) & $95.0$ ($0.206$) & $534$ \\
{$p = 100$} & & $\gL$  & $0.000$ ($0.000$) & $0.034$ ($0.000$) & $0.869$ ($0.000$) & $96.0$ ($0.214$) & $534$ \\  \cdashline{3-8} 
\multirow{4}{*}{} && $\JfPC$  & $\textbf{0.009}$ ($0.000$) & $0.032$ ($0.000$) & $0.870$ ($0.000$) & $94.2$ ($0.215$) & $534$ \\
&& $\fPC$  & ${0.005}$ ($0.000$) & $0.040$ ($0.000$) & $0.865$ ($0.000$) & $102$ ($0.207$) & $536$ \\  \cdashline{3-8} 
&&$\JnL$  & $\textbf{0.001}$ ($0.000$) & $0.038$ ($0.000$) & $0.871$ ($0.000$) & $97.3$ ($0.220$) & $538$ \\
&&$\nL$ & $0.005$ ($0.000$) & $0.043$ ($0.000$) & $0.870$ ($0.000$) & $101$ ($0.234$) & $540$ \\
    \bottomrule
    \end{tabular}
}}
\label{tab:nb300}
\end{table}

\begin{remark}[Graphical Lasso]
Of all the algorithms, graphical Lasso ($\gL$) performs the worst.  On the other hand, junction tree based $\gL$ significantly improves the performance of $\gL$.  Moreover, the performance of $\JgL$ is comparable, and sometimes even better, when compared to $\JfPC$ and $\JnL$.  This suggests that when using $\gL$ in practice, it is beneficial to apply a screening algorithm to remove some edges and then use the junction tree framework in conjunction with $\gL$.
\end{remark}

\begin{remark}[PC-Algorithm and Neighborhood Selection]
Although using junction trees in conjunction with the PC-Algorithm ($\fPC$) and neighborhood selection ($\nL$) does improve the graph estimation performance, the difference is not as significant as $\gL$.  The reason is because both $\fPC$ and $\nL$ make use of the local Markov property in the graph $H$.  The junction tree framework further improves the performance of these algorithms by making use of the global Markov property, in addition to the local Markov property.
\end{remark}

\begin{remark}[Chain Graph]
Although the chain graph does not satisfy the conditions in (A6), the junction tree estimates still outperforms the non-junction tree estimates.  This suggests the advantages of using junction trees beyond the graphs considered in (A6).  We suspect that correlation decay properties, which have been studied extensively in \citet{AnimaTanWillsky2011a,AnimaTanWillsky2011b}, can be used to weaken the assumption in (A6).
\end{remark}

\begin{remark}[Hub Graph]
For the hub graph $\HBo$, the junction tree estimate does not result in a significant difference in performance, especially for the $\fPC$ and $\nL$ algorithms.  This is mainly because this graph is extremely sparse with multiple components.  For the number of observations considered, $H$ removes a significant number of edges.  However, for $\HBt$, the junction tree estimate, in general, performs slightly better.  This is because the parameters associated with the weak edges in $\HBt$ are smaller than that of $\HBo$.
\end{remark}

\begin{remark}[General Conclusion]
We see that, in general, the WEDR and TPR are higher, while the FDR and ED are lower, for junction tree based algorithms.  This clearly suggests that using junction trees results in more accurate graph estimation.  Moreover, the higher WEDR suggest that the main differences between the two algorithms are over the weak edges, i.e., junction tree based algorithms are estimating more weak edges when compared to a non junction tree based algorithm.
\end{remark}

\begin{figure}
\centering
\subfigure[Junction tree based graphical Lasso]{
\includegraphics[scale=0.55]{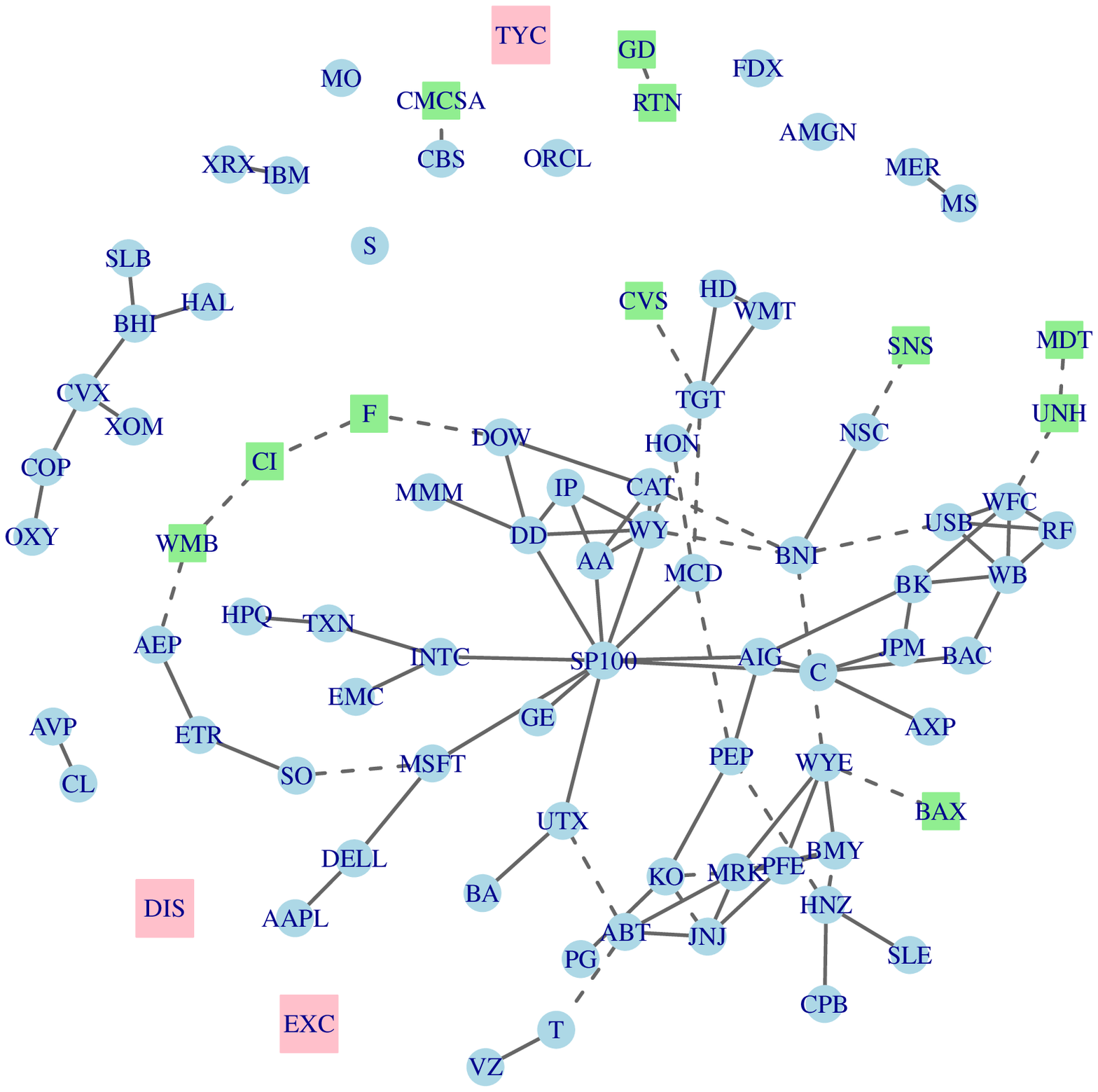}
} 
\subfigure[Graphical Lasso]{
\includegraphics[scale=0.55]{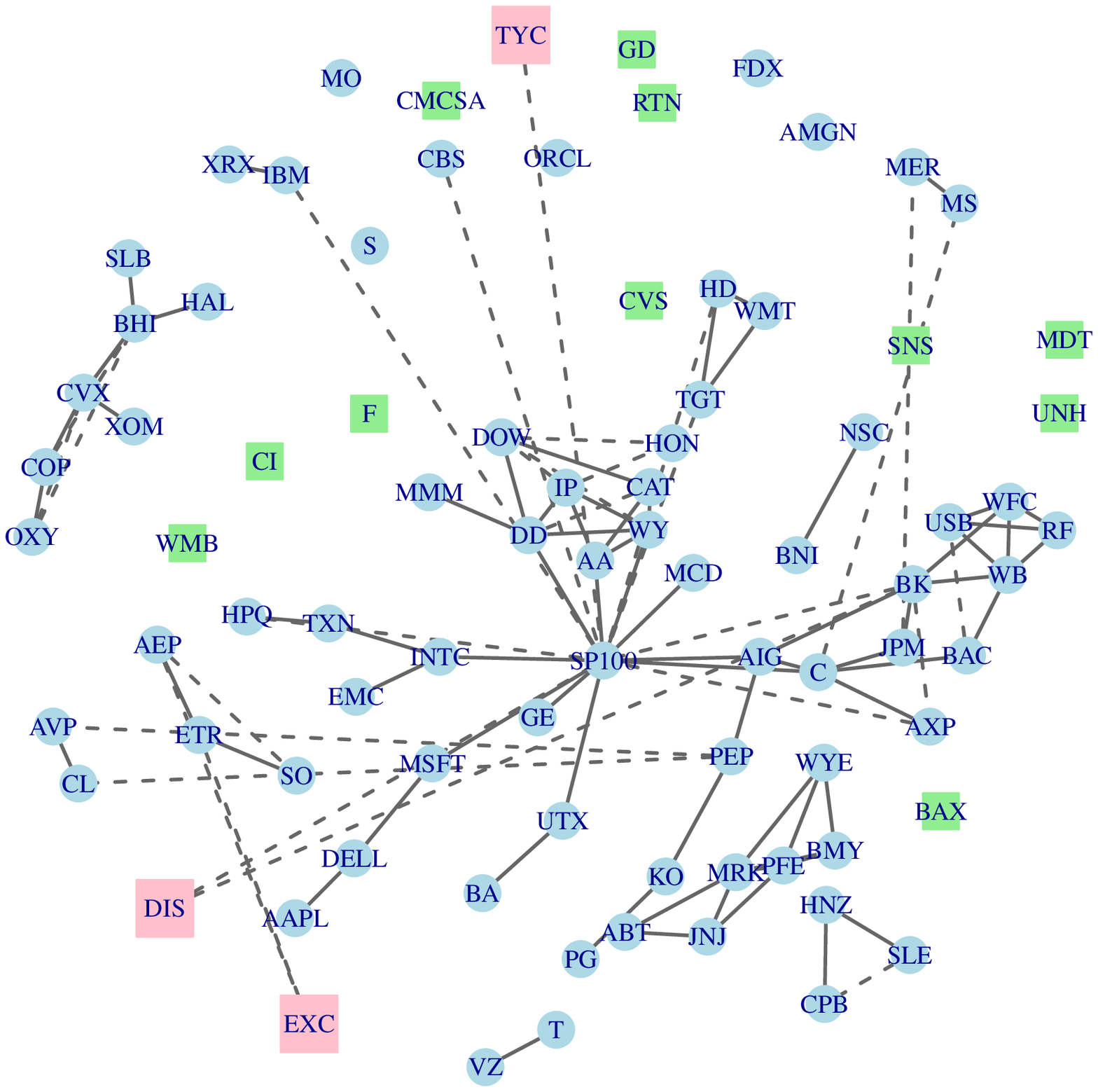}
}
\caption{Graph over a subset of companies in the S\&P 100.  The positioning of the vertices is chosen so that the\textit{ junction tree based graph is aesthetically pleasing}.  The edges common in (a) and (b) are marked by bold lines and the remaining edges are marked by dashed lines}
\label{fig:stockJT}
\end{figure}

\begin{figure}
\centering
\subfigure[Junction tree based graphical Lasso]{
\includegraphics[scale=0.55]{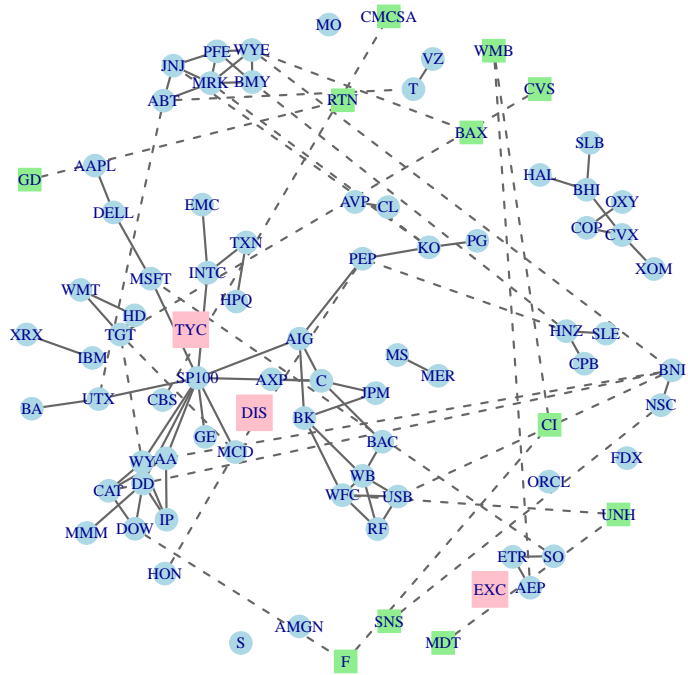}
} 
\subfigure[Graphical Lasso]{
\includegraphics[scale=0.55]{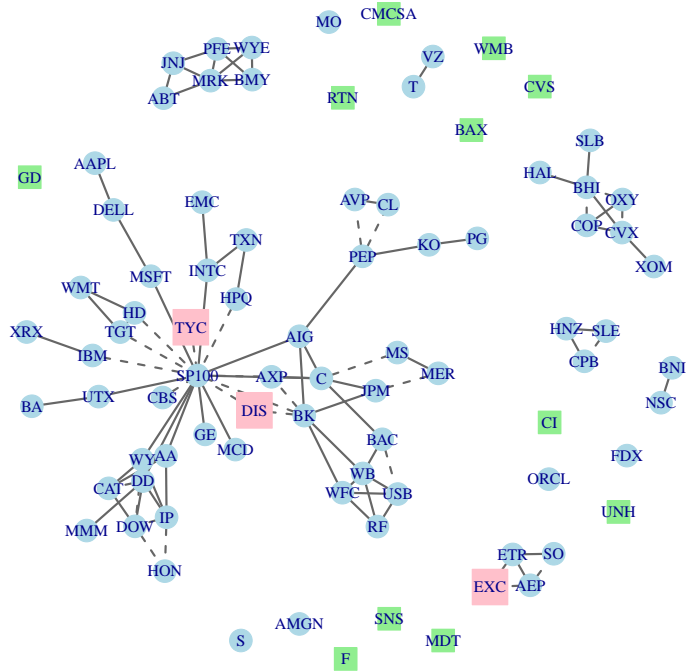}
}
\caption{Graph over a subset of companies in the S\&P 100.  The positioning of the vertices is chosen so that the \textit{graphical Lasso based graph is aesthetically pleasing}. The edges common in (a) and (b) are marked by bold lines and the remaining edges are marked by dashed lines.}
\label{fig:stockNOJT}
\end{figure}

\subsection{Analysis of Stock Returns Data}
\label{sub:realdata}

We applied our methods to the data set in \citet{ChoiTanAnimaWillsky2011} of $n = 216$ monthly stock returns of $p = 85$ companies in the S\&P 100.  We computed $H$ using $\kappa = 1$.  We applied $\JgL$ using EBIC with $\gamma = 0.5$ and applied $\gL$ so that both graphs have the same number of edges.  This allows us to objectively compare the $\gL$ and $\JgL$ graphs.  Figure~\ref{fig:stockJT} shows the two estimated graphs in such a way that the vertices are positioned so that the $\JgL$ graph looks aesthetically pleasing.  In Figure~\ref{fig:stockNOJT}, the vertices are positioned so that $\gL$ looks aesthetically pleasing.  In each graph, we mark the common edges by bold lines and the remaining edges by dashed lines.  Some conclusions that we draw from the estimated graphs are summarized as follows:

\begin{itemize}
\item The $\gL$ graph in Figure~\ref{fig:stockNOJT}(b) seems well structured with multiple different clusters of nodes with companies that seem to be related to each other.  A similar clustering is seen for the $\JgL$ graph in Figure~\ref{fig:stockJT}(a) with the exception that there are now connections between the clusters.  As observed in \citet{ChoiTanAnimaWillsky2011,chandrasekaran2010latent}, it has been hypothesized that the ``actual" graph over the companies is dense since there are several unobserved companies that induce conditional dependencies between the observed companies.  These induced conditional dependencies can be considered to be the weak edges of the ``actual" graph.  Thus, our results suggest that the junction tree based algorithm is able to detect such weak edges.

\item We now focus on some specific edges and nodes in the graphs.  The $11$ vertices represented by smaller squares and shaded in green are not connected to any other vertex in $\gL$.  On the other hand, all these $11$ vertices are connected to at least one other vertex in $\JgL$ (see Figure~\ref{fig:stockJT}).  Moreover, several of these edges are meaningful.  For example, CBS and CMCSA are in the television industry, TGT and CVS are stores, AEP and WMB are energy companies, GD and RTN are defense companies, and MDT and UNH are in the healthcare industry.
Finally, the three vertices represented by larger squares and shaded in pink, are not connected to any vertex in $\JgL$ and are connected to at least one other vertex in $\gL$.  Only the edges associated with EXC seem to be meaningful.

\end{itemize}

\subsection{Analysis of Gene Expression Data}
\label{sub:realdata2}

Graphical models have been used extensively for studying gene interactions using gene expression data \citep{Nevins04sparsegraphical,wille2004sparse}.  The gene expression data we study is the Lymph node status data which contains $n = 148$ expression values from $p = 587$ genes \citep{li2010inexact}.  Since there is no ground truth available, the main aim in this section is to highlight the differences between the estimates $\JgL$ (junction tree estimate) and $\gL$ (non junction tree estimate).  Just like in the stock returns data, we compute the graph $H$ using $\kappa = 1$. Both the $\JgL$ and $\gL$ graphs contain $831$ edges.  Figure~\ref{fig:gene} shows the graphs $\JgL$ and $\gL$ under different placements of the vertices.  We clearly see significant differences between the estimated graphs.  This suggests that using the junction tree framework may lead to new scientific interpretations when studying biological data.  

\begin{figure}
\centering
\subfigure[Junction tree based graphical Lasso]{
\includegraphics[scale=0.44]{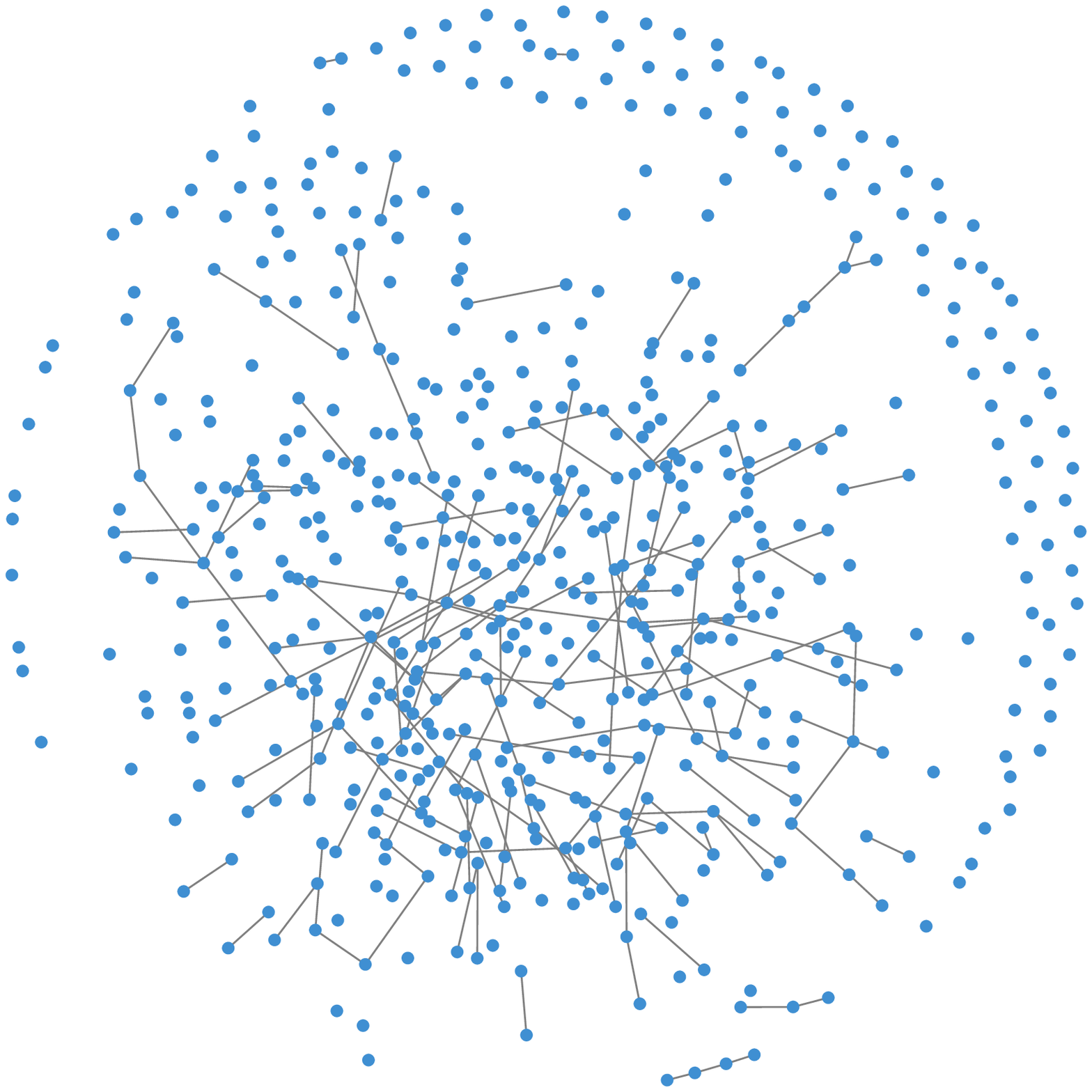}
} 
\subfigure[Graphical Lasso]{
\includegraphics[scale=0.44]{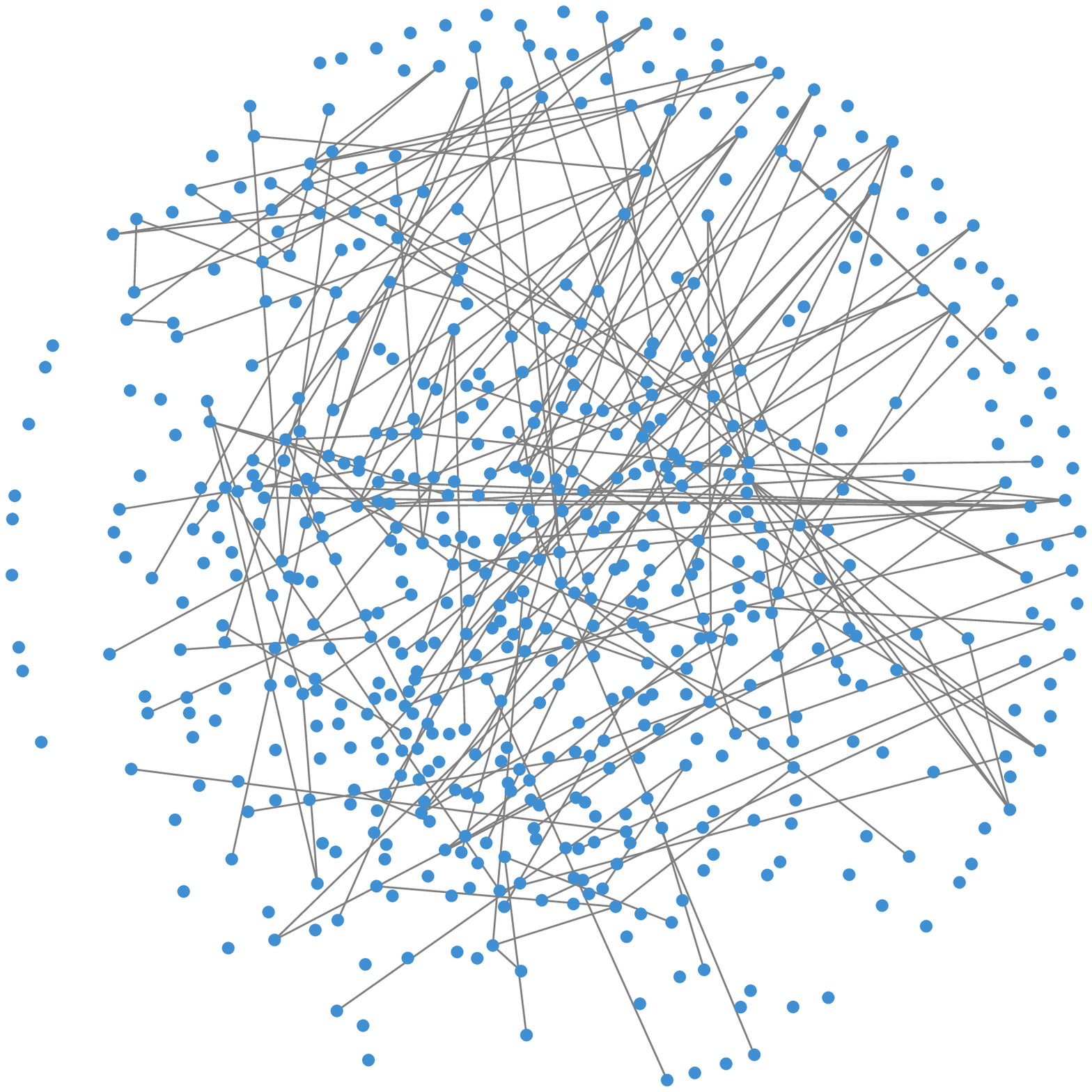}
}
\subfigure[Junction tree based graphical Lasso]{
\includegraphics[scale=0.44]{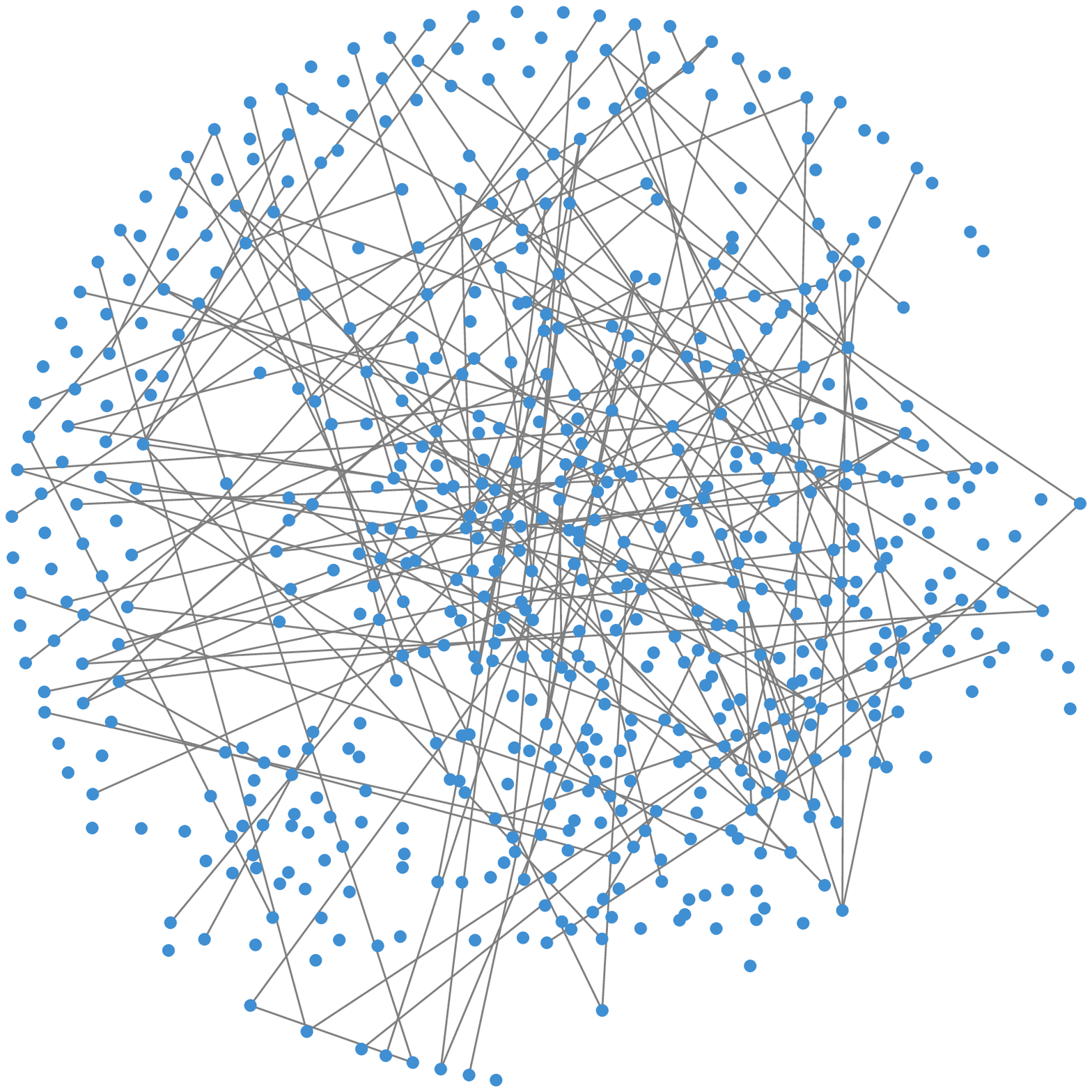}
} 
\subfigure[Graphical Lasso]{
\includegraphics[scale=0.44]{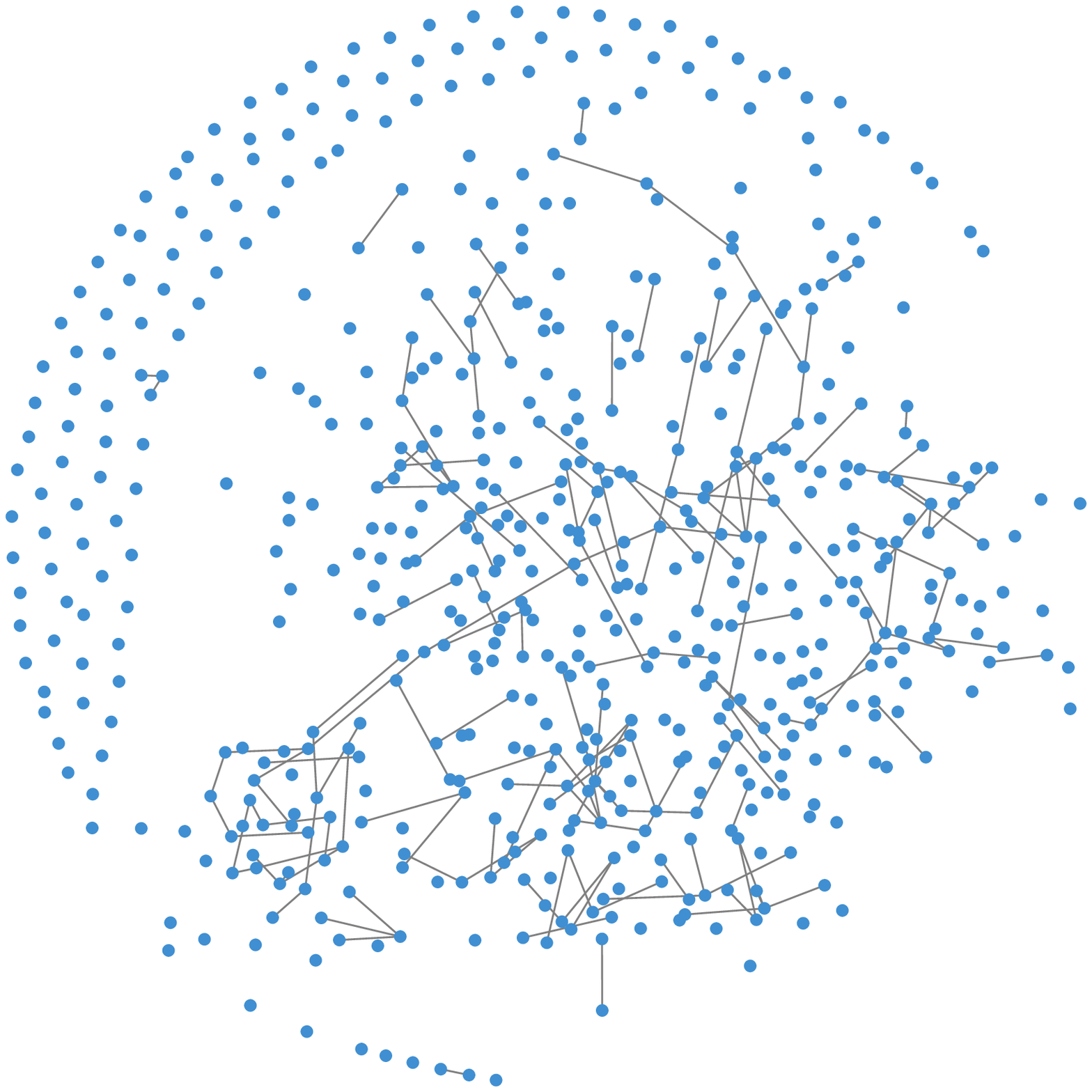}
}
\caption{Graph over genes computed using gene expression data.  For (a) and (b), the vertices are chosen so that the junction tree estimate is aesthetically pleasing.  For (c) and (d), the vertices are chosen so that the graphical Lasso estimate is aesthetically pleasing.  Further, in (a) and (c), we only show edges that are estimated in the junction tree estimate, but not estimated using graphical Lasso.  Similarly, for (b) and (c), we only show edges that are estimated by graphical Lasso, but not by the junction tree estimate.
}
\label{fig:gene}
\end{figure}


\section{Summary and Future Work}
\label{sec:summary}

We have outlined a general framework that can be used as a wrapper around any arbitrary undirected graphical model selection (UGMS) algorithm for improved graph estimation.  Our framework takes as input a graph $H$ that contains all (or most of) the edges in $G^*$, decomposes the UGMS problem into multiple subproblems using a junction tree representation of $H$, and then solves subprolems iteratively to estimate a graph.  Our theoretical results show that certain weak edges, which cannot be estimated using standard algorithms, can be estimated when using the junction tree framework.  We supported the theory with numerical simulations on both synthetic and real world data.  All the data and code used in our numerical simulations can be found at \url{http://www.ima.umn.edu/~dvats/JunctionTreeUGMS.html}.

Our work motivates several interesting future research directions.  In our framework, we used a graph $H$ to decompose the UGMS problem into multiple subproblems.  Alternatively, we can also focus on directly finding such decompositions.  Another interesting research direction is to use the decompositions to develop parallel algorithms for UGMS for estimating extremely large graphs.  Finally, motivated by the differences in the graphs obtained using gene expression data, another research problem of interest is to study the scientific consequences of using the junction tree framework on various computational biology data sets.

\section*{Acknowledgement}
The first author thanks the Institute for Mathematics and its Applications (IMA) for financial support in the form of a postdoctoral fellowship.
The authors thank Vincent Tan for discussions and comments on an earlier version of the paper.  The authors thank the anonymous reviewers for comments which significantly improved this manuscript.

\appendix

\section{Marginal Graph}
\label{app:marginal}

\begin{definition}
\label{def:marginalgraph}
The marginal graph $G^{*,m}[A]$ of a graph $G^*$ over the nodes $A$ is defined as a graph with the following properties
\begin{enumerate}
\item $E(G^*[A]) \subseteq E(G^{*,m}[A])$.
\item For an edge $(i,j) \in E(K_A) \backslash E(G^*[A])$, if all paths from $i$ to $j$ in $G^*$ pass through a subset of the nodes in $A$, then $(i,j) \notin G^{*,m}[A]$.
\item For an edge $(i,j) \in E(K_A) \backslash E(G^*[A])$, if there exists a path from $i$ to $j$ in $G^*$ such that all nodes in the path, except $i$ and $j$, are in $V \backslash A$, then $(i,j) \in G^{*,m}[A]$.
\end{enumerate}
\end{definition}

The graph $K_A$ is the complete graph over the vertices $A$.  The first condition in Definition~\ref{def:marginalgraph} says that the marginal graph contains all edges in the induced subgraph over $A$.  The second and third conditions say which edges not in $G^*[A]$ are in the marginal graph.  As an example, consider the graph in Figure~\ref{fig:exampleabc}(a) and let $A = \{1,2,3,4,5\}$.  From the second condition, the edge $(3,4)$ is not in the marginal graph since all paths from $3$ to $4$ pass through a subset of the nodes in $A$.  From the third condition, the edge $(4,5)$ is in the marginal graph since there exists a path $\{4,8,5\}$ that does not go through any nodes in $A \backslash \{4,5\}$.  Similarly, the marginal graph over $A = \{4,5,6,7,8\}$ can be constructed as in Figure~\ref{fig:exampleabc}(c).  The importance of marginal graphs is highlighted in the following proposition.

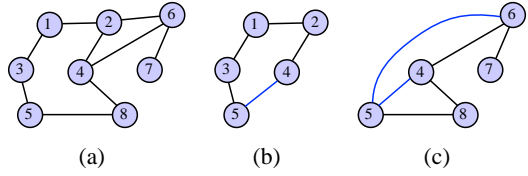
\begin{figure}
\centering
\subfigure[]{

\scalebox{0.5} 
{
\begin{pspicture}(0,-1.6800001)(4.72,1.6800001)
\definecolor{color0b}{rgb}{0.8,0.8,1.0}
\pscircle[linewidth=0.04,dimen=outer,fillstyle=solid,fillcolor=color0b](1.11,1.07){0.35}
\usefont{T1}{ptm}{m}{n}
\rput(1.0646875,1.0949999){\large 1}
\pscircle[linewidth=0.04,dimen=outer,fillstyle=solid,fillcolor=color0b](2.69,1.13){0.35}
\usefont{T1}{ptm}{m}{n}
\rput(2.6825,1.155){\large 2}
\pscircle[linewidth=0.04,dimen=outer,fillstyle=solid,fillcolor=color0b](0.59,-1.33){0.35}
\usefont{T1}{ptm}{m}{n}
\rput(0.56,-1.3050001){\large 5}
\pscircle[linewidth=0.04,dimen=outer,fillstyle=solid,fillcolor=color0b](4.37,1.33){0.35}
\usefont{T1}{ptm}{m}{n}
\rput(4.3540626,1.3549999){\large 6}
\pscircle[linewidth=0.04,dimen=outer,fillstyle=solid,fillcolor=color0b](3.11,-1.33){0.35}
\usefont{T1}{ptm}{m}{n}
\rput(3.078125,-1.3050001){\large 8}
\pscircle[linewidth=0.04,dimen=outer,fillstyle=solid,fillcolor=color0b](1.93,-0.19000006){0.35}
\usefont{T1}{ptm}{m}{n}
\rput(1.9278125,-0.16500005){\large 4}
\pscircle[linewidth=0.04,dimen=outer,fillstyle=solid,fillcolor=color0b](3.77,-0.13000005){0.35}
\usefont{T1}{ptm}{m}{n}
\rput(3.7540622,-0.10500015){\large 7}
\pscircle[linewidth=0.04,dimen=outer,fillstyle=solid,fillcolor=color0b](0.35,-0.13000005){0.35}
\usefont{T1}{ptm}{m}{n}
\rput(0.315625,-0.10500005){\large 3}
\psline[linewidth=0.04cm](1.44,1.0999999)(2.36,1.12)
\psline[linewidth=0.04cm](0.92,0.81999993)(0.52,0.15999995)
\psline[linewidth=0.04cm](0.38,-0.46000004)(0.52,-1.0)
\psline[linewidth=0.04cm](0.92,-1.32)(2.78,-1.32)
\psline[linewidth=0.04cm](2.16,-0.40000004)(2.9,-1.08)
\psline[linewidth=0.04cm](2.52,0.84)(2.08,0.11999995)
\psline[linewidth=0.04cm](3.0,1.1999999)(4.04,1.3)
\psline[linewidth=0.04cm](2.22,-0.02000005)(4.14,1.0999999)
\psline[linewidth=0.04cm](4.28,1.02)(3.92,0.17999995)
\end{pspicture} 
}
}
\subfigure[]{
\scalebox{0.5} 
{
\begin{pspicture}(0,-1.58)(3.04,1.58)
\definecolor{color0b}{rgb}{0.8,0.8,1.0}
\definecolor{color4693}{rgb}{0.0,0.2,1.0}
\pscircle[linewidth=0.04,dimen=outer,fillstyle=solid,fillcolor=color0b](1.11,1.1700001){0.35}
\usefont{T1}{ptm}{m}{n}
\rput(1.0646875,1.195){\large 1}
\pscircle[linewidth=0.04,dimen=outer,fillstyle=solid,fillcolor=color0b](2.69,1.23){0.35}
\usefont{T1}{ptm}{m}{n}
\rput(2.6825,1.255){\large 2}
\pscircle[linewidth=0.04,dimen=outer,fillstyle=solid,fillcolor=color0b](0.59,-1.23){0.35}
\usefont{T1}{ptm}{m}{n}
\rput(0.56,-1.205){\large 5}
\pscircle[linewidth=0.04,dimen=outer,fillstyle=solid,fillcolor=color0b](1.93,-0.09){0.35}
\usefont{T1}{ptm}{m}{n}
\rput(1.9278125,-0.065){\large 4}
\pscircle[linewidth=0.04,dimen=outer,fillstyle=solid,fillcolor=color0b](0.35,-0.03){0.35}
\usefont{T1}{ptm}{m}{n}
\rput(0.315625,-0.0050){\large 3}
\psline[linewidth=0.04cm](1.44,1.2)(2.36,1.22)
\psline[linewidth=0.04cm](0.92,0.92)(0.52,0.26)
\psline[linewidth=0.04cm](0.38,-0.36)(0.52,-0.9)
\psline[linewidth=0.04cm](2.52,0.94)(2.08,0.22)
\psline[linewidth=0.04cm,linecolor=color4693](1.68,-0.32)(0.82,-1.0)
\end{pspicture} 
}
}
\subfigure[]{
\scalebox{0.5} 
{
\begin{pspicture}(0,-1.68)(4.48,1.68)
\definecolor{color0b}{rgb}{0.8,0.8,1.0}
\definecolor{color4693}{rgb}{0.0,0.2,1.0}
\pscircle[linewidth=0.04,dimen=outer,fillstyle=solid,fillcolor=color0b](0.35,-1.33){0.35}
\usefont{T1}{ptm}{m}{n}
\rput(0.295,-1.3050001){\large 5}
\pscircle[linewidth=0.04,dimen=outer,fillstyle=solid,fillcolor=color0b](4.13,1.33){0.35}
\usefont{T1}{ptm}{m}{n}
\rput(4.0960937,1.3549999){\large 6}
\pscircle[linewidth=0.04,dimen=outer,fillstyle=solid,fillcolor=color0b](2.87,-1.33){0.35}
\usefont{T1}{ptm}{m}{n}
\rput(2.8121874,-1.3050001){\large 8}
\pscircle[linewidth=0.04,dimen=outer,fillstyle=solid,fillcolor=color0b](1.69,-0.19000006){0.35}
\usefont{T1}{ptm}{m}{n}
\rput(1.6767187,-0.16500005){\large 4}
\pscircle[linewidth=0.04,dimen=outer,fillstyle=solid,fillcolor=color0b](3.53,-0.13000005){0.35}
\usefont{T1}{ptm}{m}{n}
\rput(3.4960935,-0.10500015){\large 7}
\psline[linewidth=0.04cm](0.68,-1.32)(2.54,-1.32)
\psline[linewidth=0.04cm](1.92,-0.40000004)(2.66,-1.08)
\psline[linewidth=0.04cm](1.98,-0.02000005)(3.9,1.0999999)
\psline[linewidth=0.04cm](4.04,1.02)(3.68,0.17999995)
\psline[linewidth=0.04cm,linecolor=color4693](1.4,-0.36)(0.56,-1.08)
\psbezier[linewidth=0.04,linecolor=color4693](0.4,-1.0)(0.41931817,-0.23174603)(0.8714608,0.2639724)(1.5784091,0.7669841)(2.2853575,1.2699958)(2.853409,1.42)(3.8,1.2855556)
\end{pspicture} 
}
}
\caption{(a) A graph over eight nodes. (b) The marginal graph over $\{1,2,3,4,5\}$. (c) The marginal graph over $\{4,5,6,7,8\}$.}
\label{fig:exampleabc}
\end{figure}

\begin{proposition}
\label{prop:marg}
If $P_X > 0$ is Markov on $G^* = (V,E(G^*))$ and not Markov on any subgraph of $G^*$, then for any subset of vertices $A \subseteq V$, $P_{X_A}$ is Markov on the marginal graph $G^{*,m}[A]$ and not Markov on any subgraph of $G^{*,m}[A]$.
\end{proposition}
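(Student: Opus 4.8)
The plan is to reduce the whole proposition to a single combinatorial lemma relating separation in $G^{*,m}[A]$ to separation in $G^*$: for any pairwise disjoint $B,C,S \subseteq A$, the set $S$ separates $B$ and $C$ in $G^{*,m}[A]$ if and only if $S$ separates $B$ and $C$ in $G^*$. Once this equivalence is available, both halves of the statement follow from the global Markov property of $P_X$ on $G^*$ together with the positivity of $P_{X_A}$, which is inherited from $P_X>0$.

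First I would prove the lemma by two path-surgery arguments. For one direction (contrapositive), suppose there is a path $b=u_0,u_1,\dots,u_r=c$ in $G^*$ avoiding $S$. Listing its vertices that lie in $A$, say $u_{i_0}=b,u_{i_1},\dots,u_{i_m}=c$, each consecutive pair $(u_{i_\ell},u_{i_{\ell+1}})$ is either an edge of $G^*[A]$ (hence of $G^{*,m}[A]$ by property~1) or is joined by a subpath whose interior lies entirely in $V\setminus A$ (hence an edge of $G^{*,m}[A]$ by property~3, the endpoints being distinct since the path is simple). This \emph{compressed} sequence is a path in $G^{*,m}[A]$, and since $S\subseteq A$ it still avoids $S$. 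For the converse I run the argument in reverse: every edge of $G^{*,m}[A]$ is either an edge of $G^*$ or, by property~3 (which together with property~2 exhausts the non-edges of $G^*[A]$), is realized by a $G^*$-path whose interior lies in $V\setminus A$. Expanding each edge of a $G^{*,m}[A]$-path avoiding $S$ produces a walk in $G^*$ whose $A$-vertices are exactly the original path vertices (avoiding $S$) and whose remaining vertices lie in $V\setminus A\subseteq V\setminus S$; such a walk contains a path in $G^*$ avoiding $S$.

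The I-map half is then immediate. If $S\subseteq A$ separates $B,C\subseteq A$ in $G^{*,m}[A]$, the lemma gives separation in $G^*$, so the global Markov property yields $X_B \ind X_C \mid X_S$; as this involves only variables indexed by $A$, it holds under $P_{X_A}$, so $P_{X_A}$ is Markov on $G^{*,m}[A]$. For minimality, since $P_{X_A}>0$ the global, local and pairwise Markov properties coincide, so it suffices to show every edge $(i,j)\in E(G^{*,m}[A])$ is \emph{essential}, i.e. $X_i \not\ind X_j \mid X_{A\setminus\{i,j\}}$: any proper subgraph omits some edge and then violates the pairwise property at that pair. By the lemma, the presence of the edge $(i,j)$ means precisely that $A\setminus\{i,j\}$ does \emph{not} separate $i$ and $j$ in $G^*$.

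I expect this last step to be the main obstacle. The difficulty is converting the purely combinatorial non-separation in $G^*$ into a genuine probabilistic dependence in the marginal. Minimality of $G^*$ only supplies dependence at the \emph{full} conditioning set, $X_i \not\ind X_j \mid X_{V\setminus\{i,j\}}$, and transferring this to the reduced set $A\setminus\{i,j\}$ is not a formal consequence of the graphoid axioms and positivity alone (an unfaithful positive $P_X$ can exhibit a marginal independence $X_i\ind X_j\mid X_{A\setminus\{i,j\}}$ not mirrored by any separator). The clean resolution is to invoke the faithfulness assumption (A2): under faithfulness the equivalence ``$X_i\ind X_j\mid X_S \Leftrightarrow S$ separates $i,j$ in $G^*$'' combined with the lemma shows that $P_{X_A}$ is itself \emph{faithful} to $G^{*,m}[A]$, and a faithful positive distribution has its graph as the unique minimal I-map, giving essentiality of every edge simultaneously. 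I would therefore organize the write-up so that the separation-equivalence lemma carries the combinatorial weight, while faithfulness supplies the one probabilistic input needed to pin down minimality.
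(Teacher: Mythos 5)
Your proof is correct, and it takes a noticeably different (and in one respect more careful) route than the paper's. The paper's proof works with the minimal I-map $\check{G}_A$ of $P_{X_A}$ (implicitly unique by positivity) and matches it to $G^{*,m}[A]$ edge by edge: (i) edges of $G^*[A]$ are kept because adjacent vertices are asserted to be dependent given \emph{every} conditioning set; (ii) a pair whose every connecting path meets $A\setminus\{i,j\}$ is separated by $A\setminus\{i,j\}$ itself, so the global Markov property of $P_X$ deletes the edge; (iii) a pair joined by a path with interior in $V\setminus A$ has no separator inside $A$, and the edge is asserted to be present. Your separation-equivalence lemma ($S$ separates $B,C$ in $G^{*,m}[A]$ iff it does in $G^*$, for $B,C,S\subseteq A$) is the set-level generalization of the path surgery underlying (ii) and (iii); it buys you a direct verification of the global Markov property for $P_{X_A}$ without appealing to uniqueness of minimal I-maps, at the cost of a slightly longer combinatorial argument. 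More importantly, you are right that the remaining step is not purely formal: the paper's assertions (i) and (iii) both amount to ``non-separation implies conditional dependence,'' which does \emph{not} follow from positivity plus minimality of $P_X$ on $G^*$ (minimality only yields dependence given the full set $V\setminus\{i,j\}$), and an unfaithful positive distribution can indeed exhibit an accidental marginal independence that breaks the minimality half. The paper's proof uses faithfulness implicitly without stating it; your explicit invocation of (A2), concluding that $P_{X_A}$ is faithful to $G^{*,m}[A]$ via the lemma and reading off minimality from pairwise essentiality of every edge, is a sound repair and is consistent with the paper's framework, since (A2) is assumed wherever Proposition~\ref{prop:marg} is put to theoretical use.
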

\begin{proof}
Suppose $P_{X_A}$ is Markov on the graph $\check{G}_A$ and not Markov on any subgraph of $\check{G}_A$.  We will show that $\check{G}_A = G^m[A]$. 
\begin{itemize}
\item If $(i,j) \in G$, then $X_i \not\ind X_j | X_{S}$ for every $S \subseteq V \backslash \{i,j\}$.  Thus, $G[A] \subset \check{G}_A$.  
\item For any edge $(i,j) \in K_A \backslash G[A]$, suppose that for every path from $i$ to $j$ contains at least one node from $A \backslash \{i,j\}$.  Then, there exists a set of nodes $S \subseteq A \backslash \{i,j\}$ such that $X_i \ind X_j | X_S$ and $(i,j) \notin \check{G}_A$.
\item For any edge $(i,j) \in K_A \backslash G[A]$, suppose that there exists a path from $i$ to $j$ such that all nodes in the path, except $i$ and $j$, are in $V \backslash A$.  This means we cannot find a separator for $i$ and $j$ in the set $A$, so $(i,j) \in \check{G}_A$.
\end{itemize}
From the construction of $\check{G}_A$ and Definition~\ref{def:marginalgraph}, it is clear that $\check{G}_A = G^m[A]$.
\end{proof}

Using Proposition~\ref{prop:marg}, it is clear that if the UGMS algorithm $\Psi$ in Assumption~\ref{ass:cons} is applied to a subset of vertices $A$, the output will be a consistent estimator of the marginal graph $G^{*,m}[A]$.  Note that from Definition~\ref{def:marginalgraph}, although the marginal graph contains all edges in $G^*[A]$, it may contain additional edges as well.  Given only the marginal graph $G^{*,m}[A]$, it is not clear how to identify edges that are in $G^*[A]$.  For example, suppose $G^*$ is a graph over four nodes and let the graph be a single cycle.  The marginal graph over any subset of three nodes is always the complete graph.  Given the complete graph over three nodes, computing the induced subgraph over the three nodes is nontrivial.

\section{Examples of UGMS Algorithms}
\label{app:examples}

We give examples of standard UGMS algorithms and show how they can be used to implement step 3 in Algorithm~\ref{alg:ugmsregion} when estimating edges in a region of a region graph.  For simplicity, we review algorithms for UGMS when $P_X$ is a Gaussian distribution with mean zero and covariance $\Sigma^*$.  Such distributions are referred to as Gaussian graphical models.  It is well known \citep{SpeedKiiveri1986} that that the inverse covariance matrix $\Theta^* = (\Sigma^*)^{-1}$, also known as precision matrix, is such that for all $i \ne j$, $\Theta_{ij}^* \ne 0$ if and only if $(i,j) \in E(G^*)$.  In other words, the graph $G^*$ can be estimated given an estimate of the covariance or inverse covariance matrix of $X$.  We review two standard algorithms for estimating $G^*$: graphical Lasso and neighborhood selection using Lasso (nLasso).

\subsection{Graphical Lasso (gLasso)}

Define the empirical covariance matrix $\widehat{S}_A$ over a set of vertices $A \subset V$ as follows:
\begin{equation}
\widehat{S}_A = \frac{1}{n} \sum_{k = 1}^{n} X_A^{(k)} \left(X_A^{(k)}\right)^T \,.
\end{equation}
Recall from Algorithm~\ref{alg:ugmsregion}, we apply a UGMS algorithm $\overline{R}$ to estimate edges in $H'_R$ defined in (\ref{eq:haprime}). The graphical Lasso (gLasso) estimates $\widehat{E}_R$ by solving the following convex optimization problem:
\begin{align}
\widehat{\Theta} &= \arg \underset{ \Theta \succ 0, \Theta_{ij} = 0 \; \forall \; (i,j) \notin H^m\left[\overline{R}\right] }{\max} \left\{ 
\log \det (\Theta) - \text{trace}\left(\widehat{S}_{\overline{R}} \; \Theta \right) - \lambda \sum_{(i,j) \in H_R'} \Theta_{ij}
\right\} \label{eq:glasso} \\
\widehat{E}_R &= \{ (i,j) \in H_A' : \widehat{\Theta}_{ij} \ne 0\} \,.
\end{align}
The graph $H^m[\overline{R}]$ is the marginal graph over $\overline{R}$ (see Appendix~\ref{app:marginal}).
When $\overline{R} = V$, $H = K_V$, and $H_A' = K_V$, the above equations recover the standard gLasso estimator, which was first proposed in \citet{BanerjeeGhaoui2008}.  Equation~(\ref{eq:glasso}) can be solved using algorithms in \citet{YuanLin2007,BanerjeeGhaoui2008,scheinberg2010sparse,hsieh2011sparse}.  Theoretical properties of the estimates $\widehat{\Theta}$ and $\widehat{E}_R$ have been studied in \citet{Ravikumarcovariance2008}.  Note that the regularization parameter in (\ref{eq:glasso}) controls the sparsity of $\widehat{E}_R$.  A larger $\lambda$ corresponds to a sparser solution.  Further, we only regularize the terms in $\Theta_{ij}$ corresponding to the edges that need to be estimated, i.e., the edges in $H_R'$.  Finally, Equation~(\ref{eq:glasso}) also accounts for the edges $H$ by computing the marginal graph over $\overline{R}$.  In general, $H^m\left[\overline{R}\right]$ can be replaced by any graph that is superset of $H^m\left[\overline{R}\right]$.

\subsection{Neighborhood Selection (nLasso)}
Using the local Markov property of undirected graphical models (see Definition~\ref{def:ugm}), we know that if $P_X$ is Markov on $G^*$, then $P\left(X_i \, | \, X_{V \backslash i}\right) = P\left(X_i \,| \,X_{ne_{G^{*}}(i)} \right)$.  This motivates an algorithm for estimating the neighborhood of each node and then combining all these estimates to estimate $G^*$.  For Gaussian graphical models, this can be achieved by solving a Lasso problem \citep{Tibshirani1996} at each node \citep{NicolaiPeter2006}.  Recall that we are interested in estimating all edges in $H'_R$ by applying a UGMS algorithm to $\overline{R}$.  The neighborhood selection using Lasso (nLasso) algorithm is given as follows:
\begin{align}
H'' &= K_{\overline{R}} \backslash H^m\left[\overline{R}\right] \\
\widehat{\beta}^{k} &= 
\arg \min_{\beta_{i} = 0, i \in { ne_{H''}(k) \cup k \cup V \backslash A }} \left\{ \left\| \Xf^n_k - \Xf^n \beta \right\|_2^2 + \lambda \sum_{i \in ne_{H'_R}(k)} \left|\beta_i \right| \right\} \label{eq:nLasso1}\\
\widehat{ne}^{k} &= \left\{ i : \widehat{\beta}^{k}_i \ne 0 \right\} \label{eq:nLasso2}\\
\widehat{E}_R &= \bigcup_{k \in \overline{R}}  \left\{ (k,i) : i \in \widehat{ne}^{k} \right\} \,. \label{eq:nLasso3}
\end{align}
Notice that in the above algorithm if $i$ is estimated to be a neighbor of $j$, then we include the edge $(i,j)$ even if $j$ is not estimated to be a neighbor of $i$.  This is called the union rule for combining neighborhood estimates.  In our numerical simulations, we use the intersection rule to combine neighborhood estimates, i.e., $(i,j)$ is estimated only if $i$ is estimated to be a neighbor of $j$ and $j$ is estimated to be a neighbor of $i$.  Theoretical analysis of nLasso has been carried out in \citet{NicolaiPeter2006,Wainwright2009}.  Note that, when estimating the neighbors of a node $k$, we only penalize the neighbors in $H_R'$.  Further, we use prior knowledge about some of the edges by using the graph $H$ in (\ref{eq:nLasso1}).  References \citet{BreslerMosselSly2008,NetrapalliSanghaviAllerton2010,RavikumarWainwrightLafferty2010} extend the neighborhood selection based method to discrete valued graphical models.

\section{Proof of Proposition~\ref{prop:regionest}}
\label{sub:proofpropregioest}
We first prove the following result.
\begin{lemma}
\label{prop:pathsIn}
For any $(i,j) \in H_{R}'$, there either exists no non-direct path from $i$ to $j$ in $H$ or all non-direct paths in $H$ pass through a subset of $\overline{R}$.
\end{lemma}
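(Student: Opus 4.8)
The plan is to prove the contrapositive. Assume $(i,j) \in H_R'$ and suppose, for contradiction, that there is a non-direct path $\pi = \{i,s_1,\ldots,s_d,j\}$ in $H$ (so $d \ge 1$) all of whose internal vertices $s_1,\ldots,s_d$ lie outside $\overline{R}$; I will show this forces $\{i,j\} \subseteq S$ for some child $S \in ch(R)$, hence $(i,j) \in K_S$, contradicting the definition of $H_R'$ in \eqref{eq:haprime}. First I would record the structural fact that the clusters containing $R$, namely $\mathcal{C}_R = \{C \in \mathcal{C} : R \subseteq C\}$, form a nonempty connected subtree $T_R$ of $\mathcal{J}$. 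Nonemptiness holds because every region produced by Algorithm~\ref{alg:constructregion} is an intersection of clusters, so any cluster on the minimal subtree of $\mathcal{J}$ spanning that family contains $R$; connectedness follows from the running intersection property in Definition~\ref{def:junctiontree}, since a cluster on the tree-path between two members of $\mathcal{C}_R$ separates them and therefore contains their intersection, which includes $R$. Consequently $\overline{R} \supseteq U := \bigcup_{C \in T_R} C$ by \eqref{eq:rbar}, and $i,j \in R \subseteq U$.

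Next I would introduce the boundary separators of $T_R$, the sets $S_{pq} = C_p \cap C_q$ for junction-tree edges $(C_p,C_q)$ with $C_p \in T_R$ and $C_q \notin T_R$, and put $\partial = \bigcup S_{pq}$. Since $S_{pq} \subseteq C_p \subseteq U$ we have $\partial \subseteq U \subseteq \overline{R}$. The key separation claim is that no edge of $H$ joins $U \setminus \partial$ to $V \setminus U$: if $(a,b)$ were such an edge with $a \in U \setminus \partial$ and $b \notin U$, it would lie in a common cluster $C'$; since $b \notin U$ forces $C' \notin T_R$ while $a$ also lies in a cluster of $T_R$, the connected subtree of clusters containing $a$ must cross a boundary edge of $T_R$, placing $a$ in some $S_{pq} \subseteq \partial$, a contradiction. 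Thus deleting $\partial$ disconnects $U \setminus \partial$ from $V \setminus U$ in $H$, which is the graph-theoretic content of Proposition~\ref{prop:separation}.

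Then I would analyze $\pi$. Its internal vertices lie in $V \setminus \overline{R} \subseteq V \setminus U$. Removing $T_R$ from $\mathcal{J}$ leaves connected components (branches), each attached to $T_R$ through exactly one boundary separator, and branch vertices outside $U$ are separated from other branches by $\partial$. Because each $s_\ell \notin U$, the connected subtree of clusters containing $s_\ell$ lies inside a single branch, and consecutive path vertices share a cluster, so the whole internal portion $s_1,\ldots,s_d$ is confined to one branch, say the one attached through $S_k$. The edge $(i,s_1)$ then lies in a branch-$k$ cluster containing $i \in U$, which forces $i \in S_k$; symmetrically $j \in S_k$. Hence $\{i,j\} \subseteq R \cap S_k = R \cap C_q$ (using $R \subseteq C_p$, so $R \cap S_k = R \cap C_p \cap C_q = R \cap C_q$), and in particular $|R \cap C_q| \ge 2$.

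Finally I would show that $R \cap C_q$ is contained in a child of $R$, which closes the argument since then $\{i,j\} \subseteq S \in ch(R)$ gives $(i,j) \in K_S$ and $(i,j) \notin H_R'$. This last step is the main obstacle. When $R$ is a cluster it is immediate: $R \cap C_q = C_p \cap C_q = S_k$ is a separator contained in $R$, hence a child of $R$ by Algorithm~\ref{alg:constructregion}. For a region $R$ at level $\ell \ge 2$ I would proceed by induction on $\ell$, using that the level-$(\ell{+}1)$ regions are precisely the pairwise intersections of cardinality at least two of level-$\ell$ regions: I would exhibit a level-$\ell$ region $R'$ with $R \cap C_q \subseteq R \cap R' \subsetneq R$, so that $R \cap R'$ (or a region containing it) is a member of $ch(R)$. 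The delicate part is making this selection canonical, i.e. showing that the level-$2$ separator $S_k$ propagates down the intersection hierarchy so that $R \cap S_k$ is always captured by some child of $R$ rather than discarded; here the bound $|R \cap C_q| \ge 2$ is exactly what guarantees the relevant intersections survive the cardinality filter in Algorithm~\ref{alg:constructregion}. I expect this bookkeeping of the construction to be the only genuinely technical point of the proof.
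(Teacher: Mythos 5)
Your proof takes essentially the same route as the paper's. The paper also reduces everything to the case where the region is a cluster: it notes, as you do, that the clusters containing $R$ form a connected subtree by the running intersection property, then merges that subtree into a single cluster $\overline{R}$ of a modified junction tree ${\cal J}'$ and asserts that the cluster-case argument ``can be repeated.'' Your subtree $T_R$ with boundary separators $\partial$ is the unmerged form of exactly this reduction, and your execution of the cluster case---confining the internal vertices of the offending path to a single branch and then using the running intersection property to force $i,j \in S_k$---is in fact more careful than the paper's version, which does not explicitly rule out the degenerate possibility that only one of $i,j$ lies in the relevant separator.

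The step you flag as the main obstacle---showing that $\{i,j\} \subseteq R \cap C_q$ forces $\{i,j\}$ into some member of $ch(R)$ when $R \in {\cal R}^{\ell}$ with $\ell \ge 2$---is precisely the step the paper buries in ``the arguments used above can be repeated to prove the claim,'' so you have not missed anything the paper actually supplies, and the induction you sketch does close it. Concretely, prove by induction on $\ell$: if $R \in {\cal R}^{\ell}$ and $S \in {\cal R}^2$ is a separator with $|R \cap S| \ge 2$ and $R \not\subseteq S$, then $R \cap S \in {\cal R}^{\ell+1}$, hence $R \cap S \in ch(R)$ since $R \cap S \subseteq R$. The base case $\ell = 2$ is step 3 of Algorithm~\ref{alg:constructregion}. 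For $\ell \ge 3$ write $R = R_1 \cap R_2$ with $R_1 \ne R_2$ in ${\cal R}^{\ell-1}$; then $R_1 \not\subseteq S$ (otherwise $R \cap S = R_1 \cap R_2 \cap S = R$), so the inductive hypothesis gives $R_1 \cap S \in {\cal R}^{\ell}$; moreover $R_1 \cap S \ne R$ (otherwise $R \subseteq S$), and $R \cap S = R \cap (R_1 \cap S)$ has cardinality at least two, so step 4 of Algorithm~\ref{alg:constructregion} places it in ${\cal R}^{\ell+1}$, and step 5 makes it a child of $R$. In your setting $S = S_k = C_p \cap C_q$, and $R \subseteq S_k$ is impossible because it would put $C_q$ in $T_R$; so the induction applies, the child $R \cap C_q$ contains $\{i,j\}$, and $(i,j) \notin H_R'$ by \eqref{eq:haprime}, the desired contradiction. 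With this paragraph added, your proof is complete and somewhat more self-contained than the paper's own.
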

\begin{proof}
We first show the result for $R \in {\cal R}^1$.  This means that $R$ is one of the clusters in the junction tree used to construct the region graph and $ch(R)$ is the set of all separators of cardinality greater than one connected to the cluster $R$ in the junction tree.  Subsequently, $\overline{R} = R$.  If $ch(R) = \emptyset$, the claim trivially holds.  Let $ch(R) \ne \emptyset$ and
suppose there exists a non-direct path from $i$ to $j$ that passes through a set of vertices $\bar{S}$ not in $\overline{R}$.
Then, there will exist a separator $S$ in the junction tree such that $S$ separates $\{i,j\}$ and $\bar{S}$.  Thus, all paths in $H$ from $i$ and $j$ to $\bar{S}$ pass through $S$.  This implies that either there is no non-direct path from $i$ to $j$ in $H$ or else we have reached a contradiction about the existence of a non-direct path from $i$ to $j$ that passes through the set $\bar{S}$ not in $\overline{R}$.

Now, suppose $R \in {\cal R}^l$ for $l > 1$.  The set $an(R)$ contains all the clusters in the junction tree than contain $R$.  From the running intersection property of junction trees, all these clusters must form a subtree in the original junction tree.  Merge $\overline{R}$ into one cluster and find a new junction tree ${\cal J}'$ by keeping the rest of the clusters the same.  It is clear $\overline{R}$ will be in the first row of the updated region graph.  The arguments used above can be repeated to prove the claim.
\end{proof}
\noindent
We now prove Proposition~\ref{prop:regionest}.

\smallskip

Case 1: Let $(i,j) \in H'_R$ and $(i,j) \notin G^*$.  If there exists no non-direct path from $i$ to $j$ in $H$, then the edge $(i,j)$ can be estimated by solving a UGMS problem over $i$ and $j$.  By the definition of $\overline{R}$, $i,j \in \overline{R}$.  Suppose there does exist non-direct paths from $i$ to $j$ in $H$.  From Lemma~\ref{prop:pathsIn}, all such paths pass through $\overline{R}$. Thus, the conditional independence of $X_i$ and $X_j$ can be determined from $X_{\overline{R} \backslash \{i,j\}}$.

\smallskip

Case 2: Let $(i,j) \in H'_R$ and $(i,j) \in G^*$.  From Lemma~\ref{prop:pathsIn} and using the fact that $E(G^*) \subseteq E(H)$, we know that all paths from $i$ to $j$ pass through $\overline{R}$.
This means that if $X_i \not\ind X_j | X_{\overline{R} \backslash \{i,j\}}$, then $X_i \not\ind X_j | X_{V \backslash \{i,j\}}$.

\section{Analysis of the PC-Algorithm in Algorithm~\ref{alg:fPC}}
\label{app:GeneralResult}

In this section, we present the analysis of Algorithm~\ref{alg:fPC} using results from \citet{AnimaTanWillsky2011b} and \citet{KalischBuhlmann2007}.  The analysis presented here is for the non-junction tree based algorithm.  Throughout this Section, assume
\[ \widehat{G} = \fPC(\eta,\Xf^n,K_V,K_V) \,,\]
where $K_V$ is the complete graph over the vertices $V$.  Further, let the threshold for the conditional independence test in (\ref{eq:cit}) be $\lambda_n$. We are interested in finding conditions under which $\widehat{G} = G^*$ with high probability.
\begin{theorem}
\label{thm:suffcondgeneral}
Under Assumptions (A1)-(A5), there exists a conditional independence test such that if
\begin{align}
n = \Omega( \rho_{min}^{-2} \eta \log (p)) \; \text{or} \;
\rho_{min} = \Omega( \sqrt{\eta \log (p) / n}), \label{eq:rhopc}
\end{align}
then $P(\widehat{G} \ne G) \rightarrow 0$ as $n \rightarrow \infty$.
\end{theorem}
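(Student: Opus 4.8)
The plan is to reduce the event $\{\widehat{G} \ne G^*\}$ to a union of ``bad'' events for the individual conditional independence tests executed in Line~5 of Algorithm~\ref{alg:fPC}, and then to show each test succeeds with high probability by controlling the deviation of the empirical conditional correlation coefficient $\widehat{\rho}_{ij|S}$ from its population value $\rho_{ij|S}$. Since $H = K_V$ here, the set ${\cal S}_{ij}$ in Line~4 is all of $V \backslash \{i,j\}$, so the algorithm tests every pair $(i,j)$ against every candidate separator $S$ with $|S| \le \eta$; there are $O(p^{\eta+2})$ such tests. I would fix a threshold $\lambda_n$ strictly between $0$ and $\rho_{min}$ (say $\lambda_n = \rho_{min}/2$) and identify the two ways a test can err: (i) a true non-edge $(i,j) \notin E(G^*)$ is retained because $|\widehat{\rho}_{ij|S}| \ge \lambda_n$ for its true separator $S$, which exists with $|S| \le \eta$ by (A2)--(A3) and satisfies $\rho_{ij|S} = 0$; and (ii) a true edge $(i,j) \in E(G^*)$ is deleted because $|\widehat{\rho}_{ij|S}| < \lambda_n$ for some $S$, even though $|\rho_{ij|S}| \ge \rho_{min}$ for all admissible $S$ by faithfulness and the definition of $\rho_{min}$.

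The technical core is a uniform concentration inequality for the empirical conditional correlation. Using (A1) together with the boundedness conditions in (A4) --- $|\rho_{ij|S}| \le M < 1$ and $\Sigma_{i,i|S} \le L$ --- I would invoke the Gaussian partial-correlation tail bound of \citet{KalischBuhlmann2007} (equivalently the concentration of empirical conditional covariances in \citet{AnimaTanWillsky2011b}), which gives, for a constant-order deviation $\gamma$,
\begin{equation}
P\left( |\widehat{\rho}_{ij|S} - \rho_{ij|S}| > \gamma \right) \le C_1 \exp\left( -C_2\,(n-\eta)\,\gamma^2 \right) \nonumber
\end{equation}
uniformly over $(i,j)$ and $|S| \le \eta$, where $C_1, C_2$ depend only on $M$ and $L$. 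A union bound over the $O(p^{\eta+2})$ tests then shows that, with $\gamma = \lambda_n/2 = \rho_{min}/4$ separating the two regimes, the probability that any test deviates by more than $\gamma$ is at most $C_1 \exp\big( (\eta+2)\log p - C_2 (n-\eta)\rho_{min}^2/16 \big)$. This tends to $0$ precisely when $(n-\eta)\rho_{min}^2 \gg \eta \log p$, i.e.\ when $n = \Omega(\rho_{min}^{-2}\eta\log p)$, which is the hypothesis (\ref{eq:rhopc}).

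Finally I would assemble the two pieces: on the complementary (high-probability) event that every $\widehat{\rho}_{ij|S}$ lies within $\gamma$ of $\rho_{ij|S}$, the threshold $\lambda_n = \rho_{min}/2$ classifies every test correctly --- non-edges are removed (their true separator gives $|\widehat{\rho}| < \gamma < \lambda_n$) and edges are preserved (every $|\rho_{ij|S}| \ge \rho_{min}$ forces $|\widehat{\rho}| > \rho_{min} - \gamma > \lambda_n$) --- so that $\widehat{G} = G^*$. The iteration over $k = 0,\ldots,\eta$ in Line~2 only requires the true separator to have cardinality at most $\eta$, which is guaranteed by (A3), so deleting edges at earlier stages never destroys a separator needed later.

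The main obstacle I anticipate is the uniform concentration step: the bound must hold simultaneously over the exponentially many ($O(p^\eta)$) conditioning sets while $\rho_{min}$, $\eta$, and $p$ are all allowed to vary with $n$ under (A5). Securing the correct effective sample size $(n-\eta)$ --- the loss of degrees of freedom from conditioning on up to $\eta$ variables --- and verifying that the constants $C_1, C_2$ can be taken free of $(i,j,S)$, which is exactly where (A4)'s bounds $M < 1$ and $\Sigma_{i,i|S} \le L$ enter, is the delicate part; the remainder is a union bound together with the thresholding argument above.
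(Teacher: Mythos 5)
Your proposal is correct and follows essentially the same route as the paper's proof in Appendix~D: the same decomposition into false-positive and false-negative test events, the same concentration inequality for $\widehat{\rho}_{ij|S}$ cited from \citet{KalischBuhlmann2007} and \citet{AnimaTanWillsky2011b}, and the same union bound over the $O(p^{\eta+2})$ triples $(i,j,S)$, with your concrete choice $\lambda_n = \rho_{min}/2$, $\gamma = \rho_{min}/4$ being one admissible instantiation of the paper's conditions $\lambda_n = O(\rho_{min})$ and $\lambda_n^2 = \Omega(\eta \log(p)/n)$. If anything, your treatment of the false-positive event (reducing retention of a non-edge to failure of the test at its \emph{true} separator, where the population coefficient vanishes) is slightly tighter than the paper's literal write-up, which union-bounds $P(|\widehat{\rho}_{ij|S}| > \lambda_n)$ over all $S$, including non-separating sets for which $\rho_{ij|S} \ne 0$.
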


We now prove Theorem~\ref{thm:suffcondgeneral}.  Define the set $B_{\eta}$ as follows:
\begin{equation}
B_{\eta} = \{(i,j,S): i, j \in V, i \ne j, S \subseteq V \backslash \{i,j\}, |S| \le \eta \} \,.
\end{equation}
The following concentration inequality follows from \citet{AnimaTanWillsky2011b}.
\begin{lemma}
Under Assumption~(A4), there exists constants $c_1$ and $c_2$ such that for $\epsilon < M$,
\begin{equation}
\sup_{(i,j,S) \in B_{\eta}}P\left(||{\rho}_{ij|S}| - |\widehat{\rho}_{ij|S}|| > \xi\right) \le c_1 \exp \left( -c_2 (n - \eta) \xi^2 \right) \,,
\label{eq:conceq}
\end{equation}
where $n$ is the number of vector valued measurements made of $X_i, X_j$, and $X_S$.
\end{lemma}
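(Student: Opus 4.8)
The plan is to bound the signed deviation $P\bigl(|\rho_{ij|S} - \widehat{\rho}_{ij|S}| > \xi\bigr)$, which suffices for the stated claim: the reverse triangle inequality gives $\bigl||\rho_{ij|S}| - |\widehat{\rho}_{ij|S}|\bigr| \le |\rho_{ij|S} - \widehat{\rho}_{ij|S}|$, so any exponential tail bound on the signed difference immediately transfers to the difference of absolute values. The whole problem thus reduces to a concentration statement for a single sample partial correlation that is uniform over the conditioning triple $(i,j,S) \in B_{\eta}$, after which the $\sup$ in the lemma is simply the worst case.

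First I would invoke the classical distributional fact for Gaussian vectors (A1): when $P_X$ is multivariate normal, the sample partial correlation $\widehat{\rho}_{ij|S}$ formed from $n$ i.i.d.\ observations while conditioning on the $|S|$ variables $X_S$ has the same distribution as an \emph{ordinary} sample correlation coefficient based on $n - |S|$ effective observations. This is precisely where the reduction from $n$ to $n - |S| \ge n - \eta$ originates and explains the $(n-\eta)$ factor in the exponent. Next I would apply Fisher's $z$-transform $g(r) = \tfrac12\log\frac{1+r}{1-r}$, under which the transformed sample correlation $g(\widehat{\rho}_{ij|S})$ is, up to lower-order terms, normal with mean $g(\rho_{ij|S})$ and variance of order $1/(n-|S|)$, and crucially has sub-Gaussian tails: there exist constants $a_1, a_2$ with $P\bigl(|g(\widehat{\rho}_{ij|S}) - g(\rho_{ij|S})| > t\bigr) \le a_1 \exp\bigl(-a_2 (n-|S|) t^2\bigr)$.

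Because $g'(r) = 1/(1-r^2)$, the map $g$ is bi-Lipschitz on any interval $[-M,M]$ with constants depending only on $M$, and Assumption~(A4) guarantees $|\rho_{ij|S}| \le M < 1$ for every $(i,j,S) \in B_{\eta}$, placing the population value safely inside this interval. Transferring the $z$-scale tail back to the $\rho$-scale through these Lipschitz constants, and restricting $\xi$ below a threshold depending on $M$ so that the event keeps $\widehat{\rho}_{ij|S}$ away from $\pm 1$ (the small-probability escape event being absorbed into the same exponential), yields $P\bigl(|\rho_{ij|S} - \widehat{\rho}_{ij|S}| > \xi\bigr) \le c_1 \exp\bigl(-c_2 (n-|S|)\xi^2\bigr) \le c_1 \exp\bigl(-c_2 (n-\eta)\xi^2\bigr)$.

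The main obstacle, and the step that the whole argument turns on, is the \emph{uniformity} of $c_1, c_2$ over the exponentially many conditioning sets $S$ with $|S| \le \eta$, together with the boundary behavior of $g$ near $\pm 1$. This is exactly what Assumption~(A4) is for: the bound $M < 1$ on the conditional correlations keeps every population value uniformly away from the singularities of $g$, and the bound $L$ on the conditional variances (with the implied control of $\Sigma_{S,S}^{-1}$) keeps the rational map from empirical covariances to $\widehat{\rho}_{ij|S}$ uniformly smooth, so the constants never depend on which $S$ is chosen. An equivalent alternative route, which I would mention, is to concentrate each empirical covariance entry $\widehat{\Sigma}_{ab}$ using sub-exponential Gaussian-product tails and then propagate through the rational expression defining $\rho_{ij|S}$; this makes the uniform smoothness explicit but requires more bookkeeping on the eigenvalues of $\Sigma_{S,S}$, which is why the Fisher-transform route is cleaner here.
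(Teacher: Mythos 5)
Your outline is sound, but it is worth knowing that the paper itself never proves this lemma: it is imported verbatim, with the sentence ``the following concentration inequality follows from'' Anandkumar et al.\ (2012b), and that source establishes it by the route you relegate to your final sentence --- concentration of each empirical covariance entry $\widehat{\Sigma}_{ab}$ (sub-exponential Gaussian-product tails), then propagation through the rational map $(\Sigma_{ij},\Sigma_{i,S},\Sigma_{S,S},\Sigma_{S,j}) \mapsto \rho_{ij|S}$, where the cardinality bound $|S|\le\eta$ and the variance bound $L$ in (A4) control the conditioning of $\Sigma_{S,S}$ uniformly over the exponentially many triples in $B_\eta$. That is precisely why (A4) carries the constant $L$ and why the exponent degrades only from $n$ to $n-\eta$. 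Your primary route --- the Gaussian reduction of a partial correlation on $|S|$ conditioning variables to an ordinary correlation on $n-|S|$ effective samples, followed by Fisher's $z$-transform --- is the Kalisch--B\"uhlmann (2007) argument, which the paper also cites elsewhere, so it is a legitimate alternative. The one caveat you should flag: the classical tail bound for the $z$-transformed sample correlation (Hotelling) carries a polynomial prefactor of order $(n-\eta)$, i.e.\ it has the form $c_1(n-\eta)\exp\bigl(-c_2(n-\eta)\xi^2\bigr)$, and this prefactor cannot be absorbed into the exponential uniformly in $\xi$; it disappears only in the regime $(n-\eta)\xi^2 \gtrsim \log(n-\eta)$, which happens to be the regime in which the paper applies the lemma ($\xi \asymp \sqrt{\eta\log p/n}$), but as a standalone statement your route proves something slightly weaker than the constant-prefactor bound claimed. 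The covariance-propagation route delivers the stated form directly, which is presumably why the authors cite Anandkumar et al.\ rather than Kalisch--B\"uhlmann for this particular inequality. Also, your remark that the reverse triangle inequality reduces the problem to the signed deviation is exactly right and costs nothing.
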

Let $P_e = P( \widehat{G} \ne G)$, where the probability measure $P$ is with respect to $P_X$.  Recall that we threshold the empirical conditional partial correlation $\widehat{\rho}_{ij|S}$ to test for conditional independence, i.e., $\widehat{\rho}_{ij|S} \le \lambda_n \Longrightarrow X_i \ind X_j | X_S$.  An error may occur if there exists two distinct vertices $i$ and $j$ such that either $\rho_{ij|S} = 0$ and $|\widehat{\rho}_{ij|S}| > \lambda_n$ or $|\rho_{ij|S}| > 0$ and $|\widehat{\rho}_{ij|S}| \le \lambda_n$.  Thus, we have
\begin{align}
P_e &\le P({\cal E}_1) + P({\cal E}_2) \,, \\
P({\cal E}_1) &= P\left( \bigcup_{(i,j) \notin G} \{ \text{$\exists$ $S$ s.t. $|\widehat{\rho}_{ij|S}| > \lambda_n$} \} \right) \\
P({\cal E}_2) &= P\left( \bigcup_{(i,j) \in G} \{ \text{$\exists$ $S$ s.t. $|\widehat{\rho}_{ij|S}| \le \lambda_n$} \} \right) \,.
\end{align}
We will find conditions under which $P({\cal E}_1) \rightarrow 0$ and $P({\cal E}_2) \rightarrow 0$ which will imply that $P_e \rightarrow 0$.  The term $P({\cal E}_1)$, the probability of including an edge in $\widehat{G}$ that does not belong to the true graph, can be upper bounded as follows:
\begin{align}
P({\cal E}_1) &\le P\left( \bigcup_{(i,j) \notin G} \{ \text{$\exists$ $S$ s.t. $|\widehat{\rho}_{ij|S}| > \lambda_n$} \} \right)
\le P\left( \bigcup_{(i,j) \notin G, S \subset V \backslash \{i,j\}} \{ \text{$|\widehat{\rho}_{ij|S}| > \lambda_n$} \} \right) \\
&\le p^{\eta+2} \sup_{(i,j,S) \in B_{\eta}} P\left(|\widehat{\rho}_{ij|S}| > \lambda_n\right) \\
&\le c_1 p^{\eta+2} \exp\left(-c_2 (n-\eta)\lambda_n^2 \right)
= c_1 \exp\left( (\eta+2) \log(p) -c_2 (n-\eta)\lambda_n^2 \right)
\end{align}
The terms $p^{\eta+2}$ comes from the fact that there are at most $p^2$ number of edges and the algorithm searches over at most $p^{\eta}$ number of separators for each edge.  Choosing $\lambda_n$ such that
\begin{equation}
\lim_{n,p \rightarrow \infty} \frac{(n-\eta)\lambda_n^2}{(\eta+2)\log(p)} = \infty
\label{eq:xic1}
\end{equation}
ensures that $P({\cal E}_1) \rightarrow 0$ as $n,p \rightarrow \infty$.  Further, choose $\lambda_n$ such that for $c_3 < 1$
\begin{align}
\lambda_n < c_3 \rho_{min} \label{eq:xic2} \,.
\end{align}
The term $P({\cal E}_2)$, the probability of not including an edge in $\widehat{G}$ that does belong to the true graph, can be upper bounded as follows:
\begin{align}
P({\cal E}_2) &\le P\left( \bigcup_{(i,j) \in G} \{ \text{$\exists$ $S$ s.t. $|\widehat{\rho}_{ij|S}| \le \lambda_n$} \} \right) \\
&\le P\left( \bigcup_{(i,j) \in G, S \subset V \backslash \{i,j\}}  \text{$|\rho_{ij|S}|-|\widehat{\rho}_{ij|S}| > |\rho_{ij|S}| - \lambda_n$} \right) \\
&\le p^{\eta+2} \sup_{(i,j,S) \in B_{\eta}} P\left(|\rho_{ij|S}|-|\widehat{\rho}_{ij|S}| > |\rho_{ij|S}| - \lambda_n\right) \\
&\le p^{\eta+2} \sup_{(i,j,S) \in B_{\eta}} P\left(||\rho_{ij|S}|-|\widehat{\rho}_{ij|S}|| > \rho_{min} - \lambda_n\right) \\
&\le c_1 p^{\eta+2} \exp\left(-c_2 (n-\eta)(\rho_{min} - \lambda_n)^2 \right)
= c_1\exp\left( (\eta+2) \log(p) - c_4(n-\eta) \rho_{min}^2 \right) \label{eq:a5}
\,.
\end{align}
To get (\ref{eq:a5}), we use (\ref{eq:xic2}) so that $(\rho_{min}-\lambda_n) > (1-c_3) \rho_{min}$.  For some constant $c_5 > 0$, suppose that for all $n > n'$ and $p>p'$,
\begin{equation}
c_4(n-\eta)\rho_{min}^2 > (\eta+2+c_5) \log(p) \,. \label{eq:nn1}
\end{equation}
Given (\ref{eq:nn1}), $P({\cal E}_2) \rightarrow 0$ as $n,p \rightarrow \infty$.  In asymptotic notation, we can write (\ref{eq:nn1}) as
\begin{equation}
n = \Omega(\rho_{min}^{-2} \eta \log(p))
\end{equation}
which proves the Theorem.  The conditional independence test is such that $\lambda_n$ is chosen to satisfy (\ref{eq:xic1}) and (\ref{eq:xic2}).  In asymptotic notation, we can show that $\lambda_n = O(\rho_{min})$ and $\lambda_n^2 = \Omega\left( \eta \log(p)/n \right)$ satisfies (\ref{eq:xic1}) and (\ref{eq:xic2}).

\section{Proof of Theorem~\ref{thm:mainResult1}}
\label{app:mainResult1}

To prove the theorem, it is sufficient to establish that
\begin{align}
\rho_0 &= \Omega\left( \sqrt{\eta_T \log(p)/n} \right)  \label{eq:rho0} \\
\rho_1 &= \Omega\left( \sqrt{\eta \log(p_1)/n} \right) \label{eq:rrho1} \\ 
\rho_2 &= \Omega\left( \sqrt{\eta  \log(p_2)/n} \right) \label{eq:rrho2} \\ 
\rho_T &= \Omega\left( \sqrt{\eta  \log(p_T)/n} \right) \label{eq:rho_T} \,.
\end{align}

Let $H$ be the graph estimated in Step~1.  An error occurs if for an edge $(i,j) \in G^*$ there exists a subset of vertices $S$ such that $|S| \le \eta_T$ and $|\widehat{\rho}_{ij|S}| \le \lambda_n^0$.  Using the proof of Theorem~\ref{thm:suffcondgeneral} (see analysis of $P({\cal E}_2)$), it is easy to see that $n = \Omega(\rho_0^{-2} \eta_T  \log(p))$ is sufficient for $P(E(G^*) \not\subset E(H)) \rightarrow 0$ as $n \rightarrow 0$.  Further, the threshold is chosen such that $\lambda_n^0 = O(\rho_0)$ and $(\lambda_n^0)^2 = \Omega\left( \eta_T \log(p)/n \right)$.  This proves (\ref{eq:rho0}).

In Step~2, we estimate the graphs $\widehat{G}_1$ and $\widehat{G}_2$ by applying the PC-Algorithm to the vertices $V_1 \cup T$ and $V_2 \cup T$, respectively.  For $\widehat{G}_1$, given that all edges that have a separator of size $\eta_T$ have been removed, we can again use the analysis in the proof of Theorem~\ref{thm:suffcondgeneral} to show that for $\lambda_n^1 = O(\rho_1)$ and $(\lambda_n^1)^2 = \Omega\left( \eta \log(p_1)/n \right)$, $n = \Omega(\rho_1^{-2} \eta \log(p_1))$ is sufficient for $P(\widehat{G}_1 \ne {G}^*[V_1 \cup T] \backslash K_{T}) | G^* \subset H) \rightarrow 0$ as $n \rightarrow \infty$.  This proves (\ref{eq:rrho1}).  Using similar analysis, we can prove (\ref{eq:rrho2}) and (\ref{eq:rho_T}).

The probability of error can be written as
\begin{align}
P_e &\le P(G^* \not\subset H) + \sum_{k=1}^{2}P( \widehat{G}_k \ne {G}^* [V_k \cup T] \backslash K_{T} | G^* \subset H) \nonumber \\
&+ P(\widehat{G}_T \ne G^*[T] | G^* \subset H,\widehat{G} = {G}[V_1 \cup T]^* \backslash K_T,G^*[V_2 \cup T] = G[V_2 \cup T] \backslash K_T) \,.
\end{align}
Given (\ref{eq:rho0})-(\ref{eq:rho_T}), each term on the right goes to $0$ as $n \rightarrow \infty$, so $P_e \rightarrow 0$ as $n \rightarrow \infty$.

\vskip 0.2in

\end{document}